\renewcommand{\cite}[1]{\citep{#1}}
\definecolor{Gray}{gray}{0.9}
\definecolor{midgreen}{rgb}{0.1,0.5,0.1}
\definecolor{darkgray}{gray}{0.25}
\definecolor{lightblue}{rgb}{0.25,0.25,0.8}
\definecolor{mydarkblue}{rgb}{0,0.08,0.45}
\newtheorem{theorem}{Theorem}
\newtheorem{lemma}{Lemma}
\theoremstyle{definition}
\newtheorem{defn}{Definition}
\newcommand{\NTK}{\mathrm{NTK}}
\newcommand{\NNGP}{\mathrm{NNGP}}
\newcommand{\CS}{{\sc CountSketch} }
\newcommand{\TS}{{\sc TensorSketch} }
\newcommand{\CIFARTEN}{$\mathtt{CIFAR}$-10}
\newcommand{\CIFARHUN}{$\mathtt{CIFAR}$-100}
\newcommand{\VOC}{$\mathtt{VOC07}$}
\newcommand{\CUB}{$\mathtt{CUB}$-200}
\newcommand{\CALTECH}{$\mathtt{Caltech}$-101}
\newcommand{\FLOWER}{$\mathtt{Flower}$-102}
\newcommand{\FOOD}{$\mathtt{Food}$-101}
\newcommand{\DOG}{$\mathtt{Dog}$-120}
\newcommand{\MSD}{$\mathtt{MillionSongs}$}
\newcommand{\Protein}{$\mathtt{Protein}$}
\newcommand{\SuperConduct}{$\mathtt{SuperConduct}$}
\newcommand{\WorkLoads}{$\mathtt{WorkLoads}$}
\newcommand{\norm}[1]{\ensuremath{\left\| #1 \right\|}}
\newcommand{\inner}[1]{\left \langle {#1} \right \rangle}
\newcommand{\bigo}{\mathcal{O}}
\newcommand{\abs}[1]{\left |#1\right|}
\newcommand{\FFT}{\mathrm{FFT}}
\newcommand{\iFFT}{\mathrm{FFT}^{-1}}
\def\tr{\mathtt{tr}}
\def\0{{\bm 0}}
\def\s{{\bm s}}
\def\v{{\bm v}}
\def\w{{\bm w}}
\def\x{{\bm x}}
\def\y{{\bm y}}
\def\z{{\bm z}}
\def\A{{\bm A}}
\def\B{{\bm B}}
\def\C{{\bm C}}
\def\I{{\bm I}}
\def\K{{\bm K}}
\def\S{{\bm S}}
\def\W{{\bm W}}
\def\X{{\bm X}}
\def\Y{{\bm Y}}
\def\Z{{\bm Z}}
\def\BSigma{\boldsymbol{\Sigma}}
\def\BLambda{\boldsymbol{\Lambda}}
\def\BGamma{\boldsymbol{\Gamma}}
\def\BPhi{\boldsymbol{\Phi}}
\def\BPsi{\boldsymbol{\Psi}}
\def\Btheta{\boldsymbol{\theta}}
\def\E{{\mathbb{E}}}
\def\R{{\mathbb{R}}}
\title{Random Features for the Neural Tangent Kernel}
\newcommand{\email}[1]{Email: \href{mailto:#1}{\color{black} \texttt{#1}}}
\def\kaist{Korea Advanced Institute of Science and Technology}
\def\tauniv{Tel Aviv University}
\author{%
    Insu Han~\thanks{School of Electrical Engineering, \kaist. \email{insu.han@kaist.ac.kr}} \and
    Haim Avron~\thanks{Department of Applied Mathematics, \tauniv. \email{haimav@tauex.tau.ac.il}} \and
    Neta Shoham~\thanks{Department of Applied Mathematics, \tauniv. \email{shohamne@gmail.com}} \and
    Chaewon Kim~\thanks{Graduate School of AI, \kaist. \email{chaewonk@kaist.ac.kr}} \and
    Jinwoo Shin~\thanks{Graduate School of AI, \kaist. \email{jinwoos@kaist.ac.kr}}
}
\begin{document}

\maketitle

\begin{abstract}
 The Neural Tangent Kernel (NTK) has discovered connections between deep neural networks and kernel methods with insights of optimization and generalization. Motivated by this, recent works report that NTK can achieve better performances compared to training neural networks on small-scale datasets. However, results under large-scale settings are hardly studied due to the computational limitation of kernel methods. In this work, we propose an efficient feature map construction of the NTK of fully-connected ReLU network which enables us to apply it to large-scale datasets. We combine random features of the arc-cosine kernels with a sketching-based algorithm which can run in linear with respect to both the number of data points and input dimension. We show that dimension of the resulting features is much smaller than other baseline feature map constructions to achieve comparable error bounds both in theory and practice. We additionally utilize the leverage score based sampling for improved bounds of arc-cosine random features and prove a spectral approximation guarantee of the proposed feature map to the NTK matrix of two-layer neural network. We benchmark a variety of machine learning tasks to demonstrate the superiority of the proposed scheme. In particular, our algorithm can run tens of magnitude faster than the exact kernel methods for large-scale settings without performance loss.
\end{abstract}

\section{Introduction} \label{sec:intro}
Recent literature have shown that trained overparameterized Deep Neural Networks (DNNs), i.e., neural networks with substantially more parameters than training data points,  generalize surprisingly well. 
In an effort to understand this phenomena, recently researchers have studied the infinite width limit of DNNs (i.e., the number of neurons in each hidden layer goes to infinity) and has shown that in that limit, deep learning is equivalent to kernel regression where the kernel is the so-called Neural Tangent Kernel (NTK) of the network~\cite{arora2019exact, chizat2019lazy, jacot2018neural, lee2020generalized}. This connection has been used to shed light on the ability of DNNs to generalize~\cite{cao2019generalization,neyshabur2014search} and the ability to optimize (train) their parameters efficiently~\cite{allen2019convergence,arora2019fine,du2018gradient}.

Beyond the aforementioned theoretical purpose, several papers have explored the algorithmic use of the NTK. ~\citet{arora2019harnessing} and ~\citet{geifman2020similarity} showed that NTK based kernel models can sometimes perform better than trained neural networks. The NTK has also been used in experimental design for neural networks~\cite{shoham2020experimental} and predicting training time~\cite{zancato2020predicting}.


Although the NTK of a given network can sometimes be expressed as closed-form formulas~\cite{arora2019exact, novak2020neural}, actually using it to learn kernel models encounters the computational bottlenecks of kernel learning, e.g., $\bigo(n^3)$ time and $\bigo(n^2)$ space complexity for kernel ridge regression with $n$ data points. With the NTK, the situation is much worse since the cost required to compute the kernel matrix can be huge~\cite{novak2020neural}. This makes exact kernel learning with NTK infeasible under large-scale setups.  



There is a rich literature on using kernel approximations in order to enable large-scale learning. One of the most popular approaches is the {\em random features} approach, originally due to~\citet{rahimi2007random}. Following the seminal work, 
many random feature constructions have been suggested for a variety of kernels, e.g., arc-cosine kernels~\cite{cho2009kernel}, polynomial kernels~\cite{pham2013fast,pennington2015spherical}, general dot product kernels~\cite{han2020polynomial}, just to name a few.  These low-dimensional features enable us to apply them to fast linear methods, saving time and space complexity drastically.
Furthermore, their performances are similar or sometimes better than the exact kernel methods due to implicit regularization effect~\cite{rahimi2007random,rudi2016generalization,jacot2020implicit}.

In this paper, we propose an efficient random features construction for the NTK of fully-connected neural networks with ReLU activations. Our starting point is the explicit feature map for the NTK suggested by \citet{bietti2019inductive}. That feature map uses known explicit feature maps for the arc-cosine kernel. By replacing the explicit feature map of the arc-cosine kernel with a random feature map for the same kernel~\cite{cho2009kernel}, 
we obtain a random feature map for the NTK. 
However, the size of that feature map for that construction can be even larger than the number of input data points $n$.
The underlying reason is from the tensor products between features generated in consecutive layers. 
To avoid the issue, we utilize an efficient sketching algorithm known as \TS transform~\cite{pham2013fast,ahle2020oblivious,woodruff2020near} which can effectively approximate the tensor products of vectors while preserving their inner products. 
We provide a rigorous error analysis of the proposed scheme. 
The resulting random features have smaller dimension than the previous NTK feature map constructions. 

Furthermore, in order to approximate the NTK with less features, we improve the underlying existing random feature map of the arc-cosine kernel.  Our construction is based on a modified leverage score based sampling. Recent literature has shown that random features that use leverage score sampling entertain better convergence bounds~\cite{avron2017random, lee2020generalized}. However, computing the exact leverage scores requires the inversion of a $n$-by-$n$ matrix which is equivalent to the cost for solving the kernel method exactly. 
Luckily, \citet{avron2017random, lee2020generalized} showed that sampling from the upper bound of leverage score is enough to provide tight error bounds. Motivated by these results, we propose simple and closed-form upper bounds of leverage scores regarding to arc-cosine kernels. For further efficiency, we make use of Gibbs sampling to generate random features from the proposed modified distribution.

To theoretically justify our construction, we provide a spectral approximation guarantee for the proposed random features for two-layer neural network.
Recent literature has advocated the use of such spectral bounds as a more general metric to measure kernel approximation quality as it pertains to many downstream tasks~\cite{avron2017random}. 

Finally, we empirically benchmark the proposed random feature methods under machine learning tasks including classification/regression under UCI datasets, and active learning under MNIST datasets. We demonstrate that our random features method can perform similar to or better than the kernel method with NTK. We further show that the random features approach can run up to 17 times much faster with tested large-scale datasets, without loss on performance. 


{\bf Related works. }
Many literature have studied a variety of NTK properties including optimization~\cite{allen2019convergence, du2018gradient}, generalization~\cite{cao2019generalization}, loss surface~\cite{mei2018mean} and so on. 
Recent works have more focused on NTK kernel itself.
\citet{geifman2020similarity, chen2020deep} have discovered that NTK are similar to the Laplace kernel in term of spectral information when data points are in hypersphere. 
\cite{fan2020spectra} studied eigenvalue distributions of NNGP and NTK and showed that they converge to deterministic distribution.
However, to the best of our knowledge, not many works focus on approximating the NTK.
\citet{arora2019exact} studied that gradient of randomly initialized network with finite widths can approximate the NTK in theory. However, they report that practical performances of random gradients are worse than that of the exact NTK by a large margin. 
Another line of work on NTK approximation is an explicit feature map construction via tensor product proposed by \citet{bietti2019inductive}. These explicit features can have infinite dimension in general. Hence, it is impossible to use their features in practice.
Even though one can use a finite-dimensional feature map, the computational gain of random features can be lost due to expensive tensor product operations.

\section{Preliminaries} \label{sec:preliminaries}
\paragraph{Notations.} 
We use $[n]:=\{1,\dots,n\}$.  We denote $\otimes$ by the tensor (a.k.a. Kronecker) product and $\odot$ by the element-wise (a.k.a. Hadamard) product of two matrices. For square matrices $\A$ and $\B$, we write $\A \preceq \B$ if $\B - \A$ is positive semi-definite. We write $[\A]_{i,j}$ as an entry of $\A$ in $i$-th row and $j$-th column. Similarly, $[\v]_i$ is used for an $i$-th entry of vector $\v$.
We also denote $\mathrm{ReLU}(x) = \max(x,0)$ and consider this element-wise operation when the input is a matrix.
Given a positive semidefinite matrix $\K$ and $\lambda >0$, the statistical dimension of $\K$ with $\lambda$ is defined as $s_{\lambda}(\K) := \mathtt{tr}(\K (\K + \lambda \I)^{-1})$.


\subsection{NTK of Fully-connected Deep Neural Networks}

Given an input $\x \in \R^d$,
consider a fully-connected ReLU network with input dimension $d$, 
hidden layer dimensions $d_1, \dots, d_L$ as
\begin{align} \label{eq:nn-definition}
f(\x, \Btheta) = {\boldsymbol h}_L^\top \w, \quad {\boldsymbol h}_0 = \x, \quad {\boldsymbol h}_\ell = \sqrt{\frac{2}{d_\ell}}~\mathrm{ReLU}\left( {\boldsymbol h}_{\ell-1}^\top \W_{\ell} \right), 
\end{align}
where $\Btheta:=(\W_1,\dots,\W_L, \w)$ for $\W_\ell \in \R^{d_{\ell-1} \times d_{\ell}}$, $\w \in \R^{d_L}$, $\ell \in [L]$ represents the trainable parameters and $d_0=d$.
The {\it Neural Tangent Kernel} (NTK) is defined as 
\begin{align} \label{def:ntk}
K_{\NTK}^{(L)}(\x, \x^\prime) = \E_{\Btheta}\left[ \inner{\frac{\partial f(\x,\Btheta)}{\partial \Btheta}, \frac{\partial f(\x^\prime, \Btheta)}{\partial \Btheta}}
\right]
\end{align}
where $\Btheta$ is from the standard Gaussian distribution. Given $n$ data points $\X = [\x_1, \dots, \x_n]^\top \in \R^{n \times d}$, we will write $f(\X, \Btheta):=[f(\x_1,\Btheta), \dots, f(\x_n, \Btheta)]^\top \in \R^n$ and the NTK matrix as $\K_{\NTK}^{(L)}\in \R^{n \times n}$ whose $(i,j)$-th entry is $K_{\NTK}^{(L)}(\x_i, \x_j)$ for $i,j \in [n]$.

The motivation for the definition of the NTK is as follows. Consider learning the network parameters $\Btheta$ by minimizing the squared loss $\frac{1}{2} \norm{f(\X, \Btheta) - \y}_2^2$ for the target $\y \in \R^n$ using gradient descent with infinitesimally
small learning rate. Regard the parameters as a time-evolving continuous variable $\Btheta_t$ for $t \geq 0$ that develops in the course of the optimization. Then, \citet{arora2019exact} showed that
\begin{align} \label{eq:gradient_flow}
\frac{d}{dt} f\left( \X, \theta_t \right) = - \K_t \cdot \left( f(\X, \theta_t) - \y \right)
\end{align}
where $\K_t := \frac{\partial f(\X,\Btheta_t)}{\partial \Btheta} \left(\frac{\partial f(\X, \Btheta_t)}{\partial \Btheta}\right)^\top \in \R^{n \times n}$. In the infinite width limit, i.e., $d_1, \dots, d_L \rightarrow \infty$, recent works analyzed that $\Btheta_t$ remains constant during optimization, i.e., equals to $\Btheta_0$~\cite{chizat2019lazy,allen2019convergence,du2018gradient} and $\K_t = \K_0$.
Furthermore, under a certain random initialization, in the same infinite width limit, $\K_0$ converges in probability to $\K_{\NTK}^{(L)}$.  
This implies equivalence between the prediction of neural network under the some initialization and kernel regression with NTK~\citep{arora2019exact}.

In addition, when the parameters of only last layer are updated, the network prediction corresponds to the kernel so-called the neural network Gaussian Process (NNGP):
\begin{align} \label{def:nngp}
K_{\NNGP}^{(L)}(\x,\x^\prime) = \E_{\Btheta}\left[  f(\x, \Btheta) \cdot f(\x^\prime, \Btheta) \right]
\end{align}
where the expectation is same as the NTK and we denote $[\K_{\NNGP}^{(L)}]_{i,j} := K_{\NNGP}^{(L)}(\x_i, \x_j)$.

{\bf NTK computation.} 
The NTK matrix of a fully-connected ReLU network can be computed by the following recursive relation~\cite{jacot2018neural,chizat2019lazy,arora2019exact}:
\begin{align}
\begin{aligned}
&\K_{\NTK}^{(0)} = \K_{\NNGP}^{(0)} = {\X\X^\top}, \\
&\K_{\NNGP}^{(\ell)} = F_1\left(\K_{\NNGP}^{(\ell-1)}\right), \\
&\K_{\NTK}^{(\ell)}  = \K_{\NNGP}^{(\ell)} + \K_{\NTK}^{(\ell-1)}\odot  F_0\left(\K_{\NNGP}^{(\ell-1)}\right),
\end{aligned}
\end{align}
where $F_0, F_1:\R^{n\times n} \rightarrow \R^{n \times n}$ are defined as
\begin{align*}
[F_0(\K)]_{i,j} &:= 1 - \frac{1}{\pi} \cos^{-1}\left( \frac{[\K]_{i,j}}{\sqrt{[\K]_{i,i} [\K]_{j,j}}}\right),\\
[F_1(\K)]_{i,j} &:= \sqrt{[\K]_{i,i}[\K]_{j,j}} \cdot f\left(\frac{[\K]_{i,j}}{\sqrt{[\K]_{i,i} [\K]_{j,j}}}\right)   
\end{align*}
where $f(x) = \frac1{\pi}\left( \sqrt{1-x^2} + (\pi- \cos^{-1}(x))x\right)$ for $x \in [-1,1]$ and $\K$ is an arbitrary positive semidefinite matrix. Note that these matrix functions are derived from arc-cosine kernels~\cite{cho2009kernel}:
\begin{align}\label{def:arccos_kernels}
\begin{aligned}
A_0(\x, \x^\prime) &:= 1 - \frac{1}{\pi} \cos^{-1}\left( \frac{\inner{\x, \x^\prime}}{\norm{\x}_2 \norm{\x^\prime}_2}\right), \\
A_1(\x, \x^\prime) &:= \norm{\x}_2 \norm{\x^\prime}_2 f\left( \frac{\inner{\x, \x^\prime}}{\norm{\x}_2 \norm{\x^\prime}_2} \right).
\end{aligned}
\end{align}
Computing NTK of the network with $L$ layers takes $\bigo(n^2 (d + L))$ time and $\bigo(n(n+d))$ space complexity. 


\subsection{Random Features and Spectral Approximation} \label{sec:modified_random_features}

Random features~\cite{rahimi2007random} is a methodology for scaling kernel methods that saves both time and storage. In most general terms, the random features model targets kernels $K: \R^d \times \R^d \rightarrow \R$ that can be written as $K(\x, \x^\prime) = \E_{\v \sim {p}}\left[ {\BPhi(\x, \v) \cdot \BPhi(\x^\prime, \v)}\right]$ for some distribution $p$ and a function $\BPhi: \R^{d} \times \R^d \rightarrow \R$. The random features approximations works as follows. First, we generate $m$ vectors $\v_1, \dots, \v_m \in \R^d$  sampled from $p$. We then define the feature map as \
\begin{align*}
\BPhi_m(\x) := \frac{1}{\sqrt{m}} \left[\BPhi(\x, \v_1), \dots, \BPhi(\x, \v_m)\right]^\top \in \R^{m}
\end{align*}
and the approximate kernel is $K^\prime(\x,\x^\prime)=\inner{\BPhi_m(\x), \BPhi_m(\x^\prime)}$.

The main utility of the random feature map is due to the fact that the kernel matrix $\K^\prime$ associated with $K^\prime$ is a low-rank matrix with a known factorization. In particular, let 
$\BPhi := \left[\BPhi_m(\x_1), \dots, \BPhi_m(\x_n) \right]^\top \in \R^{n \times m}$, then $\K^\prime = \BPhi \BPhi^\top \approx \K$. 
The rank of the approximate kernel matrix is $m$, which allows faster computation and less storage. The parameter $m$  trades between computational complexity and approximation quality. A small $m$ results in faster speedup but less accurate kernel approximation.

Although the random features can approximate well the kernel function itself, it is still questionable how it affects the performance of downstream tasks. Several works on kernel approximation adopt {\it spectral approximation} bound with regularization $\lambda > 0$, that is, 
\begin{align*}
(1-\varepsilon) \left(\K + \lambda \I\right)
\preceq
\BPhi \BPhi^\top + \lambda \I
\preceq
(1+\varepsilon)\left( \K + \lambda \I \right)
\end{align*}
for $\varepsilon > 0$ and show that it can provide rigorous guarantees of downstream applications including kernel ridge regression~\cite{avron2017random}, clustering and PCA~\cite{musco2017recursive}.

\vspace{-0.1in}
\subsection{\CS and \TS Transforms} \label{sec:sketch}

The \CS transform is a norm-preserving dimensionality reduction technique~\cite{charikar2002finding}. Formally, let $h:[d] \rightarrow [m]$ be a pairwise independent hash function whose bins are chosen uniformly at random and $s: [d] \rightarrow \{+1, -1\}$ be a pairwise independent sign function where signs are chosen uniformly at random. Given $\x \in \R^d$ and $m \in \mathbb{N}$, we define $\mathcal{C}:\R^d \rightarrow \R^m$ such that for $i \in [m]$ 
\begin{align}
[\mathcal{C}(\x)]_i = \sum_{j: h(j)=i} s(j) [\x]_j 
\end{align}
and it is well-known that $\E[\inner{\mathcal{C}(\x), \mathcal{C}(\y)}] = \inner{\x, \y}$.
Observe that it requires a single pass over the input, hence the running time becomes $\bigo(d)$.

\citet{pham2013fast} proposed an efficient algorithm to apply \CS to the tensor product of vectors and referred to this as {\sc TensorSketch}.
Let $h_1: [d_1] \rightarrow [m]$, $h_2: [d_2] \rightarrow [m]$ be pairwise independent random hash functions and $s_1:[d_1] \rightarrow \{-1,1\}$, $s_2:[d_2] \rightarrow \{-1,1\}$ be pairwise independent sign functions. We denote the corresponding {\sc CountSketch}es by $\mathcal{C}_1$ and $\mathcal{C}_2$, respectively. Now consider a new transform $\mathcal{C} : \R^{d_1 d_2} \rightarrow \R^m$ whose hash and sign functions are defined as
$H(j_1, j_2) \equiv h_1(j_1) + h_2(j_2) \pmod{m}$ and $S(j_1, j_2) = s_1(j_1) \cdot s_2(j_2)$ for $j_1 \in [d_1], j_2 \in [d_2]$.
Given $\x \in \R^{d_1}, \y \in \R^{d_2}$, \citet{pham2013fast} showed that $\mathcal{C}\left( \x \otimes \y \right)$ equals the convolution between $\mathcal{C}_1(\x)$ and $\mathcal{C}_2(\y)$ and its computation can be amortized as
\begin{align} \label{eq:tensorsketch}
\mathcal{C}\left( \x \otimes \y \right) = \iFFT\left( \FFT(\mathcal{C}_1(\x)) \odot \FFT(\mathcal{C}_2(\y)) \right)
    \end{align}
where $\FFT$ and $\FFT^{-1}$ are the Fast Fourier Transform and its inverse.
By the inner product preserving property, \TS also can be used as a low-rank approximation of element-wise product between two Gramian matrices. More specifically, given $\X \in \R^{n \times d_1}, \Y \in \R^{n \times d_2}$, it holds
\begin{align} \label{eq:inner_product_approximating_property}
\begin{aligned}
&\left(\X \X^\top \right) \odot  \left( \Y \Y^\top\right) 
= \left( \X \otimes \Y \right) \left( \X \otimes \Y \right)^\top
= \E\left[\mathcal{C} \left( \X \otimes \Y \right)  \mathcal{C}  \left( \X \otimes \Y \right)^\top \right] 
\end{aligned}
\end{align}
where $\otimes$ and $\mathcal{C}$ are performed in a row-wise manner.
Note that 
$\mathcal{C} \left( \X \otimes \Y \right) \in \R^{n \times m}$ can be computed in $\bigo(n(d_1 + d_2 + m \log m))$ time using  \cref{eq:tensorsketch}. This is much cheaper than that of computing $n$-by-$n$ dense matrix when $n \gg d_1, d_2, m$. 
A larger $m$ guarantees better approximation quality but also increases its running time. 
\citet{avron2017faster, ahle2020oblivious, woodruff2020near} analyzed a spectral approximation guarantee of \TS transform.

For simplicity, we presented \TS for vectors that are the tensor product of only two vectors. This is enough for our needs, though we mention that the \TS transform can be defined for an arbitrary number of tensor products.

\section{Efficient NTK Random Features via Sketching Method} \label{sec:mainresults}

Our goal is to design efficient such random features for the NTK. Seemingly, one can obtain such random features from definition of NTK (i.e., \cref{def:ntk}) by using gradients of the randomly initialized networks as features.
However, the NTK is the infinite-width limit, while in practice we need to fix some finite width, which will introduce a bias. Moreover, \citet{arora2019exact} showed that the gradient features from a network with finite widths degrade the practical performance by a huge gap. Instead, we focus on the closed-form expression of NTK with ReLU activations.

\begin{algorithm}[h]
\caption{Random Features for NTK of ReLU network via \CS} \label{alg:ntk_random_features}
\begin{algorithmic}[1]
	\STATE {\bf Input}: $\x \in \R^{d}$, network depth $L$, feature dimensions $m_0$, $m_1$ and $m_{\mathtt{cs}}$
	\STATE ${\BPhi}^{(0)}(\x) \leftarrow \x, \BPsi^{(0)}(\x) \leftarrow \x$, and $m \leftarrow d$
	\FOR{ $\ell=1$ to $L$}
		\STATE Draw i.i.d. $\w_i \sim \mathcal{N}(\boldsymbol{0}, \I_{m})$ for $i\in [m_0]$ and
		$$\BLambda^{(\ell)}(\x) \leftarrow \sqrt{\frac{2}{m_0}} \ \mathrm{Step}\left(\begin{bmatrix}\w_1, \dots, \w_{m_0}\end{bmatrix}^\top \BPsi^{(\ell-1)}(\x)\right)$$
		\STATE Draw i.i.d. ${\w}^{\prime}_j \sim \mathcal{N}(\boldsymbol{0}, \I_{m})$ for $j \in [m_1]$ and
		$$\BPsi^{(\ell)}(\x) \leftarrow \sqrt{\frac{2}{m_1}} \ \mathrm{ReLU}\left( \begin{bmatrix} \w^{\prime}_1, \dots, \w^{\prime}_{m_1} \end{bmatrix}^\top \BPsi^{(\ell-1)}(\x) \right)$$ 
        \STATE Draw two independent \CS transforms $\mathcal{C}_0^{(\ell)}$ and $\mathcal{C}_1^{(\ell)}$ that map to $\R^{m_{\mathtt{cs}}}$ and 
        $$\BGamma^{(\ell)}(\x) \leftarrow \iFFT\left(\FFT(\mathcal{C}^{(\ell)}_0(\BLambda^{(\ell)}(\x))) \odot \FFT(\mathcal{C}^{(\ell)}_1({\BPhi}^{(\ell-1)}(\x)))\right)$$
		\STATE ${\BPhi}^{(\ell)}(\x) \leftarrow \begin{bmatrix} \BPsi^{(\ell)}(\x), \ \BGamma^{(\ell)}(\x) \end{bmatrix}$, $m \leftarrow m_1$
	\ENDFOR
	\STATE {\bf return} ${\BPhi}^{(L)}(\x)$
\end{algorithmic}
\end{algorithm}


We begin by introducing random features of arc-cosine kernels $A_0$ and $A_1$ originally due to~\citet{cho2009kernel}:
\begin{align}  
a_0(\x) &= \sqrt{\frac{2}{m_0}} \ \mathrm{Step} \left( \left[ \w_1, \dots, \w_{m_0} \right]^\top \x\right) \label{eq:a0_arccosine_random_features}, \\
a_1(\x) &= \sqrt{\frac{2}{m_1}} \ \mathrm{ReLU} \left( \left[ \w^\prime_1, \dots, \w_{m_1}^\prime \right]^\top \x\right) \label{eq:a1_arccosine_random_features}
\end{align}
where $\w_1, \dots, \w_{m_0}, \w^\prime_{1}, \dots, \w^\prime_{m_1}\in \R^d$ are sampled from 
$\mathcal{N}(\boldsymbol{0}, \I_d)$. It is known that $\E[\inner{a_0(\x), a_0(\x^\prime)}] = A_0(\x, \x^\prime)$ and $\E[\inner{a_1(\x), a_1(\x^\prime)}] = A_1(\x, \x^\prime)$ for $\x, \x^\prime \in \R^d$.

Recently,~\citet{bietti2019inductive} presented an explicit infinite-dimensional feature map for the NTK of ReLU networks by using recursive tensoring of explicit feature maps for the arc-cosine kernel. Replacing each explicit feature map with a random feature map for the corresponding kernel we can obtain a random feature map for the NTK. The resulting construction is:
\begin{align} \label{eq:ntk_feature_construction_tensor_product} 
\begin{aligned}
\BPsi^{(\ell+1)}(\x) &= a_1 \left(\BPsi^{(\ell)}(\x)\right), \quad \BPhi^{(0)}(\x) = \BPsi^{(0)}(\x) = \x,  \\
\BPhi^{(\ell+1)}(\x) &= \left[ \BPsi^{(\ell+1)}(\x), \quad a_0 \left(\BPsi^{(\ell)}(\x)\right) \otimes \BPhi^{(\ell)}(\x) \right], 
\end{aligned}
\end{align}
for $\ell = 0, \dots, L-1$. These features can be used for approximating both NTK and NNGP as 
$K_{\NTK}^{(\ell)}(\x, \x^\prime) \approx \inner{ \BPhi^{(\ell)}(\x), {\BPhi^{(\ell)}}(\x^\prime)}$ and $K_{\NNGP}^{(\ell)}(\x, \x^\prime) \approx \inner{\BPsi^{(\ell)}(\x), {\BPsi^{(\ell)}}(\x^\prime)}$.

However, one major drawback of the last construction is that the number of features is exponential in the depth. Indeed, the dimension of output features $\BPhi^{(L)}(\x)$ is $\left( \sum_{k=0}^{L-1} m_0^k\right) m_1 + m_0^{L} d = \bigo(m_0^L (m_1 + d))$. This also leads to $\bigo(m_0^L (m_1 + d) + m_1^2)$ time complexity. The exponential growth in depth $L$ is due to the tensor product $\otimes$ in \cref{eq:ntk_feature_construction_tensor_product}. For a large $L$, the number of features can easily be larger than the number of data points $n$ and any computational saving is hopeless.

In order to make the feature map more compact, we utilize a \TS  to reduce the dimension of
$a_0 \left(\BPsi^{(\ell)}(\x)\right) \otimes \BPhi^{(\ell)}(\x)$. 
We do so by replacing it with 
\begin{align*}
\BGamma^{(\ell)}(\x) :=\FFT^{-1} \left( \FFT\left(\mathcal{C}_1(a_0 (\BPsi^{(\ell)}(\x)))\right) \odot \FFT\left(\mathcal{C}_2(\BPhi^{(\ell)}(\x))\right) \right)
\end{align*}
where $\mathcal{C}_1$ and $\mathcal{C}_2$ are independent\footnotemark{}  {\sc CountSketch} transforms that map to $\R^{m_{\mathtt{cs}}}$. 
Denote $\widehat{\BPhi}^{(\ell)}(\x) := \begin{bmatrix}\BPsi^{(\ell)}(\x), \ \BGamma^{(\ell)}(\x) \end{bmatrix}$ and one can expect that 
\begin{align}
\inner{ \BPhi^{(\ell)}(\x) , \BPhi^{(\ell)}(\x^\prime) }
\approx
\inner{ \widehat{\BPhi}^{(\ell)}(\x), \widehat{\BPhi}^{(\ell)}(\x^\prime) }
\end{align}
from the property in \cref{eq:inner_product_approximating_property}.  The process is repeated for every layer. A pseudo-code for the proposed feature construction is described in~\cref{alg:ntk_random_features}.

We now provide that the approximation error bound of generated features from \cref{alg:ntk_random_features}.

\begin{restatable}{theorem}{randomfeatureserror} \label{thm:ntk-random-features-error}
Given $\x, \y \in \mathbb{R}^d$ such that $\norm{\x}_2 =\norm{\x^\prime}_2 = 1$ and $L \geq 1$, let $K_{\NTK}^{(L)}$ the NTK of $L$-layer fully-connected ReLU network. Given $\delta, \in (0,1)$,  $\varepsilon \in (0 ,1/L)$, there exist constants $C_0, C_1, C_2 > 0$ such that
$$m_0 \geq C_0 \frac{L^2}{\varepsilon^2}  \log\left( \frac{L}{\delta} \right), \ \ 
m_1 \geq C_1 \frac{L^6}{\varepsilon^4} \log\left( \frac{L}{\delta}\right), \ \ 
m_{\mathtt{cs}}\geq C_2 \frac{L^3}{\varepsilon^2 \delta}$$
and 
\begin{align*}
\Pr\left( \abs{\inner{{\BPhi}^{(L)}(\x), \BPhi^{(L)}(\x^\prime)} - K_{\NTK}^{(L)}(\x,\x^\prime)} \leq L \varepsilon \left(1 + \frac{\varepsilon}{2} \right)^2 + \varepsilon \right) \geq 1 - \delta
\end{align*}
where $\BPhi^{(L)}(\x), \BPhi^{(L)}(\x^\prime) \in \R^{m_1 + m_{\mathtt{cs}}}$ be the output of \cref{alg:ntk_random_features} of $\x, \x^\prime$, respectively, using the same \CS transforms. 
\end{restatable}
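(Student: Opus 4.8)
The plan is to propagate the approximation error through the layers while tracking three coupled quantities: the norm deviations $\eta_\ell:=\max\{\,|\,\|\BPsi^{(\ell)}(\x)\|_2^2-1|,\ |\,\|\BPsi^{(\ell)}(\x^\prime)\|_2^2-1|\,\}$, the NNGP error $e_\ell^{\NNGP}:=|\langle\BPsi^{(\ell)}(\x),\BPsi^{(\ell)}(\x^\prime)\rangle-K_{\NNGP}^{(\ell)}(\x,\x^\prime)|$, and the NTK error $e_\ell^{\NTK}:=|\langle\BPhi^{(\ell)}(\x),\BPhi^{(\ell)}(\x^\prime)\rangle-K_{\NTK}^{(\ell)}(\x,\x^\prime)|$. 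Since $\|\x\|_2=\|\x^\prime\|_2=1$, the exact recursions give $K_{\NNGP}^{(\ell)}(\x,\x)=K_{\NNGP}^{(\ell)}(\x^\prime,\x^\prime)=1$ and $K_{\NTK}^{(\ell)}(\x,\x^\prime)\le\ell+1$ for every $\ell$, and $e_0^{\NNGP}=e_0^{\NTK}=\eta_0=0$ because $\BPhi^{(0)}=\BPsi^{(0)}=\x$ is exact. I would expose the randomness one layer at a time: conditioned on the $\sigma$-algebra $\mathcal F_{\ell-1}$ generated by all Gaussian weights and {\sc CountSketch} hashes of layers $1,\dots,\ell-1$, the vectors $\BPsi^{(\ell-1)}(\cdot)$ and $\BPhi^{(\ell-1)}(\cdot)$ are fixed, while the layer-$\ell$ objects $\BLambda^{(\ell)},\BPsi^{(\ell)},\mathcal C_0^{(\ell)},\mathcal C_1^{(\ell)}$ are fresh and independent, so that standard concentration applies conditionally.

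Next I would establish the per-layer local estimates, each holding conditionally on $\mathcal F_{\ell-1}$ with probability $1-\delta/(cL)$. (i) Since $[a_1(\u)]_i=\sqrt{2/m_1}\,\mathrm{ReLU}(\w_i^{\prime\top}\u)$ with $\w_i^\prime\sim\mathcal N(\0,\I)$ is sub-Gaussian with parameter $O(\|\u\|_2)$, a Bernstein bound gives $|\langle\BPsi^{(\ell)}(\x),\BPsi^{(\ell)}(\x^\prime)\rangle-A_1(\BPsi^{(\ell-1)}(\x),\BPsi^{(\ell-1)}(\x^\prime))|=O(\sqrt{\log(L/\delta)/m_1})$ and the same for the two diagonal terms. (ii) Since $[a_0(\u)]_i\in\{0,\sqrt{2/m_0}\}$, Hoeffding gives $|\langle\BLambda^{(\ell)}(\x),\BLambda^{(\ell)}(\x^\prime)\rangle-A_0(\BPsi^{(\ell-1)}(\x),\BPsi^{(\ell-1)}(\x^\prime))|=O(\sqrt{\log(L/\delta)/m_0})$ and $\big|\,\|\BLambda^{(\ell)}(\x)\|_2^2-1\big|=O(\sqrt{\log(L/\delta)/m_0})$. (iii) By \cref{eq:tensorsketch} the vector $\BGamma^{(\ell)}(\x)$ equals the {\sc CountSketch} of $\BLambda^{(\ell)}(\x)\otimes\BPhi^{(\ell-1)}(\x)$, so by the inner-product-preserving property \cref{eq:inner_product_approximating_property} and Chebyshev, $|\langle\BGamma^{(\ell)}(\x),\BGamma^{(\ell)}(\x^\prime)\rangle-\langle\BLambda^{(\ell)}(\x),\BLambda^{(\ell)}(\x^\prime)\rangle\,\langle\BPhi^{(\ell-1)}(\x),\BPhi^{(\ell-1)}(\x^\prime)\rangle|=O(\sqrt{\mathrm{poly}(L)/(m_{\mathtt{cs}}\delta)})$, using $\|\BLambda^{(\ell)}\|_2^2=1+o(1)$ and $\|\BPhi^{(\ell-1)}\|_2^2\le L(1+o(1))$. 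These are then converted into deviations from the \emph{exact} quantities with two analytic facts about the arc-cosine functions: $F_1$ — equivalently $A_1$ on the relevant arguments — is Lipschitz in the off-diagonal entry and in the norms, because $f^\prime(t)=1-\tfrac1\pi\cos^{-1}(t)\in[0,1]$, giving $|A_1(\BPsi^{(\ell-1)}(\x),\BPsi^{(\ell-1)}(\x^\prime))-K_{\NNGP}^{(\ell)}(\x,\x^\prime)|=O(e_{\ell-1}^{\NNGP}+\eta_{\ell-1})$; whereas $t\mapsto1-\tfrac1\pi\cos^{-1}(t)$ is only H\"older-$\tfrac12$ near $t=\pm1$, so after renormalizing the arguments by the perturbed norms (which keeps them in $[0,1]$ by Cauchy--Schwarz) one only gets $|A_0(\BPsi^{(\ell-1)}(\x),\BPsi^{(\ell-1)}(\x^\prime))-\big(1-\tfrac1\pi\cos^{-1}(K_{\NNGP}^{(\ell-1)}(\x,\x^\prime))\big)|=O(\sqrt{e_{\ell-1}^{\NNGP}+\eta_{\ell-1}})$.

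Finally I would close the recursions. For the norms and the NNGP error, $\eta_\ell\le\eta_{\ell-1}+O(\sqrt{\log(L/\delta)/m_1})$ and $e_\ell^{\NNGP}\le(1+o(1))e_{\ell-1}^{\NNGP}+O(\sqrt{\log(L/\delta)/m_1})$, so summing over layers $\eta_\ell,e_\ell^{\NNGP}=O(L\sqrt{\log(L/\delta)/m_1})$; the choice $m_1\ge C_1 L^6\varepsilon^{-4}\log(L/\delta)$ forces this to be $O((\varepsilon/L)^2)$, which is exactly what the H\"older-$\tfrac12$ loss needs so that, together with $m_0\ge C_0 L^2\varepsilon^{-2}\log(L/\delta)$, one obtains $|\lambda_\ell-\bar\lambda_\ell|\le\varepsilon/L$, where $\lambda_\ell:=\langle\BLambda^{(\ell)}(\x),\BLambda^{(\ell)}(\x^\prime)\rangle$ and $\bar\lambda_\ell:=1-\tfrac1\pi\cos^{-1}(K_{\NNGP}^{(\ell-1)}(\x,\x^\prime))\in[0,1]$. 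Subtracting $K_{\NTK}^{(\ell)}(\x,\x^\prime)=K_{\NNGP}^{(\ell)}(\x,\x^\prime)+\bar\lambda_\ell K_{\NTK}^{(\ell-1)}(\x,\x^\prime)$ from $\langle\BPhi^{(\ell)}(\x),\BPhi^{(\ell)}(\x^\prime)\rangle=\langle\BPsi^{(\ell)}(\x),\BPsi^{(\ell)}(\x^\prime)\rangle+\lambda_\ell\langle\BPhi^{(\ell-1)}(\x),\BPhi^{(\ell-1)}(\x^\prime)\rangle+(\text{sketch error}_\ell)$ and regrouping gives
\begin{align*}
e_\ell^{\NTK}\ \le\ e_\ell^{\NNGP}+\bar\lambda_\ell\,e_{\ell-1}^{\NTK}+|\lambda_\ell-\bar\lambda_\ell|\big(e_{\ell-1}^{\NTK}+K_{\NTK}^{(\ell-1)}(\x,\x^\prime)\big)+|\text{sketch error}_\ell|.
\end{align*}
Unrolling with $\bar\lambda_\ell\le1$, $K_{\NTK}^{(\ell-1)}(\x,\x^\prime)\le L$, $|\lambda_\ell-\bar\lambda_\ell|\le\varepsilon/L$, $e_0^{\NTK}=0$, and the per-layer bounds on $e_\ell^{\NNGP}$ (hence on $e_{\ell-1}^{\NTK}$, after a crude bootstrap $e_\ell^{\NTK}=O(L\varepsilon)$) and on the sketching error with $m_{\mathtt{cs}}\ge C_2 L^3\varepsilon^{-2}\delta^{-1}$, shows that the dominant contribution is $\sum_{\ell=1}^L|\lambda_\ell-\bar\lambda_\ell|\,K_{\NTK}^{(\ell-1)}(\x,\x^\prime)\lesssim L\varepsilon$, with all other terms (including $e_L^{\NNGP}$ and the accumulated sketching error) of size $O(\varepsilon)$ or lower order; this collapses to $e_L^{\NTK}\le L\varepsilon(1+\varepsilon/2)^2+\varepsilon$, the factor $(1+\varepsilon/2)^2$ absorbing the lower-order slack. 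A union bound over the $O(L)$ local events (a constant number per layer) gives total failure probability $\delta$.

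I expect the main obstacle to be the second analytic fact: the map governing $F_0$, namely $t\mapsto1-\tfrac1\pi\cos^{-1}(t)$, fails to be Lipschitz as $t\to\pm1$ and is only H\"older-$\tfrac12$ there, which is what forces the NNGP error to be driven down quadratically (hence $m_1\sim\varepsilon^{-4}$ rather than $\varepsilon^{-2}$). Keeping the $\cos^{-1}$ arguments valid after renormalizing by the perturbed norms $\|\BPsi^{(\ell)}(\cdot)\|_2$, and propagating the coupled $(\eta_\ell,e_\ell^{\NNGP},e_\ell^{\NTK})$ recursion while tracking the exact powers of $L$ and keeping the final constant at $1$, is the delicate bookkeeping. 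The {\sc TensorSketch} step is comparatively routine given the cited second-moment bounds, and the layer-by-layer conditioning is standard once the filtration is fixed.
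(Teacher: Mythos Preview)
Your proposal is correct and follows essentially the same architecture as the paper's proof: decompose the layer-$\ell$ error into NNGP, $\BLambda$, and sketching contributions, establish per-layer bounds on each with the stated $m_0,m_1,m_{\mathtt{cs}}$, derive the recursion $\Delta_\ell \le (1+\varepsilon/(2L))\Delta_{\ell-1}+(\varepsilon/(2L))\ell+O(\varepsilon^2/L^2)$, and solve it. The only practical difference is that the paper invokes existing results as black boxes---Corollary~16 of Daniely--Frostig--Singer for the NNGP bound (your $e_\ell^{\NNGP},\eta_\ell$) and Lemma~E.5 of Arora et al.\ for the $\BLambda$ bound---whereas you rederive these from first principles; your diagnosis of the H\"older-$\tfrac12$ behavior of $\cos^{-1}$ near $t=\pm1$ as the source of the $m_1\sim\varepsilon^{-4}$ scaling is exactly what is hidden in the paper's substitution $\varepsilon_1=\varepsilon^2/(32L^2)$ into the Daniely bound.
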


The proof of \cref{thm:ntk-random-features-error} is provided in~\cref{sec:proof-ntk-random-features-error}. 
We note that the restriction of inputs to the hypersphere (i.e., $\norm{\x_i}_2=1$) is a common assumption used in the NTK analysis~\cite{bietti2019inductive, geifman2020similarity}. This can be easily achieved by normalizing input data points.
From \cref{thm:ntk-random-features-error}, the dimension of the proposed random features can be $\bigo\left(\frac{L^6}{\varepsilon^4} \log\left(\frac{L}{\delta}\right) + \frac{L^3}{\varepsilon^2 \delta} \right)$, which gets rid of the exponential dependency on $L$, to guarantee the above error bound.  Furthermore, \citet{arora2019exact} studied that the gradient of randomly initialized ReLU network with finite width can approximate the NTK, but their feature dimension should be $\Omega\left( \frac{L^{13}}{\varepsilon^8} \log^2 \left( \frac{L}{\delta}\right) +  \frac{L^{6}}{\varepsilon^4} \log \left( \frac{L}{\delta}\right) d \right)$ to guarantee an approximation error of $(L+1)\varepsilon$ with probability at least $1-\delta$.
This error bound is smaller than that in \cref{thm:ntk-random-features-error} by a factor of $\left(1+\frac{\varepsilon}{2}\right)^2$, but their feature dimension is much larger by a factor of $\bigo\left( \frac{L^7}{\varepsilon^4} \log \left(\frac{L}{\delta}\right)\right)$.
We empirically observe that \cref{alg:ntk_random_features} requires much fewer dimension than both the random gradient and the na\"ive feature map construction in \cref{eq:ntk_feature_construction_tensor_product} to achieve the same error and provide these results in \cref{sec:kernel-approx}.

\footnotetext{i.e., hash and sign functions of $\mathcal{C}_1$ and $\mathcal{C}_2$ are independent.}


\section{Spectral Approximation for the NTK} \label{sec:ntk_spectral}

Our ultimate goal is to provide lower bounds on the parameter $m_0, m_1, m_\mathtt{cs}$ to achieve tight error bound in terms of spectral approximation of the NTK, i.e.,
$$
(1-\varepsilon) \left( \K^{(L)}_{\NTK} + \lambda \I\right)
\preceq
\BPhi^{(L)} (\BPhi^{(L)})^\top + \lambda \I
\preceq
(1+\varepsilon) \left( \K^{(L)}_{\NTK} + \lambda \I\right),
$$
where $\BPhi^{(L)} := \left[ \BPhi^{(L)}(\x_1), \dots, \BPhi^{(L)}(\x_n) \right]^\top$. 
We first provide spectral bounds of the arc-cosine kernels in~\cref{def:arccos_kernels}, 
which are necessary prerequisites of our analysis on the NTK random features. 
Based on these results, we present spectral bound of a two-layer ReLU network (i.e., $L=1$ in \cref{eq:nn-definition}) and discuss hardness on generalizing this result to networks with deeper layer.
To the best of our knowledge, this has not been studied in previous literature.

\subsection{Spectral Approximation for Arc-cosine Kernels}

Recently, \citet{avron2017random, lee2020generalized} proposed that random features with sampling from a modified distribution can give better approximation guarantee. More precisely, 
suppose $\BPhi:\R^d \times \R^d \rightarrow \R$ is a function for random features of kernel $K$ with distribution $p$.
Consider random vectors $\z_1, \dots, \z_m \in \R^d$ sampled from some distribution $q$. Denote that
\begin{align*}
\overline{\BPhi}_m(\x) := \frac{1}{\sqrt{m}} \left[
\sqrt{\frac{p(\z_1)}{q(\z_1)}} \BPhi(\x, \z_1), \dots\sqrt{\frac{p(\z_m)}{q(\z_m)}} \BPhi(\x, \z_m)
\right]^\top \in \R^m
\end{align*}
then one can verify that $\E_{\z\sim q}\left[\inner{\overline{\BPhi}_m(\x), \overline{\BPhi}_m(\x^{\prime})} \right] = K(\x, \x^\prime)$
for all $\x, \x^\prime \in \R^d$. 
Now assume that the distribution $q$ is defined by a measurable function $q_{\lambda}:\R^d \rightarrow \R$ satisfies that $q(\v) = q_{\lambda}(\v) / \int_{\R^d} q_{\lambda}(\v) d\v$ and 
\begin{align} \label{eq:leverage_score_function}
q_{\lambda}(\v) \geq p(\v) \cdot \BPhi(\X, \v)^\top \left( \K + \lambda \I \right)^{-1} \BPhi(\X, \v)
\end{align} 
for $\lambda > 0$ where $\BPhi(\X, \v) := [\BPhi(\x_1, \v), \dots, \BPhi(\x_n, \v)]^\top \in \R^n$. 
Then, they proved that with high probability it holds
\begin{align*}
(1-\varepsilon)\left( \K + \lambda \I \right)
\preceq
\overline{\BPhi}_m \overline{\BPhi}_m^\top + \lambda \I
\preceq
(1+\varepsilon)\left( \K + \lambda \I \right)
\end{align*}
where $\overline{\BPhi}_m = \left[ \overline{\BPhi}_m(\x_1), \dots, \overline{\BPhi}_m(\x_n) \right]^\top \in \R^{n \times m}$ and $\varepsilon \in (0,1)$ is a given parameter.

Observe that the lower bound in \cref{eq:leverage_score_function} requires $\bigo(n^3)$ operations to compute due to the matrix inverse. This can hurt the computational advantage of random features.
Hence, it is important to find such distribution $q$ that is easy to sample while holding the \cref{eq:leverage_score_function}.


In what follows, we provide that the original arc-cosine random features of $0$-th order in~\cref{eq:a0_arccosine_random_features} can indeed guarantee a spectral approximation bound. 

\begin{restatable}{theorem}{azerospectral} \label{thm:a0_spectral}
Given $\X \in \R^{n \times d}$, let $\A_0 \in \R^{n \times n}$ be the arc-cosine kernel matrix of $0$-th order with $\X$ and denote $\BPhi_0 := \sqrt{\frac{2}{m}} \mathrm{Step}(\X \W) \in \R^{n \times m}$ where each entry in $\W\in \R^{d \times m}$ is an i.i.d. sample from $\mathcal{N}(0,1)$. 
Let $s_{\lambda}$ be the statistical dimension of $\A_0$.  Given $\lambda \in (0, \norm{\A_0}_2)$, $\varepsilon \in (0, 1/2)$ and $\delta \in (0,1)$, if $m\geq \frac{8}{3} \frac{n}{\lambda \varepsilon^{2} } \log\left( \frac{16 s_{\lambda}}{\delta} \right)$, then
it holds
\begin{align*}
(1 - \varepsilon) (\A_0 + \lambda \I) 
\preceq
\BPhi_0 \BPhi_0^\top + \lambda \I 
\preceq 
(1 + \varepsilon) (\A_0 + \lambda \I)
\end{align*}
with probability at least $1-\delta$.
\end{restatable}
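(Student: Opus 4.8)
The plan is to cast this as a matrix concentration problem and apply a matrix Bernstein (or matrix Chernoff) inequality in the ``intrinsic dimension'' / statistical-dimension form, exactly in the spirit of the analysis of random features for the Gaussian kernel in~\cite{avron2017random}. The first step is to reduce the two-sided spectral inequality to a single operator-norm bound: writing $\M := (\A_0+\lambda\I)^{-1/2}$, the desired conclusion is equivalent to
\begin{align*}
\norm{\M \big(\BPhi_0 \BPhi_0^\top - \A_0\big) \M}_2 \leq \varepsilon .
\end{align*}
Since $\BPhi_0 = \sqrt{2/m}\,\mathrm{Step}(\X\W)$ with the columns $\w_1,\dots,\w_m$ of $\W$ i.i.d.\ $\mathcal N(\0,\I_d)$, we have $\BPhi_0\BPhi_0^\top = \frac1m \sum_{k=1}^m \y_k \y_k^\top$ where $\y_k := \sqrt{2}\,\mathrm{Step}(\X\w_k) \in \R^n$, and $\E[\y_k\y_k^\top] = \A_0$ by the defining property of the $0$-th order arc-cosine random features. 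Hence $\M(\BPhi_0\BPhi_0^\top - \A_0)\M = \frac1m\sum_k \X_k$ with $\X_k := \M\y_k\y_k^\top\M - \M\A_0\M$ a sum of i.i.d.\ centered PSD-difference matrices, and we are in position to apply matrix Bernstein.

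The next step is to bound the two quantities that drive matrix Bernstein: the almost-sure norm bound on each summand and the variance proxy. For the norm bound, note $\mathrm{Step}(\cdot)\in\{0,1\}$, so $\norm{\y_k}_2^2 \leq 2n$ deterministically, giving $\norm{\M\y_k\y_k^\top\M}_2 \leq 2n\norm{\M}_2^2 = 2n/\lambda$; after centering, $\norm{\X_k}_2 \leq 2n/\lambda =: R$ (one can also subtract $1$ harmlessly). For the variance, $\E[\X_k^2] \preceq \E[(\M\y_k\y_k^\top\M)^2] = \E[\norm{\M\y_k}_2^2\, \M\y_k\y_k^\top\M] \preceq \tfrac{2n}{\lambda}\,\M\A_0\M$, so $\norm{\sum_k \E[\X_k^2]}_2 \leq m\cdot \tfrac{2n}{\lambda}\norm{\M\A_0\M}_2 \leq \tfrac{2mn}{\lambda}$, and the intrinsic dimension is $\mathrm{tr}(\M\A_0\M) = \mathrm{tr}(\A_0(\A_0+\lambda\I)^{-1}) = s_\lambda$. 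Plugging $R = 2n/\lambda$, variance $v = 2mn/\lambda$, and intrinsic dimension $s_\lambda$ into the intrinsic-dimension matrix Bernstein bound and dividing by $m$ yields a deviation of order $\sqrt{\tfrac{n}{m\lambda}\log(s_\lambda/\delta)} + \tfrac{n}{m\lambda}\log(s_\lambda/\delta)$; requiring this to be at most $\varepsilon$ (using $\varepsilon < 1/2$ to absorb the linear term into the square-root term) gives exactly $m \gtrsim \tfrac{n}{\lambda\varepsilon^2}\log(s_\lambda/\delta)$, matching the stated bound $m \geq \tfrac{8}{3}\tfrac{n}{\lambda\varepsilon^2}\log(16 s_\lambda/\delta)$ up to the explicit constants.

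The main obstacle, and the place where care is needed, is getting the constants and the $\log(16 s_\lambda/\delta)$ form exactly right: this requires using the precise statement of intrinsic-dimension matrix Bernstein (e.g.\ Tropp's version, where the failure probability is $4\,s_\lambda\exp(-\,\cdot\,)$ rather than $n\exp(-\,\cdot\,)$, which is crucial since $s_\lambda$ can be far smaller than $n$), and tracking how the centering step and the $\varepsilon < 1/2$ regime let one collapse the Bernstein bound into a clean $\varepsilon$-bound with the claimed numerical constant $8/3$. A secondary point to verify is that the hypothesis $\lambda < \norm{\A_0}_2$ is only used to guarantee $s_\lambda \geq \tfrac12$ (so the $\log$ term is well-behaved and the bound is non-vacuous); everything else goes through for any $\lambda > 0$. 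I would also double-check the deterministic bound $\norm{\y_k}_2^2 \leq 2n$ is the right constant (it is: each coordinate of $\sqrt2\,\mathrm{Step}(\X\w_k)$ is in $\{0,\sqrt2\}$), since this constant propagates directly into the final $m$.
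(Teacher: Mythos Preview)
Your proposal is correct and is essentially the same argument as the paper's, just unpacked one level: the paper invokes a black-box result (Theorem~3.3 of \citet{lee2020generalized}, restated as \cref{thm:generalized_leverage_score}) that gives the spectral bound once one exhibits an upper bound $\widetilde{\tau}(\w) \geq p(\w)\,\BPhi_0(\w)^\top(\A_0+\lambda\I)^{-1}\BPhi_0(\w)$, and then simply observes that $\norm{\BPhi_0(\w)}_2^2 = 2\norm{\mathrm{Step}(\X\w)}_2^2 \leq 2n$ deterministically, so one may take $\widetilde{\tau}(\w) = p(\w)\cdot 2n/\lambda$ (a constant multiple of $p$, hence the modified distribution equals the original Gaussian and $s_{\widetilde{\tau}} = 2n/\lambda$). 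That black-box theorem is itself proved by exactly the intrinsic-dimension matrix Bernstein argument you outline, with the same reduction to $\norm{\M(\BPhi_0\BPhi_0^\top - \A_0)\M}_2 \leq \varepsilon$ and the same key deterministic bound $\norm{\y_k}_2^2 \leq 2n$ driving both $R$ and the variance. So your ``direct'' route and the paper's ``packaged'' route are the same proof; the paper's version is shorter to write because the Bernstein calculation and constant-tracking are already done in the cited lemma.
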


The proof of \cref{thm:a0_spectral} is provided in \cref{sec:proof-a0-spectral}. The analysis is similar to that studied by~\citet{avron2017faster}, i.e., $q_{\lambda}(\v) = ({n}/{\lambda})p(\v)$, which implies that the modified distribution is identical to the original one.

Next, we present our result on spectral approximation for arc-cosine random features of $1$-st order. Unlike the previous case, sampling vectors from the modified distribution in the form of the Gaussian scaled by squared $\ell_2$-norm is required. The formal statement is provided in \cref{thm:a1_spectral}.

\begin{algorithm}[t]
\caption{Modified Random Arc-cosine Features of $1$-st order~\cref{eq:reweighted_phi1} via Gibbs Sampling
} \label{alg:gibbs}
\begin{algorithmic}[1]
\STATE {\bf Input}: $\X \in \R^{n \times d}$, feature dimension $m_1$, Gibbs iterations $T$
\STATE Draw i.i.d. $\v_i \sim \mathcal{N}({\bf 0}, \I_d)$ for $i \in [m_1]$
\FOR{ $i = 1$ to $m_1$}
    \STATE $q(x, z) \leftarrow $ inverse of $\frac{\mathrm{erf}\left( {x}/{\sqrt{2}}\right)+1}{2} - \frac{ x \exp\left( -x^2/2\right)}{\sqrt{2 \pi}(z+1)}$ 
    (corresponds to the CDF of $\Pr([\v_i]_j | [\v_i]_{\setminus \{j\}})$)
    \FOR{ $t = 1$ to $T$}
        \FOR{ $j = 1$ to $d$}
            \STATE $u \leftarrow$ sample from $[0,1]$ at uniformly random
            \STATE $[\v_i]_j \leftarrow q\left(u, \sum_{k \in [d]\setminus\{j\}} [\v_i]_k^2\right)$
        \ENDFOR
    \ENDFOR
\ENDFOR
\STATE {\bf return } $\sqrt{\frac{2d}{m}} \left[
\frac{\mathrm{ReLU}(\X\v_1)}{\norm{\v_1}_2} , \ \dots \ , \frac{\mathrm{ReLU}(\X\v_m)}{\norm{\v_m}_2} \right]$
\end{algorithmic}
\end{algorithm}

\begin{restatable}{theorem}{aonespectral} \label{thm:a1_spectral}
Given $\X \in \R^{n \times d}$, let $\A_1 \in \R^{n \times n}$ be the arc-cosine kernel matrix of $1$-th order with $\X$ and $\v_1, \dots, \v_m \in \R^d$ be i.i.d. random vectors from probability distribution 
\begin{align} \label{eq:pdf_weighted_normal}
q(\v) = \frac{1}{(2\pi)^{d/2} d} \norm{\v}_2^2 \exp\left(-\frac{1}{2}\norm{\v}_2^2\right).
\end{align}
Denote
\begin{align} \label{eq:reweighted_phi1}
\BPhi_1 := \sqrt{\frac{2d}{m}} \left[
\frac{\mathrm{ReLU}(\X\v_1)}{\norm{\v_1}_2} , \ \dots \ , \frac{\mathrm{ReLU}(\X\v_m)}{\norm{\v_m}_2} \right]
\end{align}
and let $s_{\lambda}$ be the statistical dimension of $\A_1$. Given $\lambda \in (0, \norm{\A_1}_2)$, $\varepsilon \in (0, 1/2)$ and $\delta \in (0,1)$, if $m\geq \frac{8}{3} \frac{d \norm{\X}_2^2}{\lambda \varepsilon^{2} } \log\left( \frac{16 s_{\lambda}}{\delta} \right)$, then it holds that
\begin{align*}
(1 - \varepsilon) (\A_1 + \lambda I) 
\preceq
\BPhi_1 \BPhi_1^\top + \lambda \I 
\preceq 
(1 + \varepsilon) (\A_1 + \lambda I)
\end{align*}
with probability at least $1-\delta$.
\end{restatable}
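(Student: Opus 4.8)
The plan is to recognize $\BPhi_1$ as the modified random feature map $\overline{\BPhi}_m$ of \citet{avron2017random, lee2020generalized} recalled just before this theorem, and then to run the same matrix-concentration argument used for \cref{thm:a0_spectral}. Concretely, take the base law $p=\mathcal{N}(\mathbf{0},\I_d)$ and the base feature $\BPhi(\x,\v)=\sqrt{2}\,\mathrm{ReLU}(\v^\top\x)$; the arc-cosine identity $A_1(\x,\x')=2\,\E_{\v\sim p}[\mathrm{ReLU}(\v^\top\x)\,\mathrm{ReLU}(\v^\top\x')]$ underlying \eqref{eq:a1_arccosine_random_features} gives $\E_p[\BPhi(\x,\v)\BPhi(\x',\v)]=A_1(\x,\x')$. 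For the sampling law $q$ of \eqref{eq:pdf_weighted_normal} one has $q(\v)=\|\v\|_2^2\,p(\v)/d$, which is a probability density because $\E_{\v\sim p}[\|\v\|_2^2]=d$, and the importance weights are $\sqrt{p(\v)/q(\v)}=\sqrt{d}/\|\v\|_2$; hence the reweighted map $\overline{\BPhi}_m$ instantiated with $(p,q,\BPhi)$ is exactly the $\BPhi_1$ of \eqref{eq:reweighted_phi1}, and in particular $\E[\BPhi_1\BPhi_1^\top]=\A_1$.

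The next step is to verify the ridge-leverage domination \eqref{eq:leverage_score_function} for this $q$. Since $(\A_1+\lambda\I)^{-1}\preceq\lambda^{-1}\I_n$ and $\|\mathrm{ReLU}(\X\v)\|_2\le\|\X\v\|_2\le\|\X\|_2\|\v\|_2$,
\[
p(\v)\,\BPhi(\X,\v)^\top(\A_1+\lambda\I)^{-1}\BPhi(\X,\v)=2\,p(\v)\,\mathrm{ReLU}(\X\v)^\top(\A_1+\lambda\I)^{-1}\mathrm{ReLU}(\X\v)\le\frac{2\|\X\|_2^2}{\lambda}\,\|\v\|_2^2\,p(\v)=:q_\lambda(\v),
\]
and $q_\lambda$ is proportional to $q$, so $q$ is an admissible modified distribution; integrating, $S_q:=\int_{\R^d}q_\lambda(\v)\,d\v=\tfrac{2\|\X\|_2^2}{\lambda}\,\E_{\v\sim p}[\|\v\|_2^2]=\tfrac{2d\|\X\|_2^2}{\lambda}$, i.e. $S_q=\Theta(d\|\X\|_2^2/\lambda)$.

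It then remains to run the matrix-Bernstein step as in the proof of \cref{thm:a0_spectral}. Write $\BPhi_1\BPhi_1^\top=\tfrac1m\sum_{i=1}^m\y_i\y_i^\top$ with $\y_i=\sqrt{2d}\,\mathrm{ReLU}(\X\v_i)/\|\v_i\|_2$, and set $\Z_i:=(\A_1+\lambda\I)^{-1/2}\y_i\y_i^\top(\A_1+\lambda\I)^{-1/2}$, so the target spectral sandwich is equivalent to $\|\tfrac1m\sum_{i}(\Z_i-\E\Z_i)\|_2\le\varepsilon$, where $\E\Z_i=\BGamma:=(\A_1+\lambda\I)^{-1/2}\A_1(\A_1+\lambda\I)^{-1/2}$. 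The bound from the previous step gives, almost surely, $\|\Z_i\|_2=\y_i^\top(\A_1+\lambda\I)^{-1}\y_i\le S_q$, hence $\|\tfrac1m(\Z_i-\BGamma)\|_2\le S_q/m$; and since $\Z_i$ has rank one, $\Z_i^2=\|\Z_i\|_2\,\Z_i\preceq S_q\,\Z_i$, so the matrix variance obeys $\big\|\sum_i\E[\tfrac1{m^2}(\Z_i-\BGamma)^2]\big\|_2\le\tfrac{S_q}{m}\|\BGamma\|_2\le\tfrac{S_q}{m}$ while $\Tr\big(\sum_i\E[\tfrac1{m^2}(\Z_i-\BGamma)^2]\big)\le\tfrac{S_q}{m}\Tr(\BGamma)=\tfrac{S_q}{m}s_\lambda$, so the intrinsic dimension of the variance proxy is at most $s_\lambda(1+\lambda/\|\A_1\|_2)<2s_\lambda$ by the hypothesis $\lambda<\|\A_1\|_2$. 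Feeding $R=S_q/m$, variance bound $S_q/m$, and intrinsic dimension $\le 2s_\lambda$ into the intrinsic-dimension matrix Bernstein inequality and using $\varepsilon<1/2$ yields $\|\tfrac1m\sum_i(\Z_i-\E\Z_i)\|_2\le\varepsilon$ with probability at least $1-\delta$ once $m$ exceeds the stated threshold $\tfrac{8}{3}\,\tfrac{d\|\X\|_2^2}{\lambda\varepsilon^2}\log\tfrac{16 s_\lambda}{\delta}$ (the constants $\tfrac83$ and $16$ being inherited from the particular matrix-Bernstein bound, identical to the one used for \cref{thm:a0_spectral}), which is the claim.

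I expect the crux to be the domination step: the whole point is that the $\|\v\|_2^2$-reweighted Gaussian \eqref{eq:pdf_weighted_normal} is precisely the law for which the ridge-leverage function of the $A_1$ random features is uniformly dominated by $q_\lambda(\v)\propto\|\v\|_2^2\,p(\v)$ with a \emph{finite} and small total mass $S_q=\Theta(d\|\X\|_2^2/\lambda)$; this single estimate simultaneously controls the almost-sure operator norm of the summands $\Z_i$ and the matrix variance, which is what makes the clean sample-complexity threshold possible. The matrix-concentration bookkeeping afterward is routine; the only care needed is to use the intrinsic-dimension form of Bernstein together with $\Tr(\BGamma)=s_\lambda$ so that the logarithm depends on $s_\lambda$ rather than on the ambient dimension $n$.
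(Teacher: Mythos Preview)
Your proof is correct and follows essentially the same route as the paper: both identify $q(\v)=\|\v\|_2^2\,p(\v)/d$ as the modified sampling law, verify the ridge-leverage domination $\tau_\lambda(\v)\le \tfrac{2\|\X\|_2^2}{\lambda}\|\v\|_2^2\,p(\v)$ via $(\A_1+\lambda\I)^{-1}\preceq\lambda^{-1}\I$ and $\|\mathrm{ReLU}(\X\v)\|_2\le\|\X\|_2\|\v\|_2$, integrate to get $s_{\widetilde\tau}=2d\|\X\|_2^2/\lambda$, and check that the reweighted feature map is exactly \eqref{eq:reweighted_phi1}. The only difference is that the paper then invokes the black-box result of \citet{lee2020generalized} (restated as \cref{thm:generalized_leverage_score}) directly, whereas you unpack that lemma and re-run the intrinsic-dimension matrix-Bernstein argument by hand; your unpacking is sound and recovers the same constants.
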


The proof of \cref{thm:a1_spectral} is provided in \cref{sec:proof-a1-spectral}. 
We note that the modified distribution can be expressed as a closed-form formula as in \cref{eq:pdf_weighted_normal}. Once random vectors are sampled from this distribution, the modified random features in \cref{eq:reweighted_phi1} can be computed at the same cost of the original features in \cref{eq:a1_arccosine_random_features}. In addition, the lower bound on feature dimension depends on the square of the spectral norm of input.

{\bf Approximate sampling.}
It is not trivial to sample a vector $\v\in \R^d$ from the distribution $q(\cdot)$ defined in \cref{eq:pdf_weighted_normal}. Thus, we suggest to perform an approximate sampling via Gibbs sampling. The algorithm starts with a random initialized vector $\v$ and then iteratively replaces $[\v]_i$ with a sample from $q([\v]_i | [\v]_{\setminus i})$ for $i\in [d]$ and repeat this process for $T$ iterations. 
One can derive the conditional distribution
\begin{align} \label{eq:conditional_probability}
    q([\v]_i | [\v]_{\setminus \{i\}}) \propto \frac{ \norm{\v}_2^2}{1 + \norm{\v}_2^2 - [\v]_i^2}  \exp\left(- \frac{[\v]_i^2}{2}\right)
\end{align} and sampling a single random variable from \cref{eq:conditional_probability} can be done via the inverse transformation method.\footnote{It requires the CDF of $q([\v]_i | [\v]_{\setminus \{i\}})$ which is equivalent to $\frac{\mathrm{erf}\left( {[\v]_i }/{\sqrt{2}}\right)+1}{2} - \frac{ [\v]_i  \exp\left( -[\v]_i^2/2\right)}{\sqrt{2 \pi}(1 + \norm{\v}_2^2 - [\v]_i^2)}$.} We empirically verify that $T=1$ is enough for promising performances. The running time of Gibbs sampling becomes $\bigo(m_1 d  T)$ where $m_1$ corresponds to the number of independent samples from $q(\v)$. This is negligible compared to the feature map construction of \CS for $T=\bigo(1)$. 
The pseudo-code for the modified random features of $A_1$ using Gibbs sampling is outlined in \cref{alg:gibbs}.

\captionsetup[subfigure]{aboveskip=1pt}
\begin{figure}[t]
\begin{center}
    \begin{subfigure}{0.3\textwidth}
        \includegraphics[width=\columnwidth]{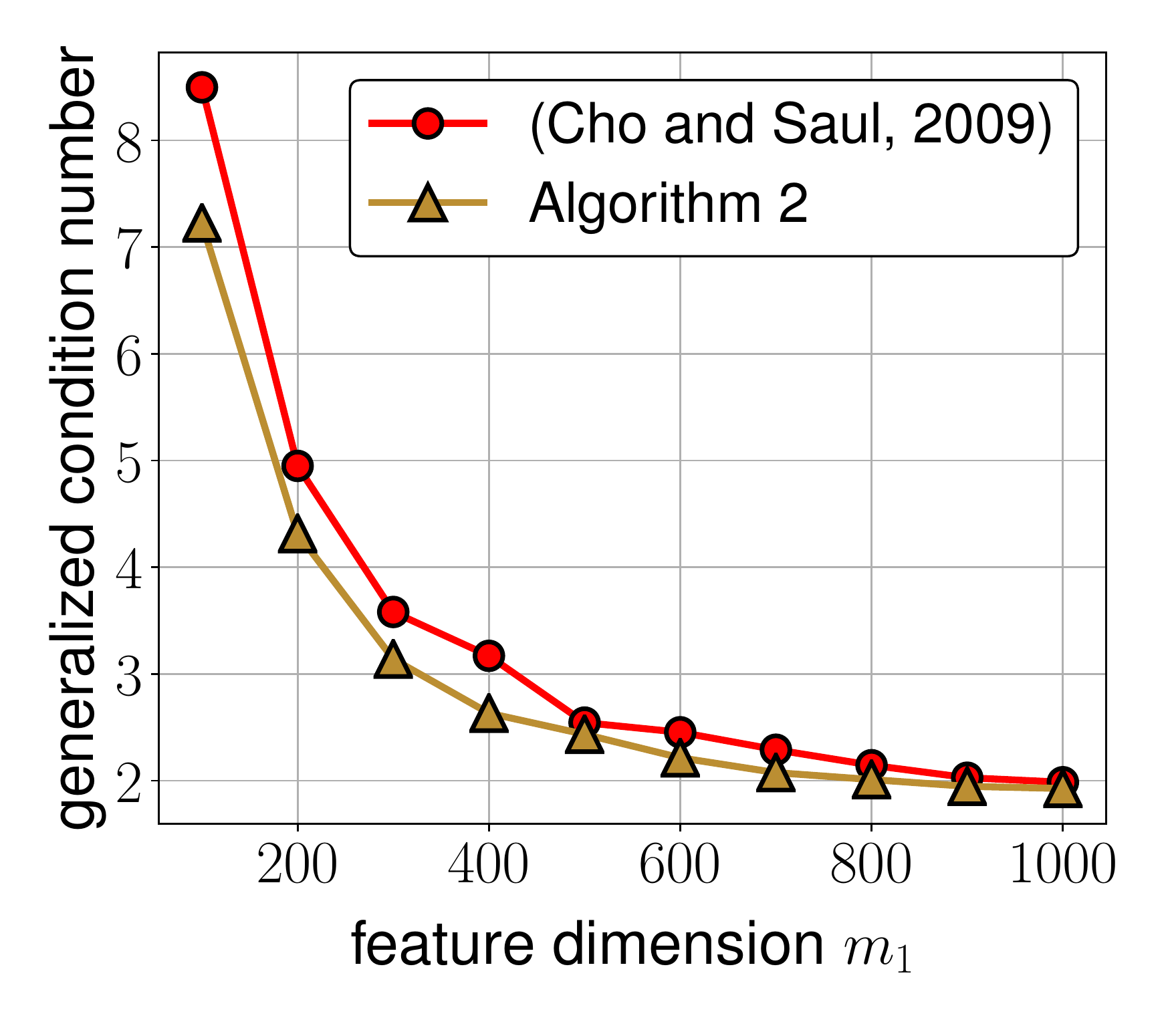}
        \vspace{-0.25in}
        \caption{$\mathtt{abalone}$}
    \end{subfigure}
    \hspace{-0.05in}
    \begin{subfigure}{0.3\textwidth}
        \includegraphics[width=\columnwidth]{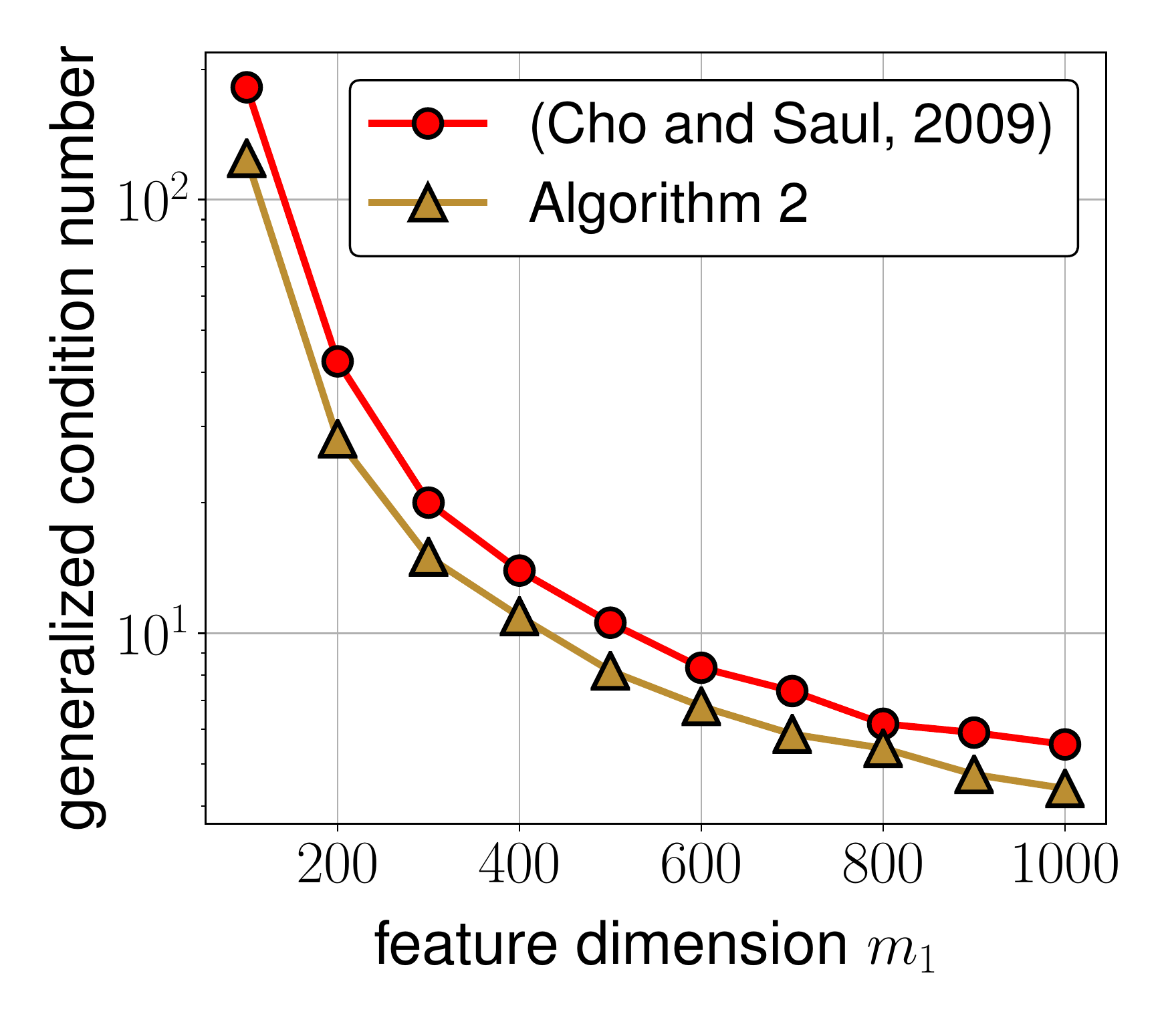}
        \vspace{-0.25in}
        \caption{$\mathtt{ecoli}$}
    \end{subfigure}
    \hspace{-0.05in}
    \begin{subfigure}{0.3\textwidth}
        \includegraphics[width=\columnwidth]{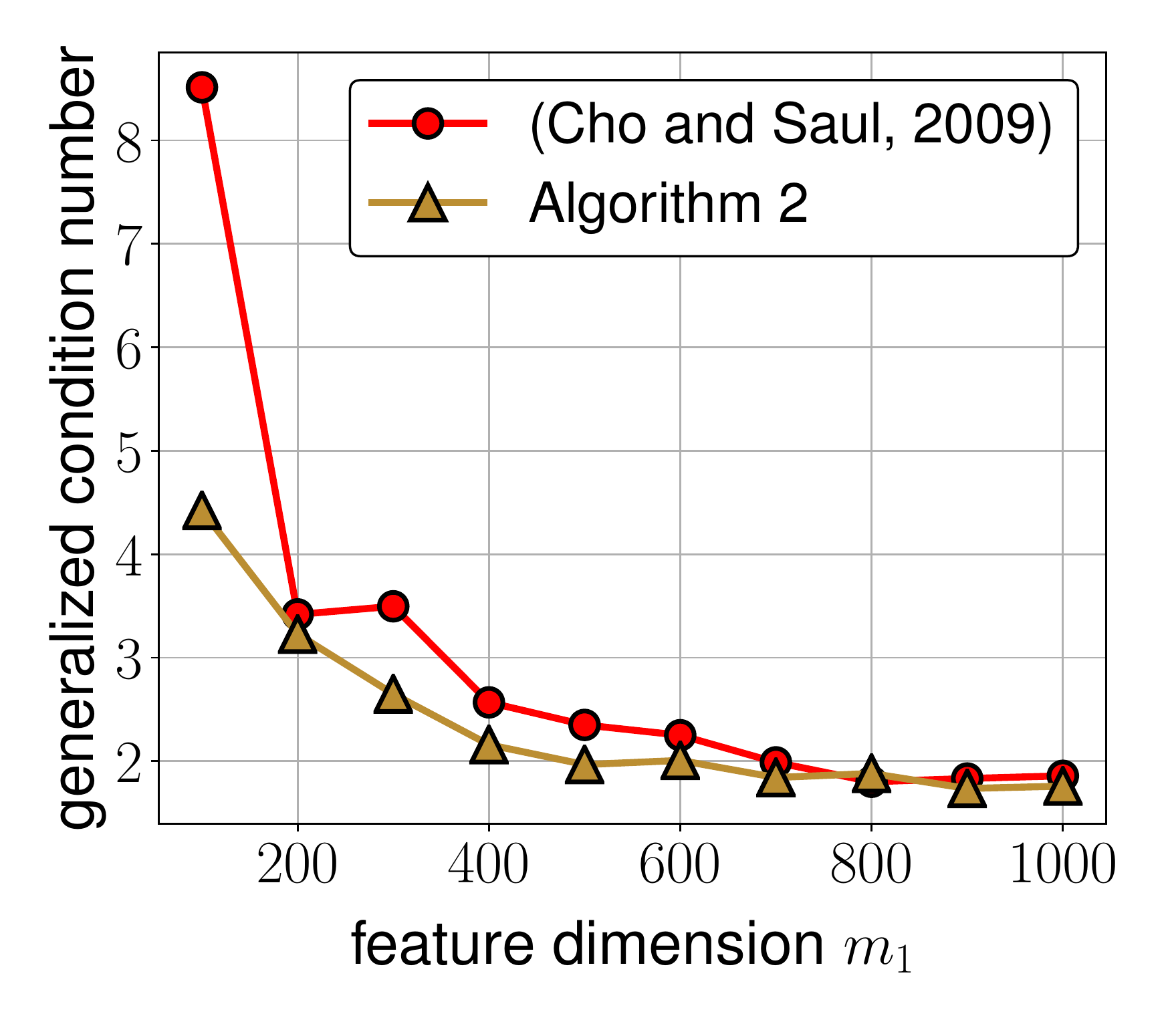}
        \vspace{-0.25in}
        \caption{$\mathtt{titanic}$}
    \end{subfigure}
    \vspace{-0.1in}
\caption{Generalized condition numbers of arc-cosine random features of \cite{cho2009kernel} and the proposed approach (\cref{alg:gibbs}) under real-world UCI datasets.} \label{fig:a1_cond_num}
\vspace{-0.2in}
\end{center}
\end{figure}

We evaluate approximation quality of the proposed method (\cref{alg:gibbs}) to that of the random features~\cite{cho2009kernel} in \cref{fig:a1_cond_num}. In particular, we compute the condition number (i.e., ratio of the largest and smallest eigenvalues) of 
$
(\A_1  + \lambda \I)^{-1/2} \left( \BPhi_1 \BPhi_1^\top + \lambda \I\right) (\A_1  + \lambda \I)^{-1/2}.
$
If $(\BPhi_1 \BPhi_1^\top + \lambda\I)$ is spectrally close to $(\A_1  + \lambda \I)$, then the corresponding condition number will be close to $1$. We evaluate the condition numbers of those random features using 3 UCI datasets and set $\lambda = 10^{-4} \cdot n$ when $n$ data points are given. For each dataset, we increase $m_1$ from $100$ to $1{,}000$. Observe that the proposed random features for arc-cosine features have smaller condition numbers than the previous method for all datasets. We provide more experimental results that the modified random features can improve performance on downstream tasks in \cref{sec:experiments}.

\subsection{Spectral Approximation for the NTK of Two-layer ReLU Network}

We are now ready to state a spectral approximation bound for our NTK random features of a two-layer ReLU network, i.e., $L=1$.

\begin{restatable}{theorem}{ntkspectral} \label{thm:ntk_spectral}
Given $\X = [\x_1, \dots, \x_n]^\top \in \R^{n \times d}$, assume that $\norm{\x_i}_2 = 1$ for $i\in [n]$.
Let $\K_{\NTK}$ be the NTK of two-layer ReLU network, i.e., $L=1$ in \cref{eq:nn-definition}, and $\A_0, \A_1$ denote the arc-cosine kernels of $0$-th, $1$-st order with $\X$, respectively, as in \cref{def:arccos_kernels}. For any $\lambda \in (0, 2\min(\norm{\A_0}_2, \norm{\A_1}_2)]$, suppose $s_{\lambda}$ is an upper bound of statistical dimensions of both $\A_0, \A_1$. Given $\varepsilon \in (0, 1/2)$, $\delta \in (0,1)$, 
let $\BPhi \in \R^{n \times (m_1 + m_{\mathtt{cs}})}$ be the first output of \cref{alg:ntk_random_features} with $L=1$ and
$$m_0 \geq \frac{48 n}{\varepsilon^{2} \lambda} \log\left( \frac{48 s_{\lambda}}{\delta} \right), \ \ 
m_1 \geq \frac{16}{3} \frac{d \norm{\X}_2^2}{\varepsilon^{2}\lambda} \log\left( \frac{48 s_{\lambda}}{\delta} \right), \ \ 
m_{\mathtt{cs}}\geq \frac{297}{ \varepsilon^2 \delta} \left( \frac{n}{\lambda + 1}\right)^2$$
Then, with probability at least $1-\delta$, it holds that
\begin{align} \label{eq:spectral_approximation_ntk_features}
(1-\varepsilon) \left( \K_{\NTK} + \lambda \I\right) \preceq
\BPhi \BPhi^\top + \lambda \I \preceq (1+\varepsilon) \left( \K_{\NTK} + \lambda \I\right).
\end{align}
\end{restatable}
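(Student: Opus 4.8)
The plan is to exploit the closed form of the two-layer NTK together with the additive structure of the feature map produced by \cref{alg:ntk_random_features}, reducing the claim to two essentially independent spectral approximations that are then added. First I would unfold the definitions. Because $\norm{\x_i}_2 = 1$, the matrix $\K_{\NNGP}^{(0)} = \X\X^\top$ has an all-ones diagonal, so $F_0(\X\X^\top) = \A_0$ and $F_1(\X\X^\top) = \A_1$, and the NTK recursion collapses to $\K_{\NTK} = \A_1 + (\X\X^\top)\odot\A_0$. On the algorithmic side, with $L=1$ the output is $\BPhi = [\BPsi^{(1)},\ \BGamma^{(1)}]$, so $\BPhi\BPhi^\top = \BPsi^{(1)}(\BPsi^{(1)})^\top + \BGamma^{(1)}(\BGamma^{(1)})^\top$, where $\BPsi^{(1)}$ is precisely the (modified) order-$1$ arc-cosine random feature matrix of $\X$, $\BLambda^{(1)}$ is the order-$0$ arc-cosine random feature matrix of $\X$ (as in \cref{thm:a0_spectral}), and $\BGamma^{(1)}$ is the \TS of $\BLambda^{(1)}\otimes\X$ (row-wise). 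Thus it suffices to show that $\BPsi^{(1)}(\BPsi^{(1)})^\top$ is a $(1\pm\varepsilon)$-spectral approximation of $\A_1$ with regularization $\lambda/2$, that $\BGamma^{(1)}(\BGamma^{(1)})^\top$ is a $(1\pm\varepsilon)$-spectral approximation of $(\X\X^\top)\odot\A_0$ with regularization $\lambda/2$, and then add the two Loewner sandwiches: since $\lambda/2 + \lambda/2 = \lambda$ and a sum of $(1\pm\varepsilon)$-approximations is again a $(1\pm\varepsilon)$-approximation, this yields \cref{eq:spectral_approximation_ntk_features}.

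For the first block I would apply \cref{thm:a1_spectral} with regularization $\lambda/2$, accuracy $\varepsilon$, and confidence $1 - \delta/3$; the hypothesis $\lambda \le 2\min(\norm{\A_0}_2,\norm{\A_1}_2)$ ensures $\lambda/2 \in (0, \norm{\A_1}_2)$, and its sample requirement becomes exactly the stated bound on $m_1$ (the constant $48 = 16\cdot 3$ coming from replacing $\delta$ by $\delta/3$). For the second block I would chain three approximations. Hop (a): \cref{thm:a0_spectral} with regularization $\lambda/2$, accuracy $\varepsilon/3$, confidence $1 - \delta/3$ gives $(1-\tfrac{\varepsilon}{3})(\A_0 + \tfrac{\lambda}{2}\I) \preceq \BLambda^{(1)}(\BLambda^{(1)})^\top + \tfrac{\lambda}{2}\I \preceq (1+\tfrac{\varepsilon}{3})(\A_0 + \tfrac{\lambda}{2}\I)$, and this is what forces the stated bound on $m_0$. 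Hop (b): Hadamard-multiply all three terms by $\X\X^\top \succeq 0$; by the Schur product theorem the Loewner order is preserved, and because $\X\X^\top$ has unit diagonal we have $\I \odot \X\X^\top = \I$, so the identity terms are untouched and we obtain a $(1\pm\varepsilon/3)$-approximation of $(\X\X^\top)\odot\A_0 + \tfrac{\lambda}{2}\I$ by $\big(\BLambda^{(1)}(\BLambda^{(1)})^\top\big)\odot\X\X^\top + \tfrac{\lambda}{2}\I$. Hop (c): apply the spectral guarantee of \TS on Hadamard products (cf. \cref{eq:inner_product_approximating_property} and \cite{avron2014subspace, avron2017faster, ahle2020oblivious, woodruff2020near}) with regularization $\lambda/2$, accuracy $\varepsilon/3$, confidence $1 - \delta/3$, giving a $(1\pm\varepsilon/3)$-approximation of $\big(\BLambda^{(1)}(\BLambda^{(1)})^\top\big)\odot\X\X^\top + \tfrac{\lambda}{2}\I$ by $\BGamma^{(1)}(\BGamma^{(1)})^\top + \tfrac{\lambda}{2}\I$. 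Composing (a)--(c) and using $(1+\varepsilon/3)^2 \le 1+\varepsilon$ and $(1-\varepsilon/3)^2 \ge 1-\varepsilon$ for $\varepsilon \le 1/2$ gives the desired $(1\pm\varepsilon)$-approximation for the second block.

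The step I expect to be the main obstacle is feeding hop (c): the \TS spectral lemma needs $m_{\mathtt{cs}}$ proportional to the square of the statistical dimension (at level $\lambda/2$) of the \emph{actual random} matrix being sketched, $\big(\BLambda^{(1)}(\BLambda^{(1)})^\top\big)\odot\X\X^\top$, rather than of its expectation $(\X\X^\top)\odot\A_0$. I would control it deterministically: the entries of $\BLambda^{(1)}$ lie in $\{0,\pm\sqrt{2/m_0}\}$, so $[\BLambda^{(1)}(\BLambda^{(1)})^\top]_{ii} \le 2$, and since $[\X\X^\top]_{ii} = 1$ this gives $\mathrm{tr}\big((\BLambda^{(1)}(\BLambda^{(1)})^\top)\odot\X\X^\top\big) \le 2n$; combined with the trivial $s_{\mu}(\M) \le \mathrm{rank}(\M) \le n$, this yields a statistical-dimension bound of the form $O\big(n/(\lambda+1)\big)$ (alternatively one can condition on the event of hop (a), which pins $[\BLambda^{(1)}(\BLambda^{(1)})^\top]_{ii}$ near $[\A_0]_{ii}=1$, sharpening the constant), and substituting accuracy $\varepsilon/3$ and confidence $\delta/3$ produces the stated $m_{\mathtt{cs}} = O\big(\tfrac{1}{\varepsilon^2\delta}(\tfrac{n}{\lambda+1})^2\big)$. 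Finally I would union-bound over the three failure events (total probability $\le \delta$), add the two block sandwiches as described, and check the side conditions on $\lambda$; the remaining work (the elementary manipulations collapsing the $\varepsilon/3$'s and $\delta/3$'s) is routine.
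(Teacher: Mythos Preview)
Your proposal is correct and follows essentially the same route as the paper: decompose $\K_{\NTK}=\A_1+\A_0\odot\X\X^\top$ and $\BPhi\BPhi^\top=\BPsi^{(1)}(\BPsi^{(1)})^\top+\BGamma^{(1)}(\BGamma^{(1)})^\top$, apply \cref{thm:a1_spectral} and \cref{thm:a0_spectral} with regularization $\lambda/2$ and budgets $\delta/3$, push the $\A_0$ sandwich through the Hadamard product with $\X\X^\top$ via the Schur product theorem (using $\I\odot\X\X^\top=\I$), apply the \TS spectral lemma with a deterministic trace bound on $(\BLambda^{(1)}(\BLambda^{(1)})^\top)\odot\X\X^\top$, collapse $(1\pm\varepsilon/3)^2$ into $(1\pm\varepsilon)$, and add the two blocks. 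The only cosmetic differences are the order in which you compose hops (b) and (c) and your trace bound $\le 2n$ versus the paper's claimed $\le n$; your bound is in fact the correct one given the $\sqrt{2/m_0}$ normalization, and it only affects the constant in $m_{\mathtt{cs}}$.
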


The proof of \cref{thm:ntk_spectral} is provided in \cref{sec:proof_ntk_spectral}.
We note that the ridge regularization parameter typically set to $\lambda = \lambda^\prime n$ where $\lambda^\prime$ is a small constant, e.g., $10^{-4}$~\cite{rudi2016generalization, avron2017random, geifman2020similarity}. 
Combining this setting with the fact that $\norm{\x_i}_2=1$ yields that $\s_{\lambda} = \bigo(1)$. Hence, it is enough to choose $m_0 = {\bigo}\left(\frac{1}{\varepsilon^{2}}\log\left(\frac{1}{\delta}\right)\right), m_{\mathtt{cs}} = \bigo\left(\frac{1}{\varepsilon^{2}\delta}\right)$ and $m_1 = \bigo\left( \frac{d}{\varepsilon^2} \log\left(\frac{1}{\delta}\right) \right)$ to achieve the spectral approximation in \cref{eq:spectral_approximation_ntk_features} since $\norm{\X}_2^2 \leq \norm{\X}_F^2 = n$.  This leads us to $\bigo\left( \frac{d}{\varepsilon^2} \log\left(\frac{1}{\delta}\right) + \frac1{\varepsilon^2 \delta}\right)$ feature dimension which is nearly linear in the input dimension $d$.

The current proof technique cannot be used to generalize the result in \cref{thm:ntk_spectral} to deeper networks (i.e., $L\geq 2$). For the proof to work, one needs a monotone property of arc-cosine kernels, i.e., $F_1(\X) \preceq F_1(\Y)$ for $\X \preceq \Y$. However, this property does not hold in general. Thus, we leave the extension to deeper networks to future work.

\section{Experiments} \label{sec:experiments}

\begin{figure}[t]
	\centering
	\begin{subfigure}{0.3\textwidth}
		\includegraphics[width=\textwidth]{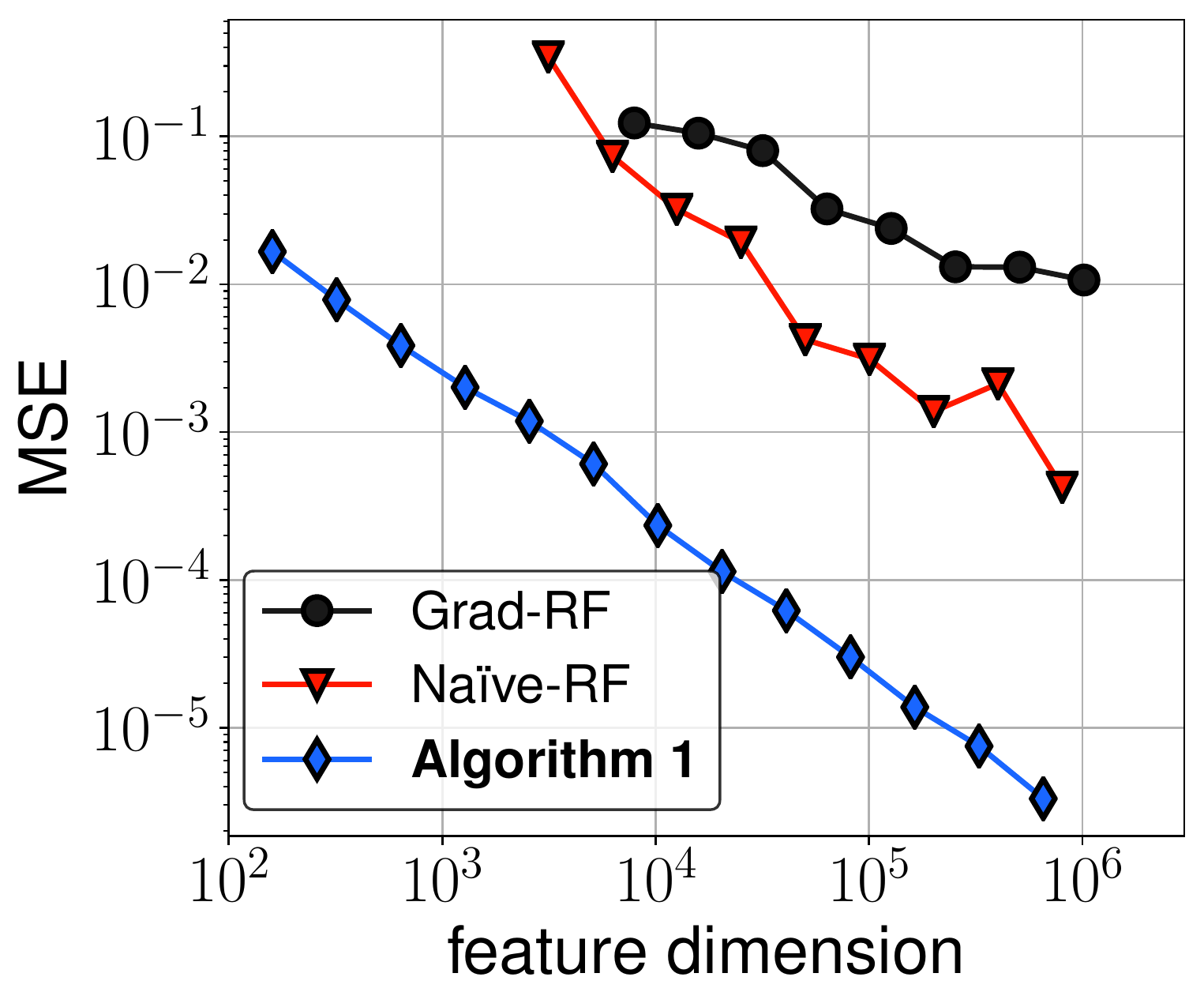}
		\caption{$L=1$}
	\end{subfigure}
	\begin{subfigure}{0.3\textwidth}
		\includegraphics[width=\textwidth]{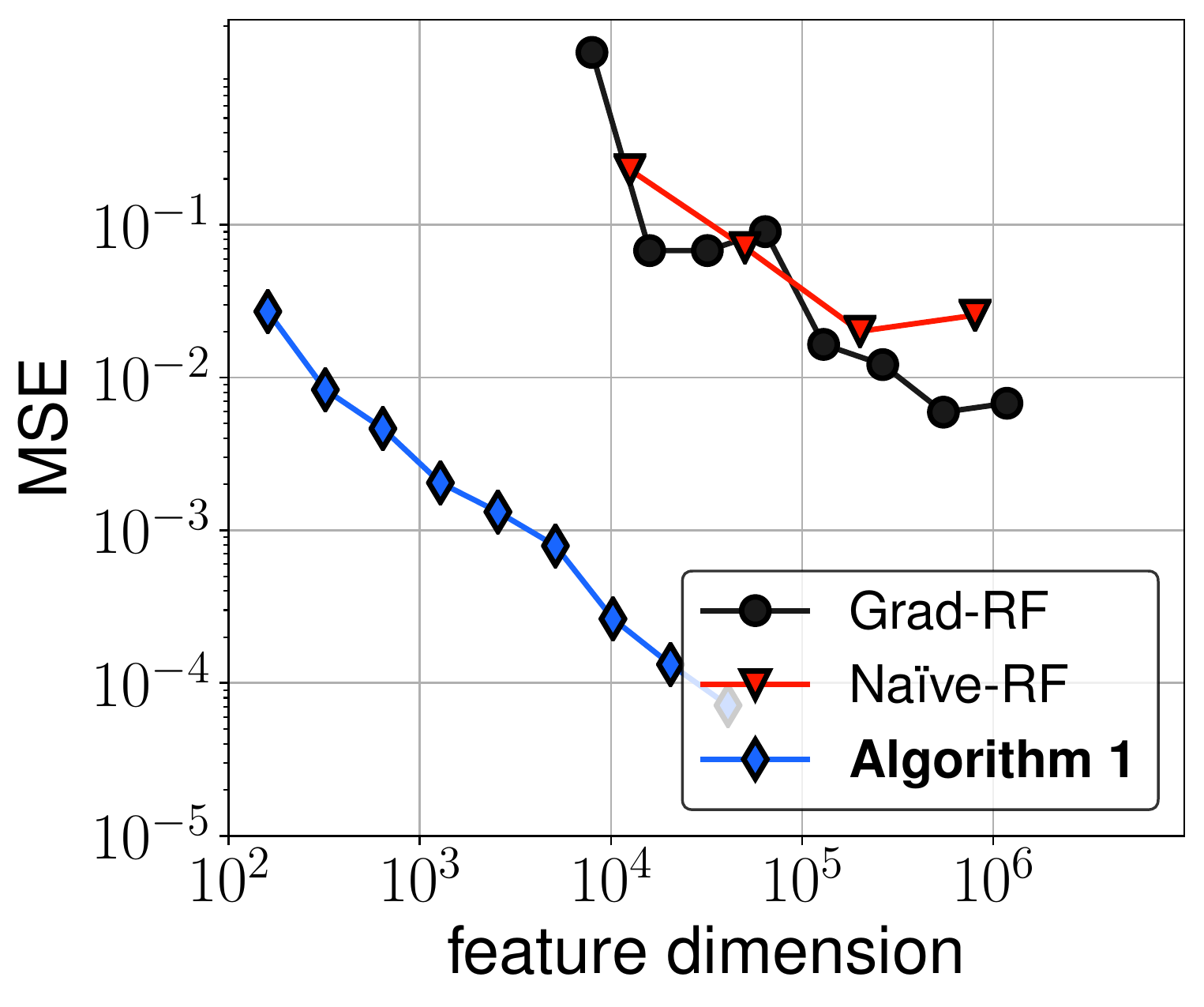}
		\caption{$L=2$}
	\end{subfigure}
	\begin{subfigure}{0.3\textwidth}
		\includegraphics[width=\textwidth]{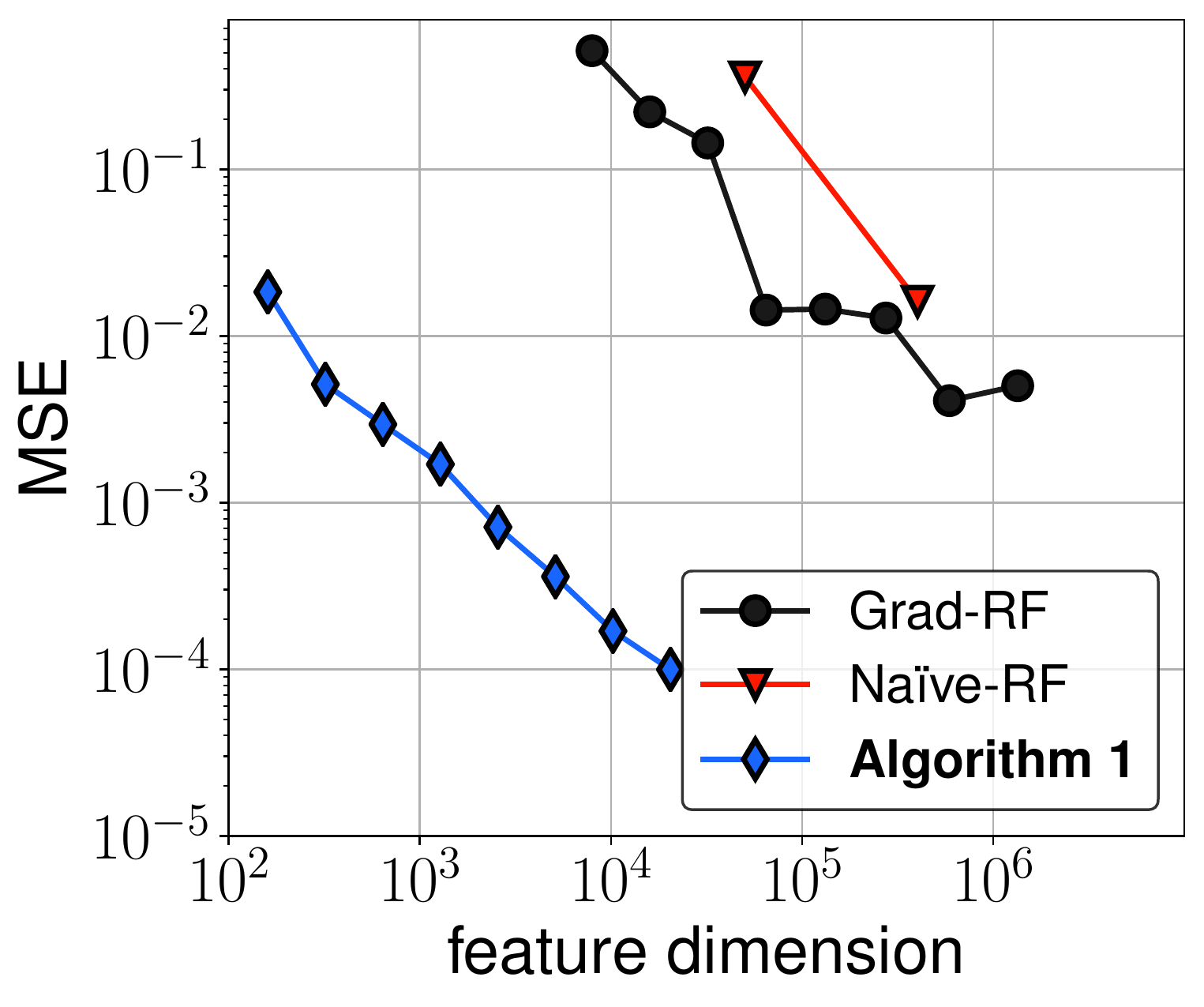}
		\caption{$L=3$}
	\end{subfigure}
	\vspace{-0.1in}
	\caption{Mean squared error of entries in NTK matrix approximated by (1) gradient of randomly initialized networks (Grad-RF), (2) the naive feature map construction in~\cite{bietti2019inductive} (Na\"ive-RF) and our algorithm based on a sketching method under subsampled MNIST dataset.} \label{fig:mse_vs_featdim}
	\vspace{-0.15in}
\end{figure}

In this section, we provide experimental results of our method on kernel approximation and various kernel learning tasks including classification, regression and active learning. 

\subsection{Kernel Approximation on MNIST Dataset} \label{sec:kernel-approx}

We first explore \cref{alg:ntk_random_features} for approximating the NTK matrices. We compare to gradient-based NTK random features~\cite{arora2019exact} (Grad-RF) and the na\"ive random features without sketching~\cite{bietti2019inductive} (Na\"ive-RF) as baseline methods. To compute the exact NTK, we randomly choose $n=10{,}000$ data samples from MNIST dataset and evaluate the mean squared error (MSE) of all approximate entries in NTK. We use the ReLU network with depths $L = 1, 2, 4$. 
For Grad-RF, 
we use an implementation proposed by~\citet{novak2020neural}.\footnote{\url{https://github.com/google/neural-tangents}}
In particular, it returns an approximate NTK matrix rather than random features because the dimension of features can be larger than $n$ which loses the computational gain of random features.
For example, gradient of a two-layer, 16-width ReLU network for MNIST has $127{,}040$ dimension. We vary the network width in $\{2^1 \dots, 2^7\}$. For fair dimension comparisons, we report the expected dimension of Grad-RF.  
For Na\"ive-RF, we set $m_0 = m_1 $ and $m_0 \in \{ 2^1, \dots, 2^7\}$.
For our method, we set $m_0 = m_1 = m - m_{\mathtt{cs}}$ and $m_\mathtt{cs} \in \{ \frac{m}{10}, \dots, \frac{9m}{10}\}$ for each $m \in \{ 10\cdot 2^4, \dots, 10 \cdot 2^{16}\}$ and report the average MSE of $9$ different values of $m_\mathtt{cs}$. We omit to report the result when memory overflow causes.

In \cref{fig:mse_vs_featdim}, we observe that our random features achieves the lowest MSE for the same dimension compared to other competitors. The Grad-RF is the worst method and this corresponds to observations reported in \citet{arora2019exact}, i.e., gradient features from a finite width network can degrade practical performances.
As the number of layers $L$ increases, the performance gaps between Na\"ive-RF and other methods become large because its dimension grows exponential in $L$.

\subsection{Classification on Small-scale UCI Datasets}

Next, we run our algorithm under 90 small-scale UCI classification datasets. The number of data points $n$ varies from $10$ to $5{,}000$.  We choose hyperparameters using validation data and evaluate the test accuracy using 4-fold cross-validation provided in~\citet{fernandez2014we}.\footnote{\url{http://persoal.citius.usc.es/manuel.fernandez.delgado/papers/jmlr/data.tar.gz}} We also consider the following additional metrics used in~\citet{arora2019harnessing, geifman2020similarity}; P90 and P95 are the ratios of datasets where a classifier reaches at least 90\% and 95\% of the maximum accuracy, and PMA (percentage of the maximum accuracy) is the average ratio its accuracy to the maximum among 90 datasets.

We run \cref{alg:ntk_random_features} with and without Gibbs sampling (GS) (i.e., \cref{alg:gibbs}) where the number of Gibbs iteration is set to $T=1$ throughout all experiments. We also execute various classifiers including AdaBoost, random forest, $k$-nearest neighbors and support vector classifier (SVC). For methods running with SVC, we search the cost value $C$ in $\{2^{-19},\dots, 2^{-3}, 10^{-7},  \dots, 10^3\}$ and choose the best one that achieves the best validation accuracy.
We use the support vector classifier (SVC) for random features methods (ours, RFF).
For methods using SVC, the cost value $C$ is chosen by searching in $\{2^{-19}, \dots, 2^{20} \}$ that achieves the best validation accuracy.  
For our algorithm and RFF, we consider the output dimension $m$ as a hyperparameter. We search $m$ in $\{10, 20, \dots, 100\}$ for datasets with $n \leq 600$ and explore $m$ in $\{20, 40, \dots, 200\}$ for datasets with $n > 600$ that achieves the best validation accuracy. For NTK, the network depth $L$ changes from $1$ to $5$ which is the same setup in \cite{arora2019harnessing, shankar2020neural}.  We also compare test accuracy of fully-connected ReLU network. We explore the network depth in $\{1,2,3,4,5\}$ and width in $\{2^6, \dots, 2^{11}\}$. The ReLU network is trained by Adam optimizer for $100$ epochs with an initial learning rate $0.1$ and cosine annealing is used to schedule learning rate. 

In \cref{tab:uci_classification_test_acc}, the average test accuracy with 95\% confidence interval, P90/95 and PMA are reported. Observe that the NTK achieves the best results while the NTK Random Features with GS is the second best. The NTK Random Features performs better than the Random Fourier Features because the NTK is more appropriate choice compared to the RBF kernel. Finally, our method with GS shows higher accuracy than that without GS. 
  
\begin{table}[t]
\caption{Results of average test accuracy, P90, P95 and PMA (percentage of the maximum accuracy) on 90 UCI classification datasets.  {\bf Bold entries} indicate the best results.} \label{tab:uci_classification_test_acc}
\vspace{-0.05in}
\centering
\scalebox{0.8}{
\begin{tabular}{@{}lccccccc@{}}
\toprule
Method & Test Accuracy (\%) & P90 & P95 & PMA \\ 
\midrule
AdaBoost                     & 76.32 $\pm$ 3.56  &  66.67 & 37.78 & 89.44 \\
Random Forest                & 77.46 $\pm$ 3.75  &  79.31 & 57.47 & 90.84 \\
$k$-Nearest Neighbors        & 76.95 $\pm$ 3.42  &  72.22 & 43.33 & 90.31 \\
Fully-connected ReLU Network & 81.10 $\pm$ 3.11  &  85.56 & 78.89 & 95.33 \\
Polynomial Kernel            & 79.54 $\pm$ 3.41  &  79.31 & 66.67 & 93.31 \\
RBF Kernel                   & 81.79 $\pm$ 2.95  &  91.11 & 75.56 & 95.97 \\
Random Fourier Features      & 81.61 $\pm$ 2.98  &  88.89 & 71.11 & 95.74 \\
\midrule
NTK                          & {\bf 82.24 $\pm$ 2.94}  &  {\bf 92.22} & {\bf 80.00} & {\bf 96.53} \\
\rowcolor{Gray}NTK Random Features          & 81.84 $\pm$ 2.89  &  {\bf 92.22} & 75.56 & 96.16 \\
\rowcolor{Gray}NTK Random Features with GS  & 81.85 $\pm$ 2.98  &  {\bf 92.22} & 75.56 & 96.05 \\
\bottomrule
\end{tabular}
}
\end{table}

\begin{table*}[t]
\caption{Results of the mean squared errors (MSE) and wall-clock time (sec) on large-scale UCI regression datasets. We measure the entire time for solving the kernel ridge regression. {\bf Bold entries} indicate the best MSE or time for each dataset. (-) means the Out-of-Memory error.} \label{tab:large-scale-uci}
\vspace{-0.05in}
\centering
\scalebox{0.8}{
\begin{tabular}{@{}lcccccccc@{}}
\toprule
 & \multicolumn{2}{c}{\MSD} & \multicolumn{2}{c}{\WorkLoads} & \multicolumn{2}{c}{\Protein} & \multicolumn{2}{c}{\SuperConduct} \\ 
\midrule
\# of Training Data $n$ & \multicolumn{2}{c}{467{,}315} & \multicolumn{2}{c}{179{,}585}  & \multicolumn{2}{c}{39{,}617} & \multicolumn{2}{c}{19{,}077}\\
\midrule
                                         & MSE   & Time (s) & MSE & Time (s) & MSE & Time (s) & MSE & Time (s)\\
\midrule
RBF Kernel                               & (-) & (-) & (-) & (-) & 112.82 & 110.2 & 2239.83 & 19.5 \\
Random Fourier Features                  & 108.50 & 159 & 7.05$\times 10^{4}$ & 63.7 & {\bf 81.98} & 14.6 & 1175.13 & {\bf 7.1}\\
\midrule
NTK                                      & (-) & (-) & (-) & (-) & 90.03 & 243 & 513.25 & 51.9 \\
\rowcolor{Gray}{NTK Random Features} & {\bf 80.77} & {\bf 149.7} & 2.43$\times 10^{4}$ & {\bf 50.9} & 90.28 & {\bf 13.9} & 527.58 & 7.9 \\ 
\rowcolor{Gray} NTK Random Features with GS & 80.94 & 168.5 & $\mathbf{2.38\times 10^{4}}$ & 53.0 & 85.99 & 16.1 & {\bf 492.78}  & 12.7 \\
\bottomrule
\end{tabular}
}
\end{table*}

\subsection{Regression on Large-scale UCI Datasets} \label{sec:speed}

We also demonstrate the computational efficiency of our method using $4$ large-scale UCI regression datasets. In particular, we consider kernel ridge regression (KRR) problem.
For a kernel function $K:\R^d \times \R^d \rightarrow \R$, KRR problem can be formulated as
\begin{align}
    y_{\mathrm{test}} = K(\x_{\mathrm{test}}, \X) \left(\K + \lambda n \I\right)^{-1}\y.
\end{align}
where $\X = [\x_1, \dots, \x_n]^\top\in \R^{n \times d}$ is training data, $\y \in \R^n$ is training label, $\x_{\mathrm{test}}\in \R^d$ is a test data, $y_{\mathrm{test}}$ is a predicted label, $[\K]_{ij} = K(\x_i,\x_j)$ and $K(\x_{\mathrm{test}},\X) = [K(\x_{\mathrm{test}}, \x_1), \dots, K(\x_{\mathrm{test}}, \x_n)] \in \R^{1 \times n}$. Note that solving the problem can require $\bigo(n^3)$ time complexity due to the matrix inversion in general.  Consider a feature map $\BPhi:\R^d \rightarrow \R^m$ can approximate the kernel $K$ such that $K(\x,\x) \approx \hat{K}(\x,\x) = \inner{\BPhi(\x), \BPhi\x)}$. Then, the computation can be amortized as
$ y_{\mathrm{test}}  = \BPhi(\x_{\mathrm{test}})^\top \left( \BPhi(\X)^\top \BPhi(\X) + \lambda n \I_m\right)^{-1} \BPhi(\X)^\top \y$
which requires $\bigo(nm^2 + m^3)$ time to run. When $n \gg m$, this is much efficient than solving the problem with the exact kernel.

We compare our methods to NTK, RBF and RFF. For ours and RFF, we choose the output dimension to $m=10{,}000$ for all datasets, which is much smaller than the number of data samples $n$.
In \cref{tab:large-scale-uci}, we report the wall-clock times and mean squared errors (MSE) of test prediction. We face Out-of-Memory errors when running kernel methods using \MSD{} and \WorkLoads{} datasets. Observe that our random features are significantly faster than NTK, e.g., up to $\times$17 speedup for \Protein{} dataset, without performance loss. We also verify that the NTK features achieve lower MSE than RFF only for \Protein{} but it outperforms with a huge gap for the rest of the datasets.

\subsection{Active Learning on MNIST Dataset}

We finally apply the proposed method to {\em active learning} using {\sc MNIST} dataset. 
The goal is to select training data of fixed size $k$ that maximizes the performance. 
Recently, \citet{shoham2020experimental} suggested an active learning strategy based on the NTK. They propose a novel criteria that can be an upper bound of statistical risk for general kernel learning and present an algorithm that greedily minimizes this criteria with the NTK. Their greedy process begins with an empty set and iteratively appends singleton that minimizes the proposed risk bound. It takes $\bigo(n^2 k^2)$ time to obtain $k$ data points which equals to the budget size to acquire labels which can be prohibitive if $n$ is large.


\begin{figure}[h]
\centering
\includegraphics[width=0.4\textwidth]{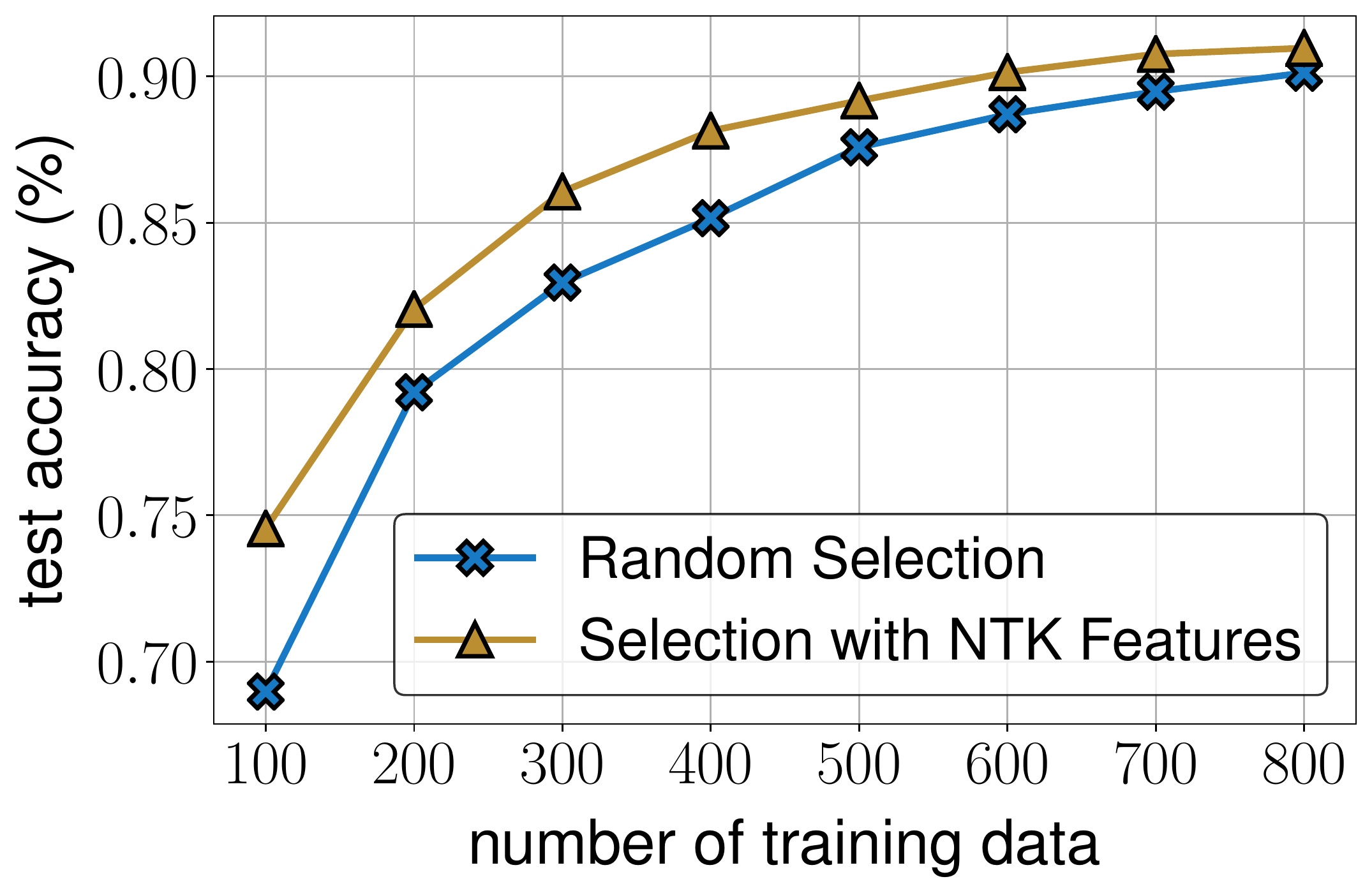}
\vspace{-0.1in}
\caption{Test accuracy (\%) of random selection versus greedy minimization of statistical bound using NTK random features. 
\label{fig:random_features_oed}} 
\end{figure}

Motivated by this, we apply the proposed NTK random features to their greedy algorithm that can improve the running time. 
Recall that our random features builds an approximation $\K^\prime = \BPhi \BPhi^\top \approx \K$ where $\BPhi \in\R^{n \times m}$. Under certain parameter regimes, the low rank structure of $\K^\prime$ can be used to implement a faster version of the greedy algorithm. Specifically, after $\bigo(nm^2)$ preprocessing,  the cost per iteration of the greedy algorithm can be reduced to $\bigo(nmj^2)$, and the cost of finding the design of size $k$ to $\bigo(n m (k^3 + m))$. We provide more details in the supplementary material.

\cref{fig:random_features_oed} illustrates performance of greedy minimization using our NTK random features compared to randomly generated designs under {\sc MNIST} dataset. We use a $4$-layer fully-connected ReLU network with width $1{,}000$ and the dimension of NTK random features is $m=10{,}000$. 
We clearly see that using the random NTK features we can generate much better designs than randomly chosen data points.  It justifies that our random features plays a crucial role for active learning tasks. We expect that the proposed method can be applied to various machine learning applications with remarkable performance and computation gains.

\section{Conclusion}
In this work, we propose an efficient algorithm for generating random features of the Neural Tangent Kernel (NTK). 
We utilize \TS transform combined with the arc-cosine random features with an importance sampling. We also provide spectral approximation bound to the NTK with layer $2$. Our experiments validate the effectiveness of the proposed methods. We believe that our method would be a broad interest both in theoretical and practical domains.

\bibliographystyle{icml2021}
\bibliography{references}

\begin{thebibliography}{44}
\providecommand{\natexlab}[1]{#1}
\providecommand{\url}[1]{\texttt{#1}}
\expandafter\ifx\csname urlstyle\endcsname\relax
  \providecommand{\doi}[1]{doi: #1}\else
  \providecommand{\doi}{doi: \begingroup \urlstyle{rm}\Url}\fi

\bibitem[Ahle et~al.(2020)Ahle, Kapralov, Knudsen, Pagh, Velingker, Woodruff,
  and Zandieh]{ahle2020oblivious}
Ahle, T.~D., Kapralov, M., Knudsen, J.~B., Pagh, R., Velingker, A., Woodruff,
  D.~P., and Zandieh, A.
\newblock {\href{https://arxiv.org/pdf/1909.01410.pdf}{Oblivious sketching of
  high-degree polynomial kernels}}.
\newblock In \emph{Symposium on Discrete Algorithms (SODA)}, 2020.

\bibitem[Allen-Zhu et~al.(2019)Allen-Zhu, Li, and Song]{allen2019convergence}
Allen-Zhu, Z., Li, Y., and Song, Z.
\newblock
  {\href{http://proceedings.mlr.press/v97/allen-zhu19a/allen-zhu19a.pdf}{A
  convergence theory for deep learning via over-parameterization}}.
\newblock In \emph{International Conference on Machine Learning (ICML)}, 2019.

\bibitem[Arora et~al.(2019{\natexlab{a}})Arora, Du, Hu, Li, and
  Wang]{arora2019fine}
Arora, S., Du, S., Hu, W., Li, Z., and Wang, R.
\newblock
  {\href{http://proceedings.mlr.press/v97/arora19a/arora19a.pdf}{Fine-grained
  analysis of optimization and generalization for overparameterized two-layer
  neural networks}}.
\newblock In \emph{International Conference on Machine Learning (ICML)},
  2019{\natexlab{a}}.

\bibitem[Arora et~al.(2019{\natexlab{b}})Arora, Du, Hu, Li, Salakhutdinov, and
  Wang]{arora2019exact}
Arora, S., Du, S.~S., Hu, W., Li, Z., Salakhutdinov, R.~R., and Wang, R.
\newblock {\href{https://arxiv.org/pdf/1904.11955.pdf}{On exact computation
  with an infinitely wide neural net}}.
\newblock In \emph{Neural Information Processing Systems (NeurIPS)},
  2019{\natexlab{b}}.

\bibitem[Arora et~al.(2019{\natexlab{c}})Arora, Du, Li, Salakhutdinov, Wang,
  and Yu]{arora2019harnessing}
Arora, S., Du, S.~S., Li, Z., Salakhutdinov, R., Wang, R., and Yu, D.
\newblock {\href{https://arxiv.org/pdf/1910.01663.pdf}{Harnessing the Power of
  Infinitely Wide Deep Nets on Small-data Tasks}}.
\newblock In \emph{International Conference on Learning Representations
  (ICLR)}, 2019{\natexlab{c}}.

\bibitem[Avron et~al.(2014)Avron, Nguyen, and Woodruff]{avron2014subspace}
Avron, H., Nguyen, H., and Woodruff, D.
\newblock
  {\href{https://papers.nips.cc/paper/2014/file/b571ecea16a9824023ee1af16897a582-Paper.pdf}{Subspace
  embeddings for the polynomial kernel}}.
\newblock In \emph{Neural Information Processing Systems (NeurIPS)}, 2014.

\bibitem[Avron et~al.(2017{\natexlab{a}})Avron, Clarkson, and
  Woodruff]{avron2017faster}
Avron, H., Clarkson, K.~L., and Woodruff, D.~P.
\newblock {\href{https://epubs.siam.org/doi/abs/10.1137/16M1105396}{Faster
  Kernel Ridge Regression Using Sketching and Preconditioning}}.
\newblock In \emph{SIAM Journal on Matrix Analysis and Applications (SIMAX)},
  2017{\natexlab{a}}.

\bibitem[Avron et~al.(2017{\natexlab{b}})Avron, Kapralov, Musco, Musco,
  Velingker, and Zandieh]{avron2017random}
Avron, H., Kapralov, M., Musco, C., Musco, C., Velingker, A., and Zandieh, A.
\newblock {\href{https://arxiv.org/pdf/1804.09893.pdf}{Random Fourier features
  for kernel ridge regression: Approximation bounds and statistical
  guarantees}}.
\newblock In \emph{International Conference on Machine Learning (ICML)},
  2017{\natexlab{b}}.

\bibitem[Bietti \& Mairal(2019)Bietti and Mairal]{bietti2019inductive}
Bietti, A. and Mairal, J.
\newblock {\href{https://arxiv.org/pdf/1905.12173.pdf}{On the inductive bias of
  neural tangent kernels}}.
\newblock In \emph{Neural Information Processing Systems (NeurIPS)}, 2019.

\bibitem[Bossard et~al.(2014)Bossard, Guillaumin, and
  Van~Gool]{bossard2014food}
Bossard, L., Guillaumin, M., and Van~Gool, L.
\newblock
  {\href{https://link.springer.com/chapter/10.1007/978-3-319-10599-4_29}{Food-101--mining
  discriminative components with random forests}}.
\newblock In \emph{Proceedings of the European Conference on Computer
  Vision(ECCV)}, 2014.

\bibitem[Cao \& Gu(2019)Cao and Gu]{cao2019generalization}
Cao, Y. and Gu, Q.
\newblock {\href{https://arxiv.org/pdf/1905.13210.pdf}{Generalization bounds of
  stochastic gradient descent for wide and deep neural networks}}.
\newblock In \emph{Neural Information Processing Systems (NeurIPS)}, 2019.

\bibitem[Charikar et~al.(2002)Charikar, Chen, and
  Farach-Colton]{charikar2002finding}
Charikar, M., Chen, K., and Farach-Colton, M.
\newblock
  {\href{https://link.springer.com/chapter/10.1007/3-540-45465-9_59}{Finding
  frequent items in data streams}}.
\newblock In \emph{International Colloquium on Automata, Languages, and
  Programming (ICAMP)}, 2002.

\bibitem[Chen \& Xu(2021)Chen and Xu]{chen2020deep}
Chen, L. and Xu, S.
\newblock {\href{https://arxiv.org/pdf/2009.10683.pdf}{Deep neural tangent
  kernel and laplace kernel have the same RKHS}}.
\newblock In \emph{International Conference on Learning Representations
  (ICLR)}, 2021.

\bibitem[Chizat et~al.(2019)Chizat, Oyallon, and Bach]{chizat2019lazy}
Chizat, L., Oyallon, E., and Bach, F.
\newblock {\href{https://arxiv.org/pdf/1812.07956.pdf}{On lazy training in
  differentiable programming}}.
\newblock In \emph{Neural Information Processing Systems (NeurIPS)}, 2019.

\bibitem[Cho \& Saul(2009)Cho and Saul]{cho2009kernel}
Cho, Y. and Saul, L.
\newblock {\href{https://cseweb.ucsd.edu/~saul/papers/nips09_kernel.pdf}{Kernel
  methods for deep learning}}.
\newblock In \emph{Neural Information Processing Systems (NeurIPS)}, 2009.

\bibitem[Daniely et~al.(2016)Daniely, Frostig, and Singer]{daniely2016toward}
Daniely, A., Frostig, R., and Singer, Y.
\newblock {\href{https://arxiv.org/pdf/1602.05897.pdf}{Toward deeper
  understanding of neural networks: The power of initialization and a dual view
  on expressivity}}.
\newblock In \emph{Neural Information Processing Systems (NeurIPS)}, 2016.

\bibitem[Du et~al.(2019)Du, Zhai, Poczos, and Singh]{du2018gradient}
Du, S.~S., Zhai, X., Poczos, B., and Singh, A.
\newblock {\href{https://openreview.net/pdf?id=S1eK3i09YQ}{Gradient descent
  provably optimizes over-parameterized neural networks}}.
\newblock In \emph{International Conference on Learning Representations
  (ICLR)}, 2019.

\bibitem[Everingham et~al.(2010)Everingham, Van~Gool, Williams, Winn, and
  Zisserman]{everingham2010pascal}
Everingham, M., Van~Gool, L., Williams, C.~K., Winn, J., and Zisserman, A.
\newblock
  {\href{https://link.springer.com/article/10.1007\%252Fs11263-009-0275-4}{The
  pascal visual object classes (voc) challenge}}.
\newblock \emph{International Journal of Computer Vision}, 2010.

\bibitem[Fan \& Wang(2020)Fan and Wang]{fan2020spectra}
Fan, Z. and Wang, Z.
\newblock {\href{https://arxiv.org/pdf/2005.11879.pdf}{Spectra of the conjugate
  kernel and neural tangent kernel for linear-width neural networks}}.
\newblock In \emph{Neural Information Processing Systems (NeurIPS)}, 2020.

\bibitem[Fei-Fei et~al.(2004)Fei-Fei, Fergus, and Perona]{fei2004learning}
Fei-Fei, L., Fergus, R., and Perona, P.
\newblock
  {\href{https://ieeexplore.ieee.org/stamp/stamp.jsp?tp=&arnumber=1384978}{Learning
  generative visual models from few training examples: An incremental bayesian
  approach tested on 101 object categories}}.
\newblock In \emph{Proceedings of the IEEE Conference on Computer Vision and
  Pattern Recognition Workshop}, 2004.

\bibitem[Fern{\'a}ndez-Delgado et~al.(2014)Fern{\'a}ndez-Delgado, Cernadas,
  Barro, and Amorim]{fernandez2014we}
Fern{\'a}ndez-Delgado, M., Cernadas, E., Barro, S., and Amorim, D.
\newblock {\href{https://jmlr.org/papers/volume15/delgado14a/delgado14a.pdf}{Do
  we need hundreds of classifiers to solve real world classification
  problems?}}
\newblock In \emph{Journal of Machine Learning Research (JMLR)}, 2014.

\bibitem[Geifman et~al.(2020)Geifman, Yadav, Kasten, Galun, Jacobs, and
  Basri]{geifman2020similarity}
Geifman, A., Yadav, A., Kasten, Y., Galun, M., Jacobs, D., and Basri, R.
\newblock {\href{https://arxiv.org/pdf/2007.01580.pdf}{On the similarity
  between the laplace and neural tangent kernels}}.
\newblock In \emph{Neural Information Processing Systems (NeurIPS)}, 2020.

\bibitem[Goyal et~al.(2019)Goyal, Mahajan, Gupta, and Misra]{goyal2019scaling}
Goyal, P., Mahajan, D., Gupta, A., and Misra, I.
\newblock
  {\href{https://openaccess.thecvf.com/content_ICCV_2019/papers/Goyal_Scaling_and_Benchmarking_Self-Supervised_Visual_Representation_Learning_ICCV_2019_paper.pdf}{Scaling
  and benchmarking self-supervised visual representation learning}}.
\newblock In \emph{International Conference on Computer Vision (ICCV)}, 2019.

\bibitem[Han et~al.(2020)Han, Avron, and Shin]{han2020polynomial}
Han, I., Avron, H., and Shin, J.
\newblock
  {\href{http://proceedings.mlr.press/v119/han20a/han20a.pdf}{Polynomial Tensor
  Sketch for Element-wise Function of Low-Rank Matrix}}.
\newblock In \emph{International Conference on Machine Learning (ICML)}, 2020.

\bibitem[He et~al.(2016)He, Zhang, Ren, and Sun]{he2016deep}
He, K., Zhang, X., Ren, S., and Sun, J.
\newblock {\href{https://arxiv.org/pdf/1512.03385.pdf}{Deep residual learning
  for image recognition}}.
\newblock In \emph{Proceedings of the IEEE Conference on Computer Vision and
  Pattern Recognition}, 2016.

\bibitem[Jacot et~al.(2018)Jacot, Gabriel, and Hongler]{jacot2018neural}
Jacot, A., Gabriel, F., and Hongler, C.
\newblock {\href{https://arxiv.org/pdf/1806.07572.pdf}{Neural tangent kernel:
  Convergence and generalization in neural networks}}.
\newblock In \emph{Neural Information Processing Systems (NeurIPS)}, 2018.

\bibitem[Jacot et~al.(2020)Jacot, Simsek, Spadaro, Hongler, and
  Gabriel]{jacot2020implicit}
Jacot, A., Simsek, B., Spadaro, F., Hongler, C., and Gabriel, F.
\newblock {\href{https://arxiv.org/pdf/2002.08404.pdf}{Implicit regularization
  of random feature models}}.
\newblock In \emph{International Conference on Machine Learning (ICML)}, 2020.

\bibitem[Khosla et~al.(2011)Khosla, Jayadevaprakash, Yao, and
  Li]{khosla2011novel}
Khosla, A., Jayadevaprakash, N., Yao, B., and Li, F.-F.
\newblock
  {\href{https://citeseerx.ist.psu.edu/viewdoc/download?doi=10.1.1.255.6394&rep=rep1&type=pdf}{Novel
  dataset for fine-grained image categorization: Stanford dogs}}.
\newblock In \emph{Proceedings of CVPR Workshop on Fine-Grained Visual
  Categorization (FGVC)}, 2011.

\bibitem[Krizhevsky(2009)]{krizhevsky2009learning}
Krizhevsky, A.
\newblock {\href{https://www.cs.toronto.edu/~kriz/cifar.html}{Learning multiple
  layers of features from tiny images}}.
\newblock 2009.

\bibitem[Lee et~al.(2020)Lee, Shen, Song, Wang, and Yu]{lee2020generalized}
Lee, J., Shen, R., Song, Z., Wang, M., and Yu, Z.
\newblock {\href{https://arxiv.org/pdf/2009.09829.pdf}{Generalized Leverage
  Score Sampling for Neural Networks}}.
\newblock In \emph{Neural Information Processing Systems (NeurIPS)}, 2020.

\bibitem[Mei et~al.(2018)Mei, Montanari, and Nguyen]{mei2018mean}
Mei, S., Montanari, A., and Nguyen, P.-M.
\newblock {\href{https://arxiv.org/pdf/1804.06561.pdf}{A mean field view of the
  landscape of two-layer neural networks}}.
\newblock \emph{Proceedings of the National Academy of Sciences}, 2018.

\bibitem[Musco \& Musco(2017)Musco and Musco]{musco2017recursive}
Musco, C. and Musco, C.
\newblock {\href{https://arxiv.org/pdf/1605.07583.pdf}{Recursive Sampling for
  the Nystr\"om Method}}.
\newblock In \emph{Neural Information Processing Systems (NeurIPS)}, 2017.

\bibitem[Neyshabur et~al.(2019)Neyshabur, Tomioka, and
  Srebro]{neyshabur2014search}
Neyshabur, B., Tomioka, R., and Srebro, N.
\newblock {\href{https://arxiv.org/pdf/1412.6614.pdf}{In search of the real
  inductive bias: On the role of implicit regularization in deep learning}}.
\newblock In \emph{International Conference on Learning Representations
  (ICLR)}, 2019.

\bibitem[Nilsback \& Zisserman(2008)Nilsback and
  Zisserman]{nilsback2008automated}
Nilsback, M.-E. and Zisserman, A.
\newblock
  {\href{https://ieeexplore.ieee.org/stamp/stamp.jsp?arnumber=4756141}{Automated
  flower classification over a large number of classes}}.
\newblock In \emph{2008 Sixth Indian Conference on Computer Vision, Graphics \&
  Image Processing}, 2008.

\bibitem[Novak et~al.(2020)Novak, Xiao, Hron, Lee, Alemi, Sohl-Dickstein, and
  Schoenholz]{novak2020neural}
Novak, R., Xiao, L., Hron, J., Lee, J., Alemi, A.~A., Sohl-Dickstein, J., and
  Schoenholz, S.~S.
\newblock {\href{https://arxiv.org/pdf/1912.02803.pdf}{Neural tangents: Fast
  and easy infinite neural networks in python}}.
\newblock In \emph{International Conference on Learning Representations
  (ICLR)}, 2020.

\bibitem[Pennington et~al.(2015)Pennington, Yu, and
  Kumar]{pennington2015spherical}
Pennington, J., Yu, F. X.~X., and Kumar, S.
\newblock
  {\href{https://papers.nips.cc/paper/5943-spherical-random-features-for-polynomial-kernels.pdf}{Spherical
  random features for polynomial kernels}}.
\newblock In \emph{Neural Information Processing Systems (NeurIPS)}, 2015.

\bibitem[Pham \& Pagh(2013)Pham and Pagh]{pham2013fast}
Pham, N. and Pagh, R.
\newblock {\href{http://chbrown.github.io/kdd-2013-usb/kdd/p239.pdf}{Fast and
  scalable polynomial kernels via explicit feature maps}}.
\newblock In \emph{Conference on Knowledge Discovery and Data Mining (KDD)},
  2013.

\bibitem[Rahimi \& Recht(2009)Rahimi and Recht]{rahimi2007random}
Rahimi, A. and Recht, B.
\newblock
  {\href{https://people.eecs.berkeley.edu/~brecht/papers/07.rah.rec.nips.pdf}{Random
  Features for Large-Scale Kernel Machines}}.
\newblock In \emph{Neural Information Processing Systems (NeurIPS)}, 2009.

\bibitem[Rudi \& Rosasco(2016)Rudi and Rosasco]{rudi2016generalization}
Rudi, A. and Rosasco, L.
\newblock
  {\href{https://papers.nips.cc/paper/2015/file/f3f27a324736617f20abbf2ffd806f6d-Paper.pdf}{Generalization
  properties of learning with random features}}.
\newblock In \emph{Neural Information Processing Systems (NeurIPS)}, 2016.

\bibitem[Shankar et~al.(2020)Shankar, Fang, Guo, Fridovich-Keil, Ragan-Kelley,
  Schmidt, and Recht]{shankar2020neural}
Shankar, V., Fang, A., Guo, W., Fridovich-Keil, S., Ragan-Kelley, J., Schmidt,
  L., and Recht, B.
\newblock
  {\href{http://proceedings.mlr.press/v119/shankar20a/shankar20a.pdf}{Neural
  kernels without tangents}}.
\newblock In \emph{International Conference on Machine Learning (ICML)}, 2020.

\bibitem[Shoham \& Avron(2020)Shoham and Avron]{shoham2020experimental}
Shoham, N. and Avron, H.
\newblock {\href{https://arxiv.org/pdf/2009.12820.pdf}{Experimental Design for
  Overparameterized Learning with Application to Single Shot Deep Active
  Learning}}.
\newblock In \emph{arXiv preprint arXiv:2009.12820}, 2020.

\bibitem[Welinder et~al.(2010)Welinder, Branson, Mita, Wah, Schroff, Belongie,
  and Perona]{WelinderEtal2010}
Welinder, P., Branson, S., Mita, T., Wah, C., Schroff, F., Belongie, S., and
  Perona, P.
\newblock
  {\href{http://www.vision.caltech.edu/visipedia/papers/WelinderEtal10_CUB-200.pdf}{Caltech-UCSD
  Birds 200}}.
\newblock Technical Report CNS-TR-2010-001, California Institute of Technology,
  2010.

\bibitem[Woodruff \& Zandieh(2020)Woodruff and Zandieh]{woodruff2020near}
Woodruff, D.~P. and Zandieh, A.
\newblock {\href{https://arxiv.org/pdf/2007.03927.pdf}{Near Input Sparsity Time
  Kernel Embeddings via Adaptive Sampling}}.
\newblock In \emph{International Conference on Machine Learning (ICML)}, 2020.

\bibitem[Zancato et~al.(2020)Zancato, Achille, Ravichandran, Bhotika, and
  Soatto]{zancato2020predicting}
Zancato, L., Achille, A., Ravichandran, A., Bhotika, R., and Soatto, S.
\newblock
  {\href{https://proceedings.neurips.cc/paper/2020/file/440e7c3eb9bbcd4c33c3535354a51605-Paper.pdf}{Predicting
  training time without training}}.
\newblock In \emph{Neural Information Processing Systems (NeurIPS)}, 2020.

\end{thebibliography}

\appendix

\newcommand{\KNTK}{\boldsymbol \Theta}
\newcommand{\KNNGP}{\boldsymbol \Sigma}

\section{Proof of Theorems}

\subsection{Proof of \cref{thm:ntk-random-features-error}} \label{sec:proof-ntk-random-features-error}
\randomfeatureserror*

{\it Proof of \cref{thm:ntk-random-features-error}.}
For fixed $\x, \x^\prime \in \mathbb{S}^{d-1}$ and $\ell = 0, \dots, L$, we denote the estimate error as
\begin{align*}
\Delta_{\ell} := 
\max_{(\x_1, \x_2) \in \{ (\x,\x^\prime), (\x,\x), (\x^\prime, \x^\prime)\}}
\abs{\inner{\BPhi^{(\ell)}(\x_1), \BPhi^{(\ell)}(\x_2)} - K_{\NTK}^{(\ell)}(\x_1,\x_2)}
\end{align*}
and note that $\Delta_0 = 0$. Recall that
\begin{align}
    K_{\NTK}^{(\ell)}(\x,\x^\prime) = K_{\NNGP}^{(\ell)}(\x,\x^\prime) + \dot{K}_{\NNGP}^{(\ell)}(\x,\x^\prime)  \cdot {K}_{\NTK}^{(\ell-1)}(\x,\x^\prime)
\end{align}
where
\begin{align}
\dot{K}_{\NTK}^{(\ell)}(\x,\x^\prime) &:= 1 - \frac{1}{\pi} \cos^{-1} \left(\frac{K_{\NNGP}^{(\ell-1)}(\x,\x^\prime)}{\sqrt{K_{\NNGP}^{(\ell-1)}(\x,\x) K_{\NNGP}^{(\ell-1)}(\x^\prime,\x^\prime)}} \right) \\
\dot{K}_{\NNGP}^{(\ell)}(\x,\x^\prime) &:= f\left( \frac{K_{\NNGP}^{(\ell-1)}(\x,\x^\prime)}{\sqrt{K_{\NNGP}^{(\ell-1)}(\x,\x^\prime) K_{\NNGP}^{(\ell-1)}(\x^\prime,\x^\prime)}} \right)
\end{align}
and $f(x) = \frac1{\pi}\left( \sqrt{1-x^2} + (\pi- \cos^{-1}(x))x\right)$ for $x \in [-1,1]$. 

We use the recursive relation to approximate:
\begin{align*}
\inner{ \BPhi^{(\ell)}(\x), \BPhi^{(\ell)}(\x^\prime)} &= \inner{ \BPsi^{(\ell)}(\x), \BPsi^{(\ell)}(\x^\prime)} + \inner{ \BGamma^{(\ell)}(\x), \BGamma^{(\ell)}(\x^\prime)}\\
&\approx \inner{ \BPsi^{(\ell)}(\x), \BPsi^{(\ell)}(\x^\prime)} + 
\inner{ \BLambda^{(\ell)}(\x) \otimes \BPhi^{(\ell-1)}(\x), 
\BLambda^{(\ell)}(\x^\prime) \otimes \BPhi^{(\ell-1)}(\x^\prime)
} \\
&\approx  K_\NNGP^{(\ell)}(\x,\x^\prime) +  \dot{K}_\NNGP^{(\ell)}(\x,\x^\prime) \cdot  K_\NTK^{(\ell-1)}(\x,\x^\prime) = K^{(\ell)}_{\NTK}(\x,\x^\prime).
\end{align*}

For notational simplicity, we define the following events:
\begin{align}
\mathcal{E}_{\BPsi}^{(\ell)} (\x, \x^\prime, \varepsilon) &:= 
	\left\{
		\abs{ \inner{\BPsi^{(\ell)}(\x),\BPsi^{(\ell)}(\x^\prime)} - K_\NNGP^{(\ell)}(\x,\x^\prime) } \leq \varepsilon	
	\right\}, \\
\mathcal{E}_{\BLambda}^{(\ell)} (\x, \x^\prime, \varepsilon) &:= 
    \left\{
        \abs{ \inner{\BLambda^{(\ell)}(\x),\BLambda^{(\ell)}(\x^\prime)} - \dot{K}_{\NNGP}^{(\ell)}(\x,\x^\prime) } \leq \varepsilon	
    \right\}, \\
\mathcal{E}_{\BGamma}^{(\ell)} (\x, \x^\prime, \varepsilon) &:= 
    \left\{
        \abs{\inner{\BGamma^{(\ell)}(\x), \BGamma^{(\ell)}(\x^\prime)} - \inner{\BLambda^{(\ell)}(\x) \otimes \BPhi^{(\ell-1)}(\x), \BLambda^{(\ell)}(\x^\prime) \otimes \BPhi^{(\ell-1)}(\x^\prime)}} \leq \varepsilon
    \right\}
\end{align}
and $\mathcal{E}_{\boldsymbol \Omega}^{(\ell)} (\varepsilon) := \mathcal{E}_{\boldsymbol \Omega}^{(\ell)} (\x, \x, \varepsilon) \bigcap \mathcal{E}_{\boldsymbol \Omega}^{(\ell)} (\x, \x^\prime, \varepsilon) \bigcap \mathcal{E}_{\boldsymbol \Omega}^{(\ell)} (\x^\prime, \x^\prime, \varepsilon)$ for ${\boldsymbol \Omega} = \{ \BPsi, \BLambda, \BGamma\}$.  Our proof is based on the following claims:

\begin{restatable}{claim}{claimone} \label{claim1}
There exists a constant $C_1 > 0$ such that if $m_1 \geq C_1 \frac{L^6}{\varepsilon^4} \log\left( \frac{L}{\delta}\right)$ then 
\begin{align*}
	\Pr\left( 
	\mathcal{E}_{\BPsi}^{(\ell)} \left( \frac{\varepsilon^2}{32L^2} \right)
	\right) \geq 1 - \frac{\delta}{3L}.
\end{align*}
\end{restatable}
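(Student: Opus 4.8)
## Proof plan for Claim~\ref{claim1}

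The plan is to bound the NNGP-estimation error $\abs{\inner{\BPsi^{(\ell)}(\x),\BPsi^{(\ell)}(\x^\prime)} - K_\NNGP^{(\ell)}(\x,\x^\prime)}$ by induction on $\ell$, exploiting the recursive structure $\BPsi^{(\ell)}(\x) = a_1(\BPsi^{(\ell-1)}(\x))$. At each layer two sources of error accumulate: (i) the \emph{sampling error} of the arc-cosine random feature map $a_1$ — i.e., $\inner{a_1(\u),a_1(\v)}$ concentrates around $A_1(\u,\v)$ only up to a fluctuation controlled by $m_1$ — and (ii) the \emph{propagated error}, since $a_1$ is applied to $\BPsi^{(\ell-1)}(\x)$ rather than to the exact hidden representation, so we must control $\abs{A_1\big(\BPsi^{(\ell-1)}(\x),\BPsi^{(\ell-1)}(\x^\prime)\big) - F_1\big(K_\NNGP^{(\ell-1)}\big)(\x,\x^\prime)}$ using the inductive hypothesis together with a Lipschitz-type bound on $F_1$ near the sphere. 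First I would record that, since $\norm{\x}_2=1$, all the diagonal quantities $K_\NNGP^{(\ell)}(\x,\x)$ equal $1$, and establish that with high probability the approximate diagonals $\inner{\BPsi^{(\ell)}(\x),\BPsi^{(\ell)}(\x)}$ stay within a small multiplicative factor of $1$; this keeps the normalized inner products $\inner{\BPsi^{(\ell-1)}(\x),\BPsi^{(\ell-1)}(\x^\prime)}/\sqrt{\cdots}$ inside a compact subinterval of $[-1,1]$ where $f$ (and hence $F_1$) is Lipschitz with an absolute constant.

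The key steps, in order: (1) State a single-layer concentration lemma: for fixed unit-ish vectors $\u,\v$, $\Pr\big(\abs{\inner{a_1(\u),a_1(\v)} - A_1(\u,\v)} > t\big) \le 2\exp(-c\, m_1 t^2)$, which follows from Hoeffding/Bernstein since each coordinate of $a_1$ is a bounded-variance i.i.d. term (here I would use $\norm{\u}_2,\norm{\v}_2 = O(1)$, guaranteed by the diagonal-control step). (2) Set the per-layer target accuracy to $\eta := \varepsilon^2/(32L^2)$ and, by a telescoping/union-bound argument over $\ell=1,\dots,L$ and over the three pairs in $\{(\x,\x),(\x,\x^\prime),(\x^\prime,\x^\prime)\}$, show that if each layer's sampling error is at most $\eta/(2L)$ and the diagonal deviations are similarly small, then $\Delta^\NNGP_\ell \le \eta$ after accumulating. (3) Solve for $m_1$: requiring $c\, m_1 (\eta/L)^2 \ge \log(6L^2/\delta)$ gives $m_1 = \Omega\big(L^2 \eta^{-2} \log(L/\delta)\big) = \Omega\big(L^2 (L^2/\varepsilon^2)^2 \log(L/\delta)\big) = \Omega\big(L^6 \varepsilon^{-4}\log(L/\delta)\big)$, which is exactly the stated bound, absorbing constants into $C_1$. (4) Assemble the union bound so the total failure probability is at most $\delta/(3L)$ (the $1/(3L)$ budget here leaves room for the $\BLambda$ and $\BGamma$ events in the companion claims to share the overall $\delta$).

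The main obstacle is the error-propagation step: I must show that feeding the noisy $\BPsi^{(\ell-1)}$ into $a_1$ does not blow up the error super-linearly in $\ell$. This requires two ingredients working together — a uniform Lipschitz bound $\abs{F_1(\K)(\x,\x^\prime) - F_1(\K')(\x,\x^\prime)} \le C\,\norm{\K - \K'}_{\max}$ valid on the region where the normalized entries stay bounded away from $\pm 1$ (which is where controlling the approximate diagonals becomes essential, since $f$ has unbounded derivative as the argument $\to \pm 1$), and the observation that the $F_0$ factor appearing in the NTK recursion is bounded by $1$, so errors propagate additively rather than multiplicatively with a constant $>1$. Handling the diagonal entries carefully — showing $\inner{\BPsi^{(\ell)}(\x),\BPsi^{(\ell)}(\x)}$ concentrates around $1$ and that this forces $\inner{\BPsi^{(\ell-1)}(\x),\BPsi^{(\ell-1)}(\x^\prime)}$ to remain in, say, $[-1+\rho,\,1-\rho]$ for a constant $\rho$ depending only on how far the true kernel is from degenerate — is the delicate bookkeeping that makes the Lipschitz constants absolute and hence keeps the final $m_1$ polynomial in $L$ and $1/\varepsilon$.
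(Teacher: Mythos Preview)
Your plan is sound but takes a substantially longer route than the paper. The paper's proof of this claim is a one-line invocation of Corollary~16 in \citet{daniely2016toward}, which already establishes that for $m_1 \geq C_1 L^2 \varepsilon_1^{-2} \log(L/\delta)$ the NNGP feature map satisfies $\abs{\inner{\BPsi^{(\ell)}(\x_1),\BPsi^{(\ell)}(\x_2)} - K_\NNGP^{(\ell)}(\x_1,\x_2)} \le \varepsilon_1$ with probability $\ge 1-\delta$; substituting $\varepsilon_1 = \varepsilon^2/(32L^2)$ gives the claim immediately. What you propose --- layer-wise concentration of $a_1$ plus Lipschitz error propagation through $F_1$, with diagonal control and a union bound --- is essentially a from-scratch reconstruction of the Daniely--Frostig--Singer argument. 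It buys self-containedness at the price of several pages of bookkeeping, whereas the paper simply outsources this step.

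Two technical corrections if you carry out your plan. First, the single-layer summands $\mathrm{ReLU}(\w_i^\top\u)\,\mathrm{ReLU}(\w_i^\top\v)$ are not bounded (the $\w_i$ are Gaussian), so Hoeffding does not apply; you need a sub-exponential Bernstein bound, which does give the $\exp(-c\, m_1 t^2)$ rate in the small-$t$ regime you need. Second, your concern that ``$f$ has unbounded derivative as the argument $\to\pm 1$'' is misplaced: differentiating $f(x)=\frac{1}{\pi}\big(\sqrt{1-x^2}+(\pi-\cos^{-1}x)x\big)$ gives $f'(x)=1-\frac{1}{\pi}\cos^{-1}(x)\in[0,1]$, so $f$ is globally $1$-Lipschitz on $[-1,1]$ and the propagation constant is at most $1$ without any need to keep normalized inner products away from $\pm 1$. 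The unbounded-derivative issue is real for $F_0$, but $F_0$ does not enter the $\BPsi$ recursion; your remark about the $F_0$ factor belongs in the analysis of $\BLambda$ or the NTK recursion, not here.
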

The proof of \cref{claim1} is provided in \cref{sec:proof-claim1}.

\begin{restatable}{claim}{claimtwo} \label{claim2}
There exists a constant $C_0 > 0$ such that if $m_0 \geq C_0 \frac{L^2}{\varepsilon^2}  \log\left( \frac{L}{\delta} \right)$ then
\begin{align*}
	\Pr \left(
	\mathcal{E}_{\BLambda}^{(\ell)} \left( \frac{3\varepsilon}{8L} \right)
	\bigg|
	\mathcal{E}_{\BPsi}^{(\ell)} \left( \frac{\varepsilon^2}{32L^2} \right)
	\right) \geq 1 - \frac{\delta}{3L}.
\end{align*}
\end{restatable}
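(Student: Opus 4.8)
The goal is to show that the $0$-th order arc-cosine random features $\BLambda^{(\ell)}(\x) = \sqrt{2/m_0}\,\mathrm{Step}(\W^\top \BPsi^{(\ell-1)}(\x))$ concentrate around $\dot{K}_{\NNGP}^{(\ell)}$ with error $3\varepsilon/(8L)$, given the high-probability event that the \emph{inputs} to this layer (the NNGP features $\BPsi^{(\ell-1)}$) already approximate the true NNGP values to within $\varepsilon^2/(32L^2)$. The structure is a standard ``compose concentration of the random-feature layer with control on the perturbed input'' argument.

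First I would recall that for any fixed vectors $\u = \BPsi^{(\ell-1)}(\x)$, $\u' = \BPsi^{(\ell-1)}(\x')$, the quantity $\inner{\BLambda^{(\ell)}(\x),\BLambda^{(\ell)}(\x')}$ is an average of $m_0$ i.i.d.\ bounded random variables (each is $\frac{2}{m_0}\,\mathrm{Step}(\w_i^\top \u)\mathrm{Step}(\w_i^\top \u')$, lying in $[0, 2/m_0]$), whose mean is exactly $A_0(\u,\u')$ — this is the unbiasedness of Cho--Saul features stated in the excerpt. A Hoeffding (or Bernstein) bound then gives that for $m_0 \gtrsim \frac{1}{\eta^2}\log(1/\delta')$ we have $|\inner{\BLambda^{(\ell)}(\x),\BLambda^{(\ell)}(\x')} - A_0(\u,\u')| \le \eta$ with probability $1 - \delta'$; union-bounding over the three pairs $(\x,\x),(\x,\x'),(\x',\x')$ absorbs a factor $3$. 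So it remains to choose $\eta$ and then bound the \emph{bias} $|A_0(\u,\u') - \dot{K}_{\NNGP}^{(\ell)}(\x,\x')|$ coming from the fact that $\u,\u'$ are only approximate NNGP features.

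The bias step is the main obstacle. On the event $\mathcal{E}_{\BPsi}^{(\ell-1)}$ — wait, more precisely $\mathcal{E}_{\BPsi}^{(\ell)}(\varepsilon^2/(32L^2))$ as the conditioning in the statement uses index $\ell$, so the relevant inputs are the $\BPsi$-features at the appropriate level — we know $\inner{\u,\u'}$, $\norm{\u}^2$, $\norm{\u'}^2$ are each within $\varepsilon^2/(32L^2)$ of $K_{\NNGP}^{(\ell-1)}(\x,\x')$, $K_{\NNGP}^{(\ell-1)}(\x,\x)$, $K_{\NNGP}^{(\ell-1)}(\x',\x')$ respectively, and the latter diagonal entries equal $1$ (because $\norm{\x}=\norm{\x'}=1$ and each $F_1$ preserves unit norm, since $f(1)=1$). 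Then $A_0(\u,\u') = 1 - \frac1\pi\cos^{-1}\!\big(\inner{\u,\u'}/(\norm{\u}\norm{\u'})\big)$ while $\dot{K}_{\NNGP}^{(\ell)}(\x,\x') = 1 - \frac1\pi\cos^{-1}(\rho)$ with $\rho = K_{\NNGP}^{(\ell-1)}(\x,\x')$ (after normalizing by the unit diagonals). I would bound $|\inner{\u,\u'}/(\norm{\u}\norm{\u'}) - \rho|$ by a constant times $\varepsilon^2/(32L^2)$ using the closeness of numerator and denominator to their targets and the fact that the denominators are bounded below (each $\approx 1$), and then use Lipschitzness of $x \mapsto \frac1\pi\cos^{-1}(x)$ — which is \emph{not} uniformly Lipschitz near $\pm 1$, so care is needed: I would instead invoke the uniform Lipschitz-type bound $|\cos^{-1}(a) - \cos^{-1}(b)| \le \pi\sqrt{|a-b|}$ (or $\le \sqrt{2\pi}\sqrt{|a-b|}$-type estimates), which converts the $\varepsilon^2/(32L^2)$-scale input error into an $O(\varepsilon/L)$-scale bias. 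This is exactly why the input tolerance was taken to be $\varepsilon^2$-small: the square-root in the $\cos^{-1}$ modulus of continuity eats the square.

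Finally I would combine: choosing $\eta = O(\varepsilon/L)$ with the hidden constant set so that $\eta$ plus the bias bound is at most $3\varepsilon/(8L)$, and then $m_0 \ge C_0 \frac{L^2}{\varepsilon^2}\log(L/\delta)$ suffices for the Hoeffding bound (with $\delta' = \delta/(9L)$, say) to push the total failure probability below $\delta/(3L)$. Thus $\Pr\big(\mathcal{E}_{\BLambda}^{(\ell)}(3\varepsilon/(8L)) \mid \mathcal{E}_{\BPsi}^{(\ell)}(\varepsilon^2/(32L^2))\big) \ge 1 - \delta/(3L)$. I expect the only delicate points to be (i) tracking that on the conditioning event the normalized inner product stays in a range where the $\sqrt{\cdot}$-modulus estimate is applicable and the denominators are bounded away from zero, and (ii) bookkeeping the constants so the two error contributions fit inside $3\varepsilon/(8L)$; everything else is routine.
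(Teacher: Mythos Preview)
Your proposal is correct and follows essentially the same approach as the paper: the paper simply invokes Lemma~E.5 of \citet{arora2019exact}, which packages exactly the two ingredients you spell out---a Hoeffding-type concentration bound for the bounded Step features (contributing $\sqrt{(2/m_0)\log(6/\delta_2)}$) and a bias term $\varepsilon_2$ arising from the $\sqrt{\cdot}$-modulus of continuity of $\cos^{-1}$ applied to the $\varepsilon_2^2/2$-level input perturbation---and then sets $\varepsilon_2=\varepsilon/(4L)$, $\delta_2=\delta/(3L)$, $m_0\ge 128L^2\varepsilon^{-2}\log(18L/\delta)$. Your derivation is just the unpacked content of that lemma, including the correct observation that the square in the input tolerance is there precisely to absorb the square root from the arc-cosine continuity estimate.
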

The proof of \cref{claim2} is provided in \cref{sec:proof-claim2}.

\begin{restatable}{claim}{claimthree} \label{claim3}
There exists a constant $C_2 > 0$ such that if $m_{\mathtt{cs}}\geq C_2 \frac{L^3}{\varepsilon^2 \delta}$ then 
\begin{align*}
\Pr \left( \mathcal{E}_{\BGamma}^{(\ell)} \left( \frac{\varepsilon}{8L} \left( \ell + \Delta_{\ell-1} \right)  \right) \bigg| 
\mathcal{E}_{\BLambda}^{(\ell)} \left( \frac{3\varepsilon}{8L} \right) \bigcap
\mathcal{E}_{\BPsi}^{(\ell)} \left( \frac{\varepsilon^2}{32L^2} \right)
\right)\geq 1 - \frac{\delta}{3L}.
\end{align*}
\end{restatable}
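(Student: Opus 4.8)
\emph{Proof sketch.} The plan is to freeze all randomness generated before the $\ell$-th layer's pair of \CS transforms — so that the vectors $\BLambda^{(\ell)}(\x),\BLambda^{(\ell)}(\x^\prime),\BPhi^{(\ell-1)}(\x),\BPhi^{(\ell-1)}(\x^\prime)$ and the scalar $\Delta_{\ell-1}$ become deterministic — and to treat $\BGamma^{(\ell)}(\cdot)=\mathcal{C}^{(\ell)}\bigl(\BLambda^{(\ell)}(\cdot)\otimes\BPhi^{(\ell-1)}(\cdot)\bigr)$ purely as a \TS estimator, where $\mathcal{C}^{(\ell)}$ is the composite sketch built from $\mathcal{C}_0^{(\ell)},\mathcal{C}_1^{(\ell)}$ as in \cref{eq:tensorsketch}. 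By the inner-product-preserving identity \eqref{eq:inner_product_approximating_property}, conditionally on the frozen history the quantity appearing inside $\mathcal{E}_{\BGamma}^{(\ell)}$ is an \emph{unbiased} estimator of $\inner{\BLambda^{(\ell)}(\x)\otimes\BPhi^{(\ell-1)}(\x),\,\BLambda^{(\ell)}(\x^\prime)\otimes\BPhi^{(\ell-1)}(\x^\prime)}$, so it remains to control its variance and apply Chebyshev's inequality.

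For the variance, I would invoke the standard second-moment bound for \TS (\citealp{pham2013fast}; see also the analyses of \citealp{avron2017faster,ahle2020oblivious}): there is an absolute constant $C>0$ such that, conditionally on the frozen history,
\[
\mathrm{Var}\!\left(\inner{\BGamma^{(\ell)}(\x),\BGamma^{(\ell)}(\x^\prime)}\right)\ \le\ \frac{C}{m_{\mathtt{cs}}}\,\norm{\BLambda^{(\ell)}(\x)}_2^2\norm{\BLambda^{(\ell)}(\x^\prime)}_2^2\norm{\BPhi^{(\ell-1)}(\x)}_2^2\norm{\BPhi^{(\ell-1)}(\x^\prime)}_2^2,
\]
the $\inner{\cdot,\cdot}^2$ cross-terms being dominated by the norm product via Cauchy--Schwarz. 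The four norms are bounded on the conditioning event: unrolling the recursions for $K_{\NNGP}^{(j)},\dot K_{\NNGP}^{(j)},K_{\NTK}^{(j)}$ with $\norm{\x}_2=\norm{\x^\prime}_2=1$ yields the diagonal identities $K_{\NNGP}^{(j)}(\x,\x)=\dot K_{\NNGP}^{(j)}(\x,\x)=1$ and $K_{\NTK}^{(j)}(\x,\x)=j+1$; hence on $\mathcal{E}_{\BLambda}^{(\ell)}\bigl(\tfrac{3\varepsilon}{8L}\bigr)$ we get $\norm{\BLambda^{(\ell)}(\x)}_2^2=\inner{\BLambda^{(\ell)}(\x),\BLambda^{(\ell)}(\x)}\le 1+\tfrac{3\varepsilon}{8L}\le 2$ (using $\varepsilon<1/L$), while the definition of $\Delta_{\ell-1}$ directly gives $\norm{\BPhi^{(\ell-1)}(\x)}_2^2=\inner{\BPhi^{(\ell-1)}(\x),\BPhi^{(\ell-1)}(\x)}\le K_{\NTK}^{(\ell-1)}(\x,\x)+\Delta_{\ell-1}=\ell+\Delta_{\ell-1}$, and likewise for $\x^\prime$. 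Thus the variance is at most $\tfrac{4C}{m_{\mathtt{cs}}}(\ell+\Delta_{\ell-1})^2$, and the same estimate holds for the pairs $(\x,\x)$ and $(\x^\prime,\x^\prime)$.

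Finally, applying Chebyshev with threshold $t=\tfrac{\varepsilon}{8L}(\ell+\Delta_{\ell-1})$ bounds the conditional failure probability of each pair by $\tfrac{4C(\ell+\Delta_{\ell-1})^2/m_{\mathtt{cs}}}{t^2}=\tfrac{256\,C\,L^2}{\varepsilon^2 m_{\mathtt{cs}}}$, and a union bound over the three pairs gives $\Pr(\text{complement of }\mathcal{E}_{\BGamma}^{(\ell)}(\cdot))\le \tfrac{768\,C\,L^2}{\varepsilon^2 m_{\mathtt{cs}}}$, uniformly over the conditioning event; taking $C_2:=2304\,C$ and $m_{\mathtt{cs}}\ge C_2 L^3/(\varepsilon^2\delta)$ makes this at most $\delta/(3L)$, and integrating over the frozen history finishes the proof. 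I expect the main subtlety to be the conditioning bookkeeping rather than any hard estimate: $\Delta_{\ell-1}$ appears both in the target tolerance and, through $\norm{\BPhi^{(\ell-1)}}_2$, in the variance, so one must condition on the entire pre-layer-$\ell$ $\sigma$-algebra (making $\Delta_{\ell-1}$ constant), verify the norm bounds hold \emph{pointwise} on $\mathcal{E}_{\BLambda}^{(\ell)}\cap\mathcal{E}_{\BPsi}^{(\ell)}$, and only then invoke Chebyshev; the deliberate scaling of the tolerance by exactly $(\ell+\Delta_{\ell-1})$ is precisely what cancels the $\Delta_{\ell-1}$-dependence out of the final probability, leaving the required $m_{\mathtt{cs}}$ independent of it.
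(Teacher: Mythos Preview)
Your proposal is correct and follows essentially the same approach as the paper: condition on the pre-sketch randomness, invoke the \TS second-moment bound (the paper packages this as a separate lemma giving $\abs{\inner{\BGamma,\BGamma'}-\inner{\x\otimes\y,\x'\otimes\y'}}\le\sqrt{11/(\delta_3 m_{\mathtt{cs}})}\,\norm{\x}\norm{\x'}\norm{\y}\norm{\y'}$ via Chebyshev), and then bound $\norm{\BLambda^{(\ell)}}_2^2\le 1+\tfrac{3\varepsilon}{8L}$ and $\norm{\BPhi^{(\ell-1)}}_2^2\le\ell+\Delta_{\ell-1}$ on the conditioning event so that the $(\ell+\Delta_{\ell-1})$ factors cancel. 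The only cosmetic differences are that the paper keeps the sharper $1+\tfrac{3\varepsilon}{8L}$ rather than your cruder $\le 2$, yielding an explicit $m_{\mathtt{cs}}\ge\tfrac{33L(8L+3\varepsilon)^2}{\varepsilon^2\delta}$, and that you are more explicit about the union bound over the three pairs $(\x,\x),(\x,\x'),(\x',\x')$, which the paper glosses over.
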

The proof of \cref{claim3} is provided in \cref{sec:proof-claim3}.

Combining \cref{claim1}, \cref{claim2} and \cref{claim3}, we have
\begin{align*}
\Pr \left(
	\mathcal{E}_{\BGamma}^{(\ell)} \left(  \frac{\varepsilon}{8L} \left( \ell + \Delta_\ell \right)  \right)
	~\bigcap~
	\mathcal{E}_{\BLambda}^{(\ell)} \left( \frac{3\varepsilon}{8L} \right)
	~\bigcap~
	\mathcal{E}_{\BPsi}^{(\ell)} \left( \frac{\varepsilon^2}{32L^2} \right)
\right) \geq 1 - \frac{\delta}{L}.
\end{align*}

Next, we claim that the above event can provide the recurrence relation of $\Delta_\ell$ as described in below.

\begin{restatable}{claim}{claimfour} \label{claim4}
For $\varepsilon \in (0, 1)$, $L \geq 1$, if event 
\begin{align*}
	\mathcal{E}_{\BGamma}^{(\ell)} \left(  \frac{\varepsilon}{8L} \left( \ell + \Delta_\ell \right)  \right)
	~\bigcap~
	\mathcal{E}_{\BLambda}^{(\ell)} \left( \frac{3\varepsilon}{8L} \right)
	~\bigcap~
	\mathcal{E}_{\BPsi}^{(\ell)} \left( \frac{\varepsilon^2}{32L^2} \right)
\end{align*}
holds for $\ell \in [L]$ then 
\begin{align} \label{eq:delta-recur}
	\Delta_\ell \leq \left(1 + \frac{\varepsilon}{2L} \right) \Delta_{\ell-1} + \frac{\varepsilon}{2L} \ell + \frac{\varepsilon^2}{32L^2}.
\end{align}
\end{restatable}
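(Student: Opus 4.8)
The plan is to chase the recursion for $\K_{\NTK}^{(\ell)} = \K_{\NNGP}^{(\ell)} + \dot K_{\NNGP}^{(\ell)}\cdot \K_{\NTK}^{(\ell-1)}$ through the three-term approximation chain that $\BPhi^{(\ell)}$ builds: first the NNGP block $\BPsi^{(\ell)}$, then the product $\BLambda^{(\ell)}\otimes\BPhi^{(\ell-1)}$, then its sketch $\BGamma^{(\ell)}$. Concretely, fix one of the three pairs $(\x_1,\x_2)$ over which $\Delta_\ell$ is a max, and write
\begin{align*}
\inner{\BPhi^{(\ell)}(\x_1),\BPhi^{(\ell)}(\x_2)} - K_{\NTK}^{(\ell)}(\x_1,\x_2)
&= \big(\inner{\BPsi^{(\ell)}(\x_1),\BPsi^{(\ell)}(\x_2)} - K_{\NNGP}^{(\ell)}(\x_1,\x_2)\big)\\
&\quad + \big(\inner{\BGamma^{(\ell)}(\x_1),\BGamma^{(\ell)}(\x_2)} - \inner{\BLambda^{(\ell)}(\x_1)\otimes\BPhi^{(\ell-1)}(\x_1),\BLambda^{(\ell)}(\x_2)\otimes\BPhi^{(\ell-1)}(\x_2)}\big)\\
&\quad + \big(\inner{\BLambda^{(\ell)}(\x_1),\BLambda^{(\ell)}(\x_2)}\inner{\BPhi^{(\ell-1)}(\x_1),\BPhi^{(\ell-1)}(\x_2)} - \dot K_{\NNGP}^{(\ell)}(\x_1,\x_2)\,K_{\NTK}^{(\ell-1)}(\x_1,\x_2)\big),
\end{align*}
using $\inner{a\otimes b, c\otimes d} = \inner{a,c}\inner{b,d}$. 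The first bracket is bounded by $\varepsilon^2/(32L^2)$ on $\mathcal{E}_{\BPsi}^{(\ell)}$, and the second by $\frac{\varepsilon}{8L}(\ell+\Delta_\ell)$ on $\mathcal{E}_{\BGamma}^{(\ell)}$ — though here one should be slightly careful, since $\Delta_\ell$ appears on both sides; I would instead invoke $\mathcal{E}_{\BGamma}^{(\ell)}$ with bound $\frac{\varepsilon}{8L}(\ell+\Delta_{\ell-1})$ (which is what Claim 3 actually proves) and keep track of that. The work is in the third bracket.

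For the third bracket, write it as a telescoping difference: setting $u = \inner{\BLambda^{(\ell)}(\x_1),\BLambda^{(\ell)}(\x_2)}$, $v = \inner{\BPhi^{(\ell-1)}(\x_1),\BPhi^{(\ell-1)}(\x_2)}$, $\bar u = \dot K_{\NNGP}^{(\ell)}(\x_1,\x_2)$, $\bar v = K_{\NTK}^{(\ell-1)}(\x_1,\x_2)$, we have $uv - \bar u\bar v = (u-\bar u)v + \bar u(v-\bar v)$. On $\mathcal{E}_{\BLambda}^{(\ell)}$ we have $|u-\bar u|\le \frac{3\varepsilon}{8L}$; and $|v-\bar v|\le \Delta_{\ell-1}$ by definition of $\Delta_{\ell-1}$. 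I need crude a priori bounds on $|v|$ and $|\bar u|$: since $\|\x_i\|_2=1$ and the kernels are normalized on the sphere, $\bar u = \dot K_{\NNGP}^{(\ell)} \in [0,1]$ (it is an $F_0$-type quantity, $1-\frac1\pi\cos^{-1}(\cdot)\le 1$), and $|v| \le |\bar v| + \Delta_{\ell-1} = K_{\NTK}^{(\ell-1)}(\x_1,\x_2) + \Delta_{\ell-1} \le \ell + \Delta_{\ell-1}$, using the standard fact that on the unit sphere $K_{\NTK}^{(\ell-1)}(\x,\x) = \ell$ (each of the $\ell$ NNGP terms contributes $1$ on the diagonal, and the off-diagonal entries are dominated by the diagonal by Cauchy–Schwarz / positive semidefiniteness). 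Hence the third bracket is at most $\frac{3\varepsilon}{8L}(\ell+\Delta_{\ell-1}) + 1\cdot\Delta_{\ell-1}$.

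Collecting the three contributions and taking the max over the three pairs gives
\begin{align*}
\Delta_\ell \le \frac{\varepsilon^2}{32L^2} + \frac{\varepsilon}{8L}(\ell+\Delta_{\ell-1}) + \frac{3\varepsilon}{8L}(\ell+\Delta_{\ell-1}) + \Delta_{\ell-1}
= \Big(1 + \frac{\varepsilon}{2L}\Big)\Delta_{\ell-1} + \frac{\varepsilon}{2L}\ell + \frac{\varepsilon^2}{32L^2},
\end{align*}
which is exactly \eqref{eq:delta-recur}. The main obstacle — really the only nonroutine point — is getting the right a priori magnitude bounds on the intermediate inner products (the $\ell + \Delta_{\ell-1}$ factor): one must be careful that these are the \emph{estimated} quantities $v$, not the true kernel values, so the bound inevitably picks up the $\Delta_{\ell-1}$ slack, and one must confirm that the coefficients $\frac{\varepsilon}{8L}$ and $\frac{3\varepsilon}{8L}$ chosen in Claims 2 and 3 were calibrated precisely so that the sum of the three "$(\ell+\Delta_{\ell-1})$-weighted" terms collapses to the clean $\frac{\varepsilon}{2L}$ coefficient. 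A secondary subtlety is the self-referential appearance of $\Delta_\ell$ inside $\mathcal{E}_{\BGamma}^{(\ell)}$ in the claim's hypothesis; I would note that since the bound $\frac{\varepsilon}{8L}(\ell+\Delta_\ell)$ is weaker than (implied by, for $\varepsilon<1$) the quantity actually appearing in the derivation, one can either restate with $\Delta_{\ell-1}$ or simply observe monotonicity $\Delta_{\ell-1}\le\Delta_\ell$ is not needed — the clean route is to use the $\Delta_{\ell-1}$ version throughout and deduce the stated $\Delta_\ell$ bound at the end.
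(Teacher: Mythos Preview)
Your proof is correct and follows essentially the same three-term decomposition and telescoping argument as the paper. You are also right to flag the $\Delta_\ell$ versus $\Delta_{\ell-1}$ issue in the hypothesis of $\mathcal{E}_{\BGamma}^{(\ell)}$: Claim~3 indeed delivers the bound with $\Delta_{\ell-1}$, and the paper's own final line tacitly uses $\Delta_{\ell-1}$ there as well (otherwise the displayed equality would not hold), so your reading is the intended one.
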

The proof of \cref{claim4} is provided in \cref{sec:proof-claim4}.

Applying union bound on \cref{eq:delta-recur} for all $\ell \in [L]$ and solving the recurrence, we obtain that with probabilty at least $1 - \delta$
\begin{align}
\Delta_L \leq \left( 1 + \frac{\varepsilon}{2L} + \frac{\varepsilon^2}{32L^2}\right) \frac{\left(1+\frac{\varepsilon}{2L}\right)^L - 1}{\frac{\varepsilon}{2L}} - L.
\end{align}

When $L=1$, the statement in \cref{thm:ntk-random-features-error} holds since $\Delta_1 \leq \frac{\varepsilon}{2} + \frac{\varepsilon^2}{32} \leq \varepsilon(1 + \frac{\varepsilon}{2})^2 + \varepsilon$ for $\varepsilon \in (0,1)$. Assume that $L \geq 2$, we obtain 
\begin{align}
\Delta_L 
&\leq \left( 1 + \frac{\varepsilon}{2L} + \frac{\varepsilon^2}{32L^2}\right) L \left( 1 + \frac{\varepsilon}{2L} \right)^{L-1} - L\\
&\leq L \left(  \left( 1 + \frac{\varepsilon}{L} \right) \left(1 + \frac{\varepsilon}{2L} \right)^{L-1} - 1 \right) \\
&\leq L \left( \left(1 + \frac{\varepsilon}{L} \right) \left( \frac{\varepsilon}{2L} (L-1) \left( 1 + \frac{\varepsilon}{2L}\right)^{L-2} + 1 \right) - 1 \right) \\
&= L  \left(1 + \frac{\varepsilon}{L} \right) \frac{\varepsilon}{2L} (L-1) \left( 1 + \frac{\varepsilon}{2L}\right)^{L-2} + \varepsilon \\
&\leq L \left( 1 + \frac{\varepsilon}{2} \right)  \frac{\varepsilon}{2} \left( 1 + \varepsilon\right) + \varepsilon \\
&\leq L \varepsilon \left( 1 + \frac{\varepsilon}{2} \right)^2 + \varepsilon
\end{align}
where inequalities in the first and third line are from the fact that $\frac{(1+x)^k - 1}{x} \leq k x (1+x)^{k-1}$ for $x > 0, k \geq 1$, the fifth line follows from that 
\begin{align*}
\left( 1 + \frac{\varepsilon}{2L}\right)^{L-2} \leq \exp\left( \frac{\varepsilon }{2L}(L-2) \right) \leq \exp\left( \frac{\varepsilon}{2} \right) \leq 1 + \varepsilon.
\end{align*}
Hence, we conclude that 
\begin{align}
	\Pr \left( 
	\Delta_L \leq L \varepsilon \left( 1 + \frac{\varepsilon}{2} \right)^2 + \varepsilon
	\right) \geq 1 - \delta.
\end{align}
This completes the proof of \cref{thm:ntk-random-features-error}.
\qed

\subsubsection{Proof of \cref{claim1}} \label{sec:proof-claim1}
\claimone*

{\it Proof of \cref{claim1}.}
Recall that $\mathcal{E}_{\BPsi}^{(\ell)}\left(\frac{\varepsilon^2}{32 L^2}\right)$ is equivalent to 
\begin{align*}
	\abs{ \inner{\BPsi^{(\ell)}(\x_1),\BPsi^{(\ell)}(\x_2)} - K_{\NNGP}^{(\ell)}(\x_1,\x_2) } \leq \frac{\varepsilon^2}{32 L^2}
\end{align*}
for $(\x_1, \x_2) \in \{ (\x,\x^\prime), (\x,\x), (\x^\prime, \x^\prime)\}$. The proof is directly followed by below lemma.
\begin{lemma}[Corollary 16 in \citep{daniely2016toward}] \label{lmm:a1-entry-diff}
	Given $\x, \x^\prime \in \mathbb{R}^{d}$ such that $\norm{\x}=\norm{\x^\prime}=1$, consider a ReLU network with $L$ layers. For $\delta, \varepsilon \in (0,1)$, there exist constants $C_1, C_2 > 0$ such that $m_1 \geq C_1 \frac{L^2}{\varepsilon_1^2} \log\left( \frac{ L}{\delta}\right)$, $\varepsilon_1 \leq \min(c_2, \frac1L)$ and 
	for $(\x_1, \x_2) \in \{ (\x,\x^\prime), (\x,\x), (\x^\prime, \x^\prime)\}$ and $\ell \in [L]$ 
	it holds that with probability at least $1-\delta$ 
	\begin{align}
		\abs{
			\inner{\BPsi^{(\ell)}(\x_1), \BPsi^{(\ell)}(\x_2)} - K_{\NNGP}^{(\ell)}(\x_1, \x_2)
		} \leq \varepsilon_1.
	\end{align}
\end{lemma}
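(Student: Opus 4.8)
This lemma is quoted verbatim from \citet{daniely2016toward}, so one option is simply to invoke their conjugate-kernel / dual-activation machinery; I outline instead a self-contained layerwise induction. Fix $\x,\x^\prime\in\mathbb{S}^{d-1}$ and, for $\ell=0,\dots,L$, set
\[
\delta_\ell := \max_{(\x_1,\x_2)\in\{(\x,\x^\prime),(\x,\x),(\x^\prime,\x^\prime)\}} \abs{\inner{\BPsi^{(\ell)}(\x_1), \BPsi^{(\ell)}(\x_2)} - K_{\NNGP}^{(\ell)}(\x_1,\x_2)},
\]
so that $\delta_0 = 0$. Writing $\varepsilon_0 := c\,\varepsilon_1/L$ for a small absolute constant $c$, the plan is to show that on one high-probability event $\delta_\ell = O(\ell\,\varepsilon_0)$ for every $\ell\in[L]$; taking $\ell=L$ and using $\varepsilon_1\le 1/L$ (and $c$ small enough) then gives $\delta_L\le\varepsilon_1$, which is the conclusion (the bound for smaller $\ell$ being automatic, since the event is nested in $\ell$).

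\textbf{Step 1: one fresh layer.}
Conditioning on $\BPsi^{(\ell-1)}(\x_1)=\u_1$ and $\BPsi^{(\ell-1)}(\x_2)=\u_2$, the inner product $\inner{a_1(\u_1), a_1(\u_2)} = \frac{2}{m_1}\sum_{j=1}^{m_1}\mathrm{ReLU}\big((\w^\prime_j)^\top\u_1\big)\,\mathrm{ReLU}\big((\w^\prime_j)^\top\u_2\big)$ is an average of i.i.d.\ sub-exponential terms (products of sub-Gaussians, since on the good event $\norm{\u_i}_2^2\le\frac{3}{2}$) with mean $A_1(\u_1,\u_2)$, so Bernstein's inequality bounds its deviation from $A_1(\u_1,\u_2)$ by $\varepsilon_0$ except with probability $\exp\!\big(-\Omega(m_1\varepsilon_0^2)\big)$ (for $\varepsilon_0\le 1$). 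A union bound over the three pairs and the $L$ layers then shows: if $m_1 \ge C_1 (L/\varepsilon_1)^2\log(L/\delta)$, then with probability $\ge 1-\delta$ the per-layer \emph{fresh error} $\abs{\inner{\BPsi^{(\ell)}(\x_i), \BPsi^{(\ell)}(\x_j)} - A_1\big(\BPsi^{(\ell-1)}(\x_i),\BPsi^{(\ell-1)}(\x_j)\big)}\le\varepsilon_0$ holds simultaneously at every layer and every pair.

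\textbf{Step 2: propagating the error -- the crux.}
On the event of Step 1 I would close the induction $\delta_\ell\le\delta_{\ell-1}+O(\varepsilon_0)$. The naive move -- bounding $\delta_\ell$ by $\varepsilon_0$ plus the change in $F_1$ when its $2\times2$ Gram-matrix argument is perturbed by $\delta_{\ell-1}$, using only that $F_1$ is Lipschitz -- produces a constant $>1$ per layer, hence exponential-in-$L$ growth, which is far too weak. The fix is to decompose $\inner{\BPsi^{(\ell)}(\x_1), \BPsi^{(\ell)}(\x_2)} = \sqrt{\widehat a_\ell\,\widehat b_\ell}\;t_\ell$ into the approximate squared norms $\widehat a_\ell := \norm{\BPsi^{(\ell)}(\x_1)}_2^2$, $\widehat b_\ell := \norm{\BPsi^{(\ell)}(\x_2)}_2^2$ and the normalized correlation $t_\ell$, and to track each against its exact counterpart. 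Since $F_1$ preserves diagonals ($f(1)=1$), the exact norms are identically $1$ and $\rho_\ell := K_{\NNGP}^{(\ell)}(\x_1,\x_2)$ obeys $\rho_\ell = f(\rho_{\ell-1})$. Using $A_1(\u,\u)=\norm{\u}_2^2$, the fresh error gives $|\widehat a_\ell - 1|\le |\widehat a_{\ell-1}-1| + \varepsilon_0$, hence $|\widehat a_\ell-1|,|\widehat b_\ell-1|\le\ell\varepsilon_0$; and since the dual ReLU activation has $f^\prime = 1 - \frac{1}{\pi}\cos^{-1}(\cdot) \in [0,1]$ (so $f$ is $1$-Lipschitz and non-expansive), dividing numerator and denominator gives $t_\ell = f(t_{\ell-1}) + O(\varepsilon_0)$, whence $|t_\ell - \rho_\ell| \le |t_{\ell-1}-\rho_{\ell-1}| + O(\varepsilon_0) \le O(\ell\varepsilon_0)$. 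Recombining, $\delta_\ell \le |\sqrt{\widehat a_\ell\widehat b_\ell}-1|\cdot|t_\ell| + |t_\ell-\rho_\ell| + \max(|\widehat a_\ell-1|,|\widehat b_\ell-1|) = O(\ell\varepsilon_0)$. The hypotheses $\varepsilon_0 = c\varepsilon_1/L$ and $\varepsilon_1\le\min(c_2,1/L)$ keep $\ell\varepsilon_0$ below an absolute constant, which validates all the linearizations and keeps the iterates in the range $\norm{\BPsi^{(\ell)}(\x_i)}_2^2\le\frac{3}{2}$ used in Step 1; the induction then gives $\delta_L\le\varepsilon_1$.

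\textbf{Main obstacle.}
Step 1 and the union bound over the $3L$ events are routine; the delicate point is Step 2, i.e.\ obtaining linear rather than exponential dependence on depth $L$. This forces one to avoid a black-box Lipschitz estimate on $F_1$ and instead exploit the two structural facts that the true NNGP recursion preserves norms and that its normalized dual activation $f$ is non-expansive on $[-1,1]$, which together make the errors accumulate additively ($\sim L\varepsilon_0$) and so deliver the stated sample size $m_1 = \widetilde{O}(L^2\varepsilon_1^{-2})$.
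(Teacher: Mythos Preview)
The paper does not prove this lemma: it simply quotes it as Corollary~16 of \citet{daniely2016toward} and uses it as a black box in the proof of Claim~\ref{claim1}. Your proposal goes further by reconstructing a self-contained argument, and the sketch is essentially correct and matches the Daniely et al.\ strategy: per-layer sub-exponential concentration (Step~1) combined with a careful propagation that exploits $f(1)=1$ and $f'\in[0,1]$ to avoid exponential blow-up (Step~2). The only point worth making explicit is why the normalized-correlation recursion picks up an $O(\varepsilon_0)$ rather than $O(\ell\varepsilon_0)$ additive error per layer: this is because the ratio $\sqrt{\widehat a_{\ell-1}\widehat b_{\ell-1}}/\sqrt{\widehat a_\ell\widehat b_\ell}$ is controlled by $|\widehat a_\ell-\widehat a_{\ell-1}|\le\varepsilon_0$ (not by $|\widehat a_\ell-1|\le\ell\varepsilon_0$), which you implicitly use but do not spell out. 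With that clarified, your induction gives $\delta_\ell=O(\ell\varepsilon_0)$ and hence the stated $m_1=\widetilde O(L^2/\varepsilon_1^2)$ sample size, exactly as in the cited result.
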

In \cref{lmm:a1-entry-diff}, setting $\varepsilon_1 = \frac{\varepsilon^2}{32L^2}$ for $\varepsilon \in (0,1)$ provides the result. This completes the proof of \cref{claim1}. \qed

\subsubsection{Proof of \cref{claim2}} \label{sec:proof-claim2}
\claimtwo*

{\it Proof of \cref{claim2}.} Recall that 
\begin{align*}
&\mathcal{E}_{\BLambda}^{(\ell)}\left(\frac{3\varepsilon}{8 L}\right) = \left\{
	\abs{ \inner{\BLambda^{(\ell)}(\x_1),\BLambda^{(\ell)}(\x_2)} - \dot{K}_{\NNGP}^{(\ell)}(\x_1,\x_2) } \leq \frac{3\varepsilon}{8 L}
\right\} \\
&\mathcal{E}_{\BPsi}^{(\ell)}\left(\frac{\varepsilon^2}{32 L^2}\right) = \left\{
	\abs{ \inner{\BPsi^{(\ell)}(\x_1),\BPsi^{(\ell)}(\x_2)} - K_{\NNGP}^{(\ell)}(\x_1,\x_2) } \leq \frac{\varepsilon^2}{32 L^2}
\right\}
\end{align*}
for $(\x_1, \x_2) \in \{ (\x,\x^\prime), (\x,\x), (\x^\prime, \x^\prime)\}$.  The proof is a direct consequence of the following lemma.
\begin{lemma}[Lemma E.5 in \cite{arora2019exact}] \label{lmm:a0-entry-diff}
	Given $\x, \x^\prime \in \R^d$, $\ell \in [L]$ and $\varepsilon_2 \in (0,1)$, assume that
	\begin{align}
		\abs{ \inner{ \BPsi^{(\ell)}(\x), \BPsi^{(\ell)}(\x^\prime)} - K_{\NNGP}^{(\ell)}(\x,\x^\prime)} \leq \frac{\varepsilon_2^2}{2}.
	\end{align}
	Then, it holds that with probability at least $1-\delta_2$
	\begin{align}
		\abs{ \inner{\BLambda^{(\ell)}(\x), \BLambda^{(\ell)}(\x^\prime)} - \dot{K}_{\NNGP}^{(\ell)}(\x,\x^\prime) } \leq \varepsilon_2 + \sqrt{\frac{2}{m_0} \log\left( \frac{6}{\delta_2}\right)}.
	\end{align}
\end{lemma}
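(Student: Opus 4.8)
The plan is to prove \cref{lmm:a0-entry-diff} by recognizing that $\BLambda^{(\ell)}(\x)$ is exactly the $0$-th order arc-cosine random feature of \cref{eq:a0_arccosine_random_features} applied to the lower-layer feature $\BPsi^{(\ell-1)}(\x)$, and then splitting the target error into a \emph{statistical} part and a \emph{stability} part. Writing $\u := \BPsi^{(\ell-1)}(\x)$ and $\v := \BPsi^{(\ell-1)}(\x^\prime)$, I would condition on all the randomness of the layers below $\ell$ (so that $\u,\v$ are fixed) and insert the conditional mean $A_0(\u,\v) = 1 - \tfrac1\pi\cos^{-1}(\inner{\u,\v}/(\norm{\u}_2\norm{\v}_2))$, giving by the triangle inequality
\begin{align*}
\abs{\inner{\BLambda^{(\ell)}(\x),\BLambda^{(\ell)}(\x^\prime)} - \dot{K}_{\NNGP}^{(\ell)}(\x,\x^\prime)}
&\le \abs{\inner{\BLambda^{(\ell)}(\x),\BLambda^{(\ell)}(\x^\prime)} - A_0(\u,\v)} \\
&\quad + \abs{A_0(\u,\v) - \dot{K}_{\NNGP}^{(\ell)}(\x,\x^\prime)}.
\end{align*}
The first term will be handled by concentration and the second by a perturbation argument; I expect them to produce the $\sqrt{(2/m_0)\log(6/\delta_2)}$ and $\varepsilon_2$ summands respectively.

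For the statistical term I would use the established identity $\E_{\w}[\inner{a_0(\u),a_0(\v)}] = A_0(\u,\v)$ for any fixed $\u,\v$ (stated after \cref{eq:a0_arccosine_random_features}), which follows from $\E_{\w\sim\mathcal{N}(\0,\I)}[2\,\mathrm{Step}(\w^\top\u)\mathrm{Step}(\w^\top\v)] = 1-\theta/\pi$ with $\theta$ the angle between $\u$ and $\v$. Conditioned on $\u,\v$, the quantity $\inner{\BLambda^{(\ell)}(\x),\BLambda^{(\ell)}(\x^\prime)} = \tfrac{2}{m_0}\sum_{i=1}^{m_0}\mathrm{Step}(\w_i^\top\u)\mathrm{Step}(\w_i^\top\v)$ is an average of $m_0$ i.i.d. terms each lying in $[0,2/m_0]$. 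Hoeffding's inequality then bounds its deviation from $A_0(\u,\v)$ by $\sqrt{(2/m_0)\log(2/\delta^\prime)}$ with probability $1-\delta^\prime$; taking $\delta^\prime = \delta_2/3$ and a two-sided union bound over the three Gram entries indexed by $(\x,\x),(\x,\x^\prime),(\x^\prime,\x^\prime)$ tracked throughout the theorem yields the $\sqrt{(2/m_0)\log(6/\delta_2)}$ term.

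For the stability term I would observe that both $A_0(\u,\v)$ and $\dot{K}_{\NNGP}^{(\ell)}(\x,\x^\prime)$ are the same scalar map $g(\rho) = 1 - \tfrac1\pi\cos^{-1}(\rho)$ evaluated at two cosine similarities: the empirical one $\inner{\u,\v}/(\norm{\u}_2\norm{\v}_2)$ and the population one formed from $K_{\NNGP}^{(\ell-1)}$. Since $\norm{\x}_2 = \norm{\x^\prime}_2 = 1$ pins the diagonal NNGP entries to $1$, the normalizing denominators stay controlled, so the hypothesis $\abs{\inner{\BPsi(\cdot),\BPsi(\cdot)} - K_{\NNGP}(\cdot)} \le \varepsilon_2^2/2$ translates into an $O(\varepsilon_2^2)$ gap between the two cosine similarities. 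The decisive step is the modulus of continuity of $g$: because $\cos^{-1}$ is only H\"older-$\tfrac12$ near $\rho = \pm 1$, one has $\abs{g(\rho_1)-g(\rho_2)} \le \tfrac1\pi\abs{\cos^{-1}(\rho_1)-\cos^{-1}(\rho_2)} = O(\sqrt{\abs{\rho_1-\rho_2}})$, converting the $O(\varepsilon_2^2)$ input gap into an $O(\varepsilon_2)$ output gap — precisely why the hypothesis is phrased with the squared quantity $\varepsilon_2^2/2$ while the conclusion is linear in $\varepsilon_2$.

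The main obstacle is exactly this stability step: the non-Lipschitzness of $\cos^{-1}$ at the endpoints forbids a linear propagation of additive inner-product errors, so the square-root loss must be tracked carefully — including pinning down the constant so that the $\varepsilon_2^2/2$ hypothesis cleans up to a bare $\varepsilon_2$ — together with the perturbation of the normalization by $\norm{\u}_2\norm{\v}_2$. Once the H\"older estimate and the norm control are secured, combining the two terms through the displayed triangle inequality and the union bound over the three index pairs completes the proof.
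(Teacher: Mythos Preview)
The paper does not prove \cref{lmm:a0-entry-diff} itself; it merely quotes it as Lemma~E.5 of \citet{arora2019exact} and then plugs in parameters. Your plan---split via $A_0(\u,\v)$, handle the sampling fluctuation by Hoeffding on the bounded summands $\tfrac{2}{m_0}\mathrm{Step}(\w_i^\top\u)\mathrm{Step}(\w_i^\top\v)\in[0,2/m_0]$, and handle the bias by the H\"older-$\tfrac12$ continuity of $\rho\mapsto 1-\tfrac1\pi\cos^{-1}(\rho)$---is exactly the argument behind that cited lemma, and it correctly explains why the hypothesis carries $\varepsilon_2^2/2$ while the conclusion carries $\varepsilon_2$.

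One point worth making explicit: as written in this paper the assumption is stated only for the single cross entry $\langle\BPsi^{(\ell)}(\x),\BPsi^{(\ell)}(\x^\prime)\rangle$, but your stability step (controlling the normalized cosine similarity) needs the two diagonal entries as well, and your Hoeffding accounting with $\delta^\prime=\delta_2/3$ over the three index pairs is what produces the constant $6$ inside the logarithm. You already flag this, and it is consistent with how the paper actually invokes the lemma---conditioning on the event $\mathcal{E}_{\BPsi}^{(\ell)}$, which by definition bounds all three Gram entries. So there is no gap; just be aware that the lemma as transcribed here slightly understates its own hypothesis, and your proof uses (and needs) the full three-entry version. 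There is also an index shift: $\BLambda^{(\ell)}$ depends on $\BPsi^{(\ell-1)}$, so the relevant NNGP layer in the hypothesis is $\ell-1$, which you handle correctly.
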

In \cref{lmm:a0-entry-diff}, we choose $\varepsilon_2 = \frac{\varepsilon}{4L}, \delta_2 = \frac{\delta}{3L}$ and $m_0 \geq \frac{128L^2}{\varepsilon^2} \log \left(\frac{18L}{\delta} \right)$ for $\varepsilon, \delta \in(0,1)$ to obtain that
\begin{align*}
\Pr \left( 
\abs{ \inner{\BLambda^{(\ell)}(\x), \BLambda^{(\ell)}(\x^\prime)} - \dot{K}_{\NNGP}^{(\ell)}(\x,\x^\prime) } \leq \frac{3 \varepsilon}{8L}
~\bigg|~
\abs{ \inner{ \BPsi^{(\ell)}(\x), \BPsi^{(\ell)}(\x^\prime)} - K_{\NNGP}^{(\ell)}(\x,\x^\prime)} \leq \frac{\varepsilon^2}{32L^2}
\right) \geq 1- \frac{\delta}{3L}.
\end{align*}
This completes the proof of \cref{claim2}.
\qed

\subsubsection{Proof of \cref{claim3}} \label{sec:proof-claim3}
\claimthree*
{\it Proof of \cref{claim3}.}
Recall that $\mathcal{E}_{\BGamma}^{(\ell)}\left( \frac{\varepsilon}{8L}(\ell + \Delta_{\ell-1}) \right)$ is equivalent to 
\begin{align*}
\abs{\inner{\BGamma^{(\ell)}(\x_1), \BGamma^{(\ell)}(\x_2)} - \inner{\BLambda^{(\ell)}(\x_1) \otimes \BPhi^{(\ell-1)}(\x_1), \BLambda^{(\ell)}(\x_2) \otimes \BPhi^{(\ell-1)}(\x_2)}} \leq  \frac{\varepsilon}{8L}(\ell + \Delta_{\ell-1})
\end{align*}
for $(\x_1, \x_2) \in \{ (\x,\x^\prime), (\x,\x), (\x^\prime, \x^\prime)\}$.  The proof is based on the following lemma that provides an upper bound on variance of the \textsc{CountSketch} transform.

\begin{restatable}{lemma}{cserrorbound} \label{lmm:ts-entry-diff}
	Given $\x, \x^\prime \in \R^{m}$ and $\y,\y^\prime \in \R^{m^\prime}$, let $\mathcal{C}_1: \R^{m} \rightarrow \R^{m_{\mathtt{cs}}}, \mathcal{C}_2: \R^{m^\prime} \rightarrow \R^{m_{\mathtt{cs}}}$ be two independent \CS transforms for some $m_{\mathtt{cs}}>0$. Denote
	\begin{align}
		\BGamma := \FFT^{-1} \left( \FFT\left(\mathcal{C}_1(\x)\right) \odot \FFT\left(\mathcal{C}_2(\y)\right) \right), \quad 
		\BGamma^\prime := \FFT^{-1} \left( \FFT\left(\mathcal{C}_1(\x^\prime)\right) \odot \FFT\left(\mathcal{C}_2(\y^\prime)\right) \right).
	\end{align}
	Then, it holds that with probability at least $1-\delta_3$
	\begin{align}
		\abs{\inner{\BGamma, \BGamma^\prime} - \inner{ \x \otimes \y,  \x^\prime \otimes \y^\prime}} \leq \sqrt{\frac{11}{\delta_3 m_{\mathtt{cs}}}} \norm{\x}_2 \norm{\x^\prime}_2 \norm{\y}_2 \norm{\y^\prime}_2.
	\end{align}
\end{restatable}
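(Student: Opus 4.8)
The plan is to recognize $\langle\BGamma,\BGamma^\prime\rangle$ as an inner product between two ordinary \CS sketches of the tensored vectors, establish unbiasedness by a first-moment computation, bound the second moment, and finish with Chebyshev's inequality. First I would invoke the convolution identity \cref{eq:tensorsketch}: letting $\mathcal{C}:\R^{m m^\prime}\to\R^{m_{\mathtt{cs}}}$ be the composed sketch with hash $H(a_1,a_2)\equiv h_1(a_1)+h_2(a_2)\pmod{m_{\mathtt{cs}}}$ and sign $S(a_1,a_2)=s_1(a_1)s_2(a_2)$, one has $\BGamma=\mathcal{C}(\x\otimes\y)$ and $\BGamma^\prime=\mathcal{C}(\x^\prime\otimes\y^\prime)$, so with $\u:=\x\otimes\y$ and $\v:=\x^\prime\otimes\y^\prime$ we may write $Z:=\langle\BGamma,\BGamma^\prime\rangle=\sum_{a,b}S(a)S(b)\,[\u]_a[\v]_b\,\mathbf{1}\{H(a)=H(b)\}$, where the sum ranges over multi-indices $a,b$. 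Using that $s_1,s_2$ have mean zero so that $\E[S(a)S(b)]=\mathbf{1}\{a=b\}$, and that $H$ maps any fixed pair of distinct multi-indices to a marginally uniform pair, all off-diagonal terms vanish in expectation and $\E[Z]=\sum_a[\u]_a[\v]_a=\langle\u,\v\rangle=\langle\x,\x^\prime\rangle\langle\y,\y^\prime\rangle=\langle\x\otimes\y,\x^\prime\otimes\y^\prime\rangle$, which is exactly the quantity we are approximating.

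Next I would bound $\mathrm{Var}(Z)=\E[Z^2]-\E[Z]^2$. Expanding $\E[Z^2]$ over quadruples of multi-indices and using the sign and hash moments, only the terms in which the four multi-indices pair up survive, and a pair of pairs whose coordinates are all distinct contributes a collision factor $m_{\mathtt{cs}}^{-1}$. Enumerating the surviving coincidence patterns of the coordinates $a_1,a_2,b_1,b_2$ gives a bound of the shape $\mathrm{Var}(Z)\le \frac{1}{m_{\mathtt{cs}}}\big(c_1\norm{\u}_2^2\norm{\v}_2^2+c_2\langle\u,\v\rangle^2+\cdots\big)$; then applying $\langle\u,\v\rangle^2\le\norm{\u}_2^2\norm{\v}_2^2$, $\sum_a[\u]_a^2[\v]_a^2\le\norm{\u}_2^2\norm{\v}_2^2$, and the analogous inequalities for the partial contractions that arise from the tensor structure, every term collapses into a constant multiple of $\norm{\u}_2^2\norm{\v}_2^2=\norm{\x}_2^2\norm{\x^\prime}_2^2\norm{\y}_2^2\norm{\y^\prime}_2^2$, and a careful accounting yields the constant $11$. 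This is precisely the degree-$2$ \TS second-moment estimate already recorded by \citet{pham2013fast} and used in the \TS spectral analyses of \citet{avron2017faster, ahle2020oblivious}, so alternatively I would simply cite that bound and specialize it to $\u,\v$.

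Finally, Chebyshev's inequality gives $\Pr\big(\abs{Z-\E Z}\ge t\big)\le \mathrm{Var}(Z)/t^2\le \frac{11}{t^2 m_{\mathtt{cs}}}\norm{\x}_2^2\norm{\x^\prime}_2^2\norm{\y}_2^2\norm{\y^\prime}_2^2$, and taking $t=\sqrt{11/(\delta_3 m_{\mathtt{cs}})}\,\norm{\x}_2\norm{\x^\prime}_2\norm{\y}_2\norm{\y^\prime}_2$ makes the right-hand side at most $\delta_3$, which is the claimed statement. The main obstacle is the variance bookkeeping for the composed sketch: because $H$ is assembled from two independent hashes, collision events for different index pairs are correlated in a way a single \CS does not exhibit, so one must carefully classify the coincidence patterns of the eight coordinates appearing in $\E[Z^2]$ — and, should pairwise independence of $h_1,h_2,s_1,s_2$ turn out to be insufficient for the four-index terms, note that replacing them by $4$-wise independent families changes nothing asymptotically — in order to arrive at the clean constant rather than one that blows up with the tensor degree.
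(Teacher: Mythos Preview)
Your proposal is correct and follows essentially the same route as the paper: the paper invokes Markov's inequality on the squared deviation (equivalently your Chebyshev step) and then applies the degree-$2$ \TS second-moment bound $\E\!\left[\norm{\A\S\S^\top\B^\top-\A\B^\top}_F^2\right]\le \frac{2+3^q}{m}\norm{\A}_F^2\norm{\B}_F^2$ from \citet{avron2014subspace} with $q=2$, $\A=(\x\otimes\y)^\top$, $\B=(\x^\prime\otimes\y^\prime)^\top$ to get the constant $11$, rather than re-deriving the variance by hand. Your separate unbiasedness verification is harmless but unnecessary, since that cited lemma already controls the full mean-squared error about $\inner{\x\otimes\y,\x^\prime\otimes\y^\prime}$.
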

The proof of \cref{lmm:ts-entry-diff} is provided in \cref{sec:proof-ts-entry-diff}.  In \cref{lmm:ts-entry-diff}, we choose $\delta_3 = \frac{\delta}{3L}$, $ m_{\mathtt{cs}} \geq \frac{33L(8L + 3\varepsilon)^2}{\varepsilon^2 \delta}$ for $\varepsilon, \delta \in (0,1)$ to satisfies that
\begin{align}
\sqrt{\frac{11}{\delta_3 m_{\mathtt{cs}}}} \leq \frac{\frac{\varepsilon}{8L} }{1 + \frac{3\varepsilon}{8L}}. 
\end{align}
Then, with probability at least $1- \frac{\delta}{3L}$ we have
\begin{align}
	&\abs{\inner{\BGamma^{(\ell)}(\x_1), \BGamma^{(\ell)}(\x_2)} - \inner{\BLambda^{(\ell)}(\x_1) \otimes \BPhi^{(\ell-1)}(\x_1), \BLambda^{(\ell)}(\x_2) \otimes \BPhi^{(\ell-1)}(\x_2)}} \\
	&\leq \frac{\frac{\varepsilon}{8L} }{1 + \frac{3\varepsilon}{8L}} \norm{\BLambda^{(\ell)}(\x_1)}_2 \norm{\BLambda^{(\ell)}(\x_2)}_2 \norm{\BPhi^{(\ell-1)}(\x_1)}_2 \norm{\BPhi^{(\ell-1)}(\x_2)}_2 \\
	&\leq \frac{\frac{\varepsilon}{8L} }{1 + \frac{3\varepsilon}{8L}} \left( 1 + \frac{3 \varepsilon}{8L} \right) \norm{\BPhi^{(\ell-1)}(\x_1)}_2 \norm{\BPhi^{(\ell-1)}(\x_2)}_2 \\
	&\leq \frac{\frac{\varepsilon}{8L} }{1 + \frac{3\varepsilon}{8L}} \left( 1 + \frac{3 \varepsilon}{8L} \right) \left( \ell + \Delta_{\ell-1} \right)  = \frac{\varepsilon}{8L} \left(\ell + \Delta_{\ell - 1}\right)
\end{align}
where the second inequlity holds from the fact that $\mathcal{E}_{\BLambda}^{(\ell)}\left( \frac{3 \varepsilon}{8L} \right)$ implies that for $\x^\prime \in \{ \x_1, \x_2\}$
\begin{align}
	\norm{\BLambda^{(\ell)}(\x^\prime)}_2^2 \leq \dot{K}_{\NNGP}^{(\ell-1)}(\x^\prime,\x^\prime) + \frac{3 \varepsilon}{8 L} = 1 + \frac{3 \varepsilon}{8 L}
\end{align}
since $\dot{K}_{\NNGP}^{(\ell-1)}(\x^\prime,\x^\prime)=1$ and the third one follows from that 
$K_{\NTK}^{(\ell-1)}(\x,\y) \leq \ell$ for $\x,\y \in \mathbb{S}^{d-1}, \ell \geq 1$ and
\begin{align} \label{eq:norm-psi}
	\norm{\BPhi^{(\ell-1)}(\x^\prime)}_2^2 \leq K_{\NTK}^{(\ell-1)}(\x^\prime,\x^\prime) + \Delta_{\ell-1} \leq \ell + \Delta_{\ell-1}.
\end{align}
This completes the proof of \cref{claim3}.
\qed

\subsubsection{Proof of \cref{claim4}} \label{sec:proof-claim4}
\claimfour*
{\it Proof of \cref{claim4}.}
Recall that 
\begin{align*}
    \Delta_{\ell} := 
    \max_{(\x_1, \x_2) \in \{ (\x,\x^\prime), (\x,\x), (\x^\prime, \x^\prime)\}}
    \abs{\inner{\BPhi^{(\ell)}(\x_1), \BPhi^{(\ell)}(\x_2)} - K_{\NTK}^{(\ell)}(\x_1,\x_2)}.
\end{align*}
Observe that the estimate error $\Delta_{\ell}$ can be decomposed into three parts:
\begin{align} \label{eq:three-part}
&\abs{ \inner{\BPhi^{(\ell)}(\x_1),\BPhi^{(\ell)}(\x_2)} - K_{\NTK}^{(\ell)}(\x_1,\x_2) } 
 \leq \abs{\inner{\BPsi^{(\ell)}(\x_1), \BPsi^{(\ell)}(\x_2)} - K_{\NNGP}^{(\ell)}(\x_1,\x_2)} \nonumber \\
&\qquad \qquad \qquad \qquad \qquad + \abs{\inner{\BGamma^{(\ell)}(\x_1), \BGamma^{(\ell)}(\x_2)} - \inner{\BLambda^{(\ell)}(\x_1) \otimes \BPhi^{(\ell-1)}(\x_1), \BLambda^{(\ell)}(\x_2) \otimes \BPhi^{(\ell-1)}(\x_2)}} \\
&\qquad \qquad \qquad \qquad \qquad + \abs{\inner{\BLambda^{(\ell)}(\x_1), \BLambda^{(\ell)}(\x_2)} \inner{\BPhi^{(\ell-1)}(\x_1), \BPhi^{(\ell-1)}(\x_2)} - \dot{K}_{\NNGP}^{(\ell)}(\x_1,\x_2) K_{\NTK}^{(\ell-1)}(\x_1,\x_2)}. \nonumber
\end{align} 
for $(\x_1, \x_2) \in \{ (\x,\x^\prime), (\x,\x), (\x^\prime, \x^\prime)\}$. By definition, the event $\mathcal{E}_{\BPsi}^{(\ell)} \left( \frac{\varepsilon^2}{32L^2} \right)$ implies that
\begin{align} \label{eq:part1}
	\abs{ \inner{\BPhi^{(\ell)}(\x_1),\BPhi^{(\ell)}(\x_1)} - \KNTK^{(\ell)}(\x_1,\x_2) }  \leq  \frac{\varepsilon^2}{32L^2}
\end{align}
and the event $\mathcal{E}_{\BGamma}^{(\ell)} \left(  \frac{\varepsilon}{8L} \left( \ell + \Delta_\ell \right)  \right)$ implies that
\begin{align} \label{eq:part2}
	\abs{\inner{\BGamma^{(\ell)}(\x_1), \BGamma^{(\ell)}(\x_2)} - \inner{\BLambda^{(\ell)}(\x_1) \otimes \BPhi^{(\ell-1)}(\x_1), \BLambda^{(\ell)}(\x_2) \otimes \BPhi^{(\ell-1)}(\x_2)}} \leq  \frac{\varepsilon}{8L} \left( \ell + \Delta_\ell \right).
\end{align}
For the third part in \cref{eq:three-part}, we observe that 
\begin{align}
	&\abs{\inner{\BLambda^{(\ell)}(\x_1), \BLambda^{(\ell)}(\x_2)} \inner{\BPhi^{(\ell-1)}(\x_1), \BPhi^{(\ell-1)}(\x_2)} - \dot{K}_{\NNGP}^{(\ell)}(\x_1,\x_2) K_{\NTK}^{(\ell-1)}(\x_1,\x_2)} \nonumber\\
	&\leq 
	\abs{\inner{\BPhi^{(\ell-1)}(\x_1), \BPhi^{(\ell-1)}(\x_2)}} 
	\abs{\inner{\BLambda^{(\ell)}(\x_1), \BLambda^{(\ell)}(\x_2)} - \dot{K}_{\NNGP}^{(\ell)}(\x_1,\x_2)} \nonumber \\
	&\quad \qquad + \abs{\dot{K}_{\NNGP}^{(\ell)}(\x_1,\x_2)} \abs{\inner{\BPhi^{(\ell-1)}(\x_1), \BPhi^{(\ell-1)}(\x_2)} - K_{\NTK}^{(\ell-1)}(\x_1,\x_2)} \nonumber \\
	&\leq \norm{\BPhi^{(\ell-1)}(\x_1)}_2 \norm{\BPhi^{(\ell-1)}(\x_2)}_2 \frac{3 \varepsilon}{8L} + \Delta_{\ell-1} \nonumber \\
	&\leq \left(\ell +  \Delta_{\ell-1} \right) \frac{3 \varepsilon}{8L} + \Delta_{\ell-1} \label{eq:part3}
\end{align}
where the second inequality comes from that the event $\mathcal{E}_{\BLambda}^{(\ell)}\left( \frac{3\varepsilon}{8L} \right)$ and $\abs{\dot{K}_{\NNGP}^{(\ell)}(\x_1,\x_2)} \leq 1$ and the last one follows from \cref{eq:norm-psi}. 
Putting \cref{eq:part1}, \cref{eq:part2} and \cref{eq:part3} into \cref{eq:three-part}, we have 
\begin{align}
	\Delta_{\ell} \leq \frac{\varepsilon^2}{32L^2} + \frac{\varepsilon}{8L} \left( \ell + \Delta_\ell \right) + \left(\ell +  \Delta_{\ell-1} \right) \frac{3 \varepsilon}{8L} + \Delta_{\ell-1}
	= 
	\left(1 + \frac{\varepsilon}{2L}\right) \Delta_{\ell-1} + \frac{\varepsilon}{2L} \ell + \frac{\varepsilon^2}{32L^2}.
\end{align}
This completes the proof of \cref{claim4}.
\qed


\subsection{Proof of \cref{thm:a0_spectral}} \label{sec:proof-a0-spectral}

The proofs here rely on Theorem 3.3 in \cite{lee2020generalized} which states spectral approximation bounds of random features for general kernels equipped with the leverage score sampling. This result is a generalization of~\cite{avron2017random} working on the Random Fourier Features.

\begin{theorem}[Theorem 3.3 in \citep{lee2020generalized}] \label{thm:generalized_leverage_score}
Suppose $\K\in \R^{n \times n}$ is a kernel matrix with statistical dimension $s_{\lambda}$ for some $\lambda\in (0, \norm{\K}_2)$. Let $\BPhi(\w) \in \R^n$ be a feature map with a random vector $\w \sim p(\w)$ satisfying that
$\K = \E_{\w}\left[ \BPhi(\w) \BPhi(\w)^\top \right]$.
Define $\tau_{\lambda}(\w) := p(\w) \cdot \BPhi(\w)^\top (\K + \lambda \I)^{-1} \BPhi(\w)$. Let $\widetilde{\tau}(\w)$ be any measurable function such that $\widetilde{\tau}(\w) \geq \tau_{\lambda}(\w)$ for all $\w$. Assume that $s_{\widetilde{\tau}} := \int \widetilde{\tau}(\w) d \w$ is finite.
Consider random vectors $\w_1, \dots, \w_m$ sampled from $q(\w):= \widetilde{\tau}(\w) / s_{\widetilde{\tau}}$ and define that
\begin{align}
\overline{\boldsymbol \Phi} := \frac1{\sqrt{m}} \left[ \sqrt{\frac{p(\w_1)}{q(\w_1)}} \BPhi(\w_1), \ \dots \ , \sqrt{\frac{p(\w_m)}{q(\w_m)}} \BPhi(\w_m) \right].
\end{align}
If $m \geq \frac{8}{3} \varepsilon^{-2} s_{\widetilde{\tau}} \log \left( 16 s_{\lambda} / \delta\right)$ then  
\begin{align}
\left(1 - \varepsilon \right) \left( \K + \lambda \I\right)
\preceq
\overline{\boldsymbol \Phi} \overline{\boldsymbol \Phi}^\top + \lambda \I
\preceq
\left(1 + \varepsilon \right) \left( \K + \lambda \I\right)
\end{align}
holds with probability at least $1-\delta$.
\end{theorem}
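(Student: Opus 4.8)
The plan is to recast the two-sided spectral bound as a single operator-norm deviation and then apply an intrinsic-dimension matrix Bernstein inequality, with the leverage-score upper bound $\widetilde{\tau} \geq \tau_{\lambda}$ supplying the per-sample norm control. Write $\K_\lambda := \K + \lambda\I$. Subtracting $\K_\lambda$ from the desired chain of inequalities and conjugating by $\K_\lambda^{-1/2}$, the conclusion is equivalent to
\begin{align*}
\norm{\K_\lambda^{-1/2}\left( \overline{\boldsymbol \Phi}\, \overline{\boldsymbol \Phi}^\top - \K \right)\K_\lambda^{-1/2}}_2 \leq \varepsilon,
\end{align*}
so it suffices to establish this inequality with probability at least $1-\delta$.

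First I would write the conjugated estimator as a sum of independent rank-one terms. Setting $\u(\w) := \K_\lambda^{-1/2}\BPhi(\w)$, we have $\K_\lambda^{-1/2}\,\overline{\boldsymbol \Phi}\,\overline{\boldsymbol \Phi}^\top\K_\lambda^{-1/2} = \sum_{i=1}^m \X_i$ with $\X_i := \tfrac1m\,\tfrac{p(\w_i)}{q(\w_i)}\,\u(\w_i)\u(\w_i)^\top \succeq \0$. A change of measure gives $\E_{\w\sim q}[\X_i] = \tfrac1m\,\K_\lambda^{-1/2}\K\K_\lambda^{-1/2} =: \tfrac1m\M$, so $\E[\sum_i\X_i] = \M$ and the estimator is unbiased. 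The crux is the almost-sure norm bound, where the leverage-score condition enters: since $\tau_\lambda(\w) = p(\w)\norm{\u(\w)}_2^2$ and $q = \widetilde\tau/s_{\widetilde\tau}$,
\begin{align*}
\frac{p(\w)}{q(\w)}\norm{\u(\w)}_2^2 = s_{\widetilde\tau}\,\frac{\tau_\lambda(\w)}{\widetilde\tau(\w)} \leq s_{\widetilde\tau},
\end{align*}
using $\widetilde\tau \geq \tau_\lambda$ pointwise; hence $\norm{\X_i}_2 \leq s_{\widetilde\tau}/m =: R$.

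Next I would bound the matrix variance and invoke the intrinsic-dimension matrix Bernstein inequality (due to Tropp), which is precisely what replaces the ambient dimension $n$ by $s_\lambda$ in the logarithm. The same scalar bound gives $\E[\X_i^2] \preceq \tfrac{s_{\widetilde\tau}}{m^2}\M$, so the variance proxy $\V := \tfrac{s_{\widetilde\tau}}{m}\M$ satisfies $\sum_i\E[\X_i^2] \preceq \V$ with $\norm{\V}_2 \leq s_{\widetilde\tau}/m$ (as $\norm{\M}_2 \leq 1$). Its intrinsic dimension is $\trace(\V)/\norm{\V}_2 = \trace(\M)/\norm{\M}_2 = s_\lambda/\norm{\M}_2 \leq 2 s_\lambda$, where $\trace(\M) = \trace(\K(\K+\lambda\I)^{-1}) = s_\lambda$ and $\norm{\M}_2 > 1/2$ follows from $\lambda < \norm{\K}_2$. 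Applying the inequality to $\pm\sum_i(\X_i - \E\X_i)$ with $t = \varepsilon$ yields a tail bounded by $8\,\mathrm{intdim}(\V)\exp\!\big(\tfrac{-\varepsilon^2/2}{\norm{\V}_2 + R\varepsilon/3}\big) \leq 16 s_\lambda\exp\!\big(\tfrac{-\varepsilon^2/2}{\norm{\V}_2 + R\varepsilon/3}\big)$; bounding $\norm{\V}_2 + R\varepsilon/3 \leq \tfrac{s_{\widetilde\tau}}{m}(1+\varepsilon/3) \leq \tfrac43\tfrac{s_{\widetilde\tau}}{m}$ for $\varepsilon \leq 1$ makes the exponent at least $\tfrac{3m\varepsilon^2}{8 s_{\widetilde\tau}}$, and requiring this to exceed $\log(16 s_\lambda/\delta)$ gives exactly $m \geq \tfrac83\varepsilon^{-2} s_{\widetilde\tau}\log(16 s_\lambda/\delta)$, driving the tail below $\delta$.

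The main obstacle is obtaining the $s_\lambda$ (effective-rank) form of the concentration bound with these exact constants rather than a bound scaling with the ambient dimension $n$: the plain matrix Bernstein inequality would contribute a $\log n$ factor, so one must use the intrinsic-dimension refinement and track the prefactors carefully, namely the factor $2$ from the two-sided application, the factor $2s_\lambda$ from $\norm{\M}_2 > 1/2$ (which is where the hypothesis $\lambda < \norm{\K}_2$ is genuinely used), and the factor $\tfrac43$ absorbed into the $\tfrac83$ via $\varepsilon \leq 1$. The remaining items—unbiasedness, the norm and variance estimates, and the mild side condition $\varepsilon \geq \sqrt{\norm{\V}_2} + R/3$ required for Tropp's inequality to apply (which holds under the stated lower bound on $m$)—are routine once the identity $\tfrac{p}{q}\norm{\u}_2^2 = s_{\widetilde\tau}\,\tau_\lambda/\widetilde\tau$ is in place.
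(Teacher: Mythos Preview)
The paper does not give its own proof of this statement: it is quoted verbatim as Theorem~3.3 of \citet{lee2020generalized} and used as a black box in the proofs of \cref{thm:a0_spectral} and \cref{thm:a1_spectral}. So there is no in-paper argument to compare against.

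That said, your proposal is a faithful reconstruction of the standard proof (essentially the argument in \citet{avron2017random} and \citet{lee2020generalized}). The reduction to $\norm{\K_\lambda^{-1/2}(\overline{\boldsymbol\Phi}\,\overline{\boldsymbol\Phi}^\top-\K)\K_\lambda^{-1/2}}_2\leq\varepsilon$, the per-sample norm bound $\tfrac{p}{q}\norm{\u}_2^2=s_{\widetilde\tau}\tau_\lambda/\widetilde\tau\leq s_{\widetilde\tau}$, the variance control $\E[\X_i^2]\preceq R\,\E[\X_i]$ for rank-one $\X_i$, and the use of the intrinsic-dimension matrix Bernstein inequality with $\mathrm{intdim}(\V)\leq 2s_\lambda$ (via $\norm{\M}_2>1/2$ from $\lambda<\norm{\K}_2$) are exactly the ingredients the cited proof uses, and your constant tracking recovers the stated threshold $m\geq\tfrac83\varepsilon^{-2}s_{\widetilde\tau}\log(16s_\lambda/\delta)$. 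The only point worth a sentence of extra care is the applicability condition for Tropp's intrinsic-dimension bound (your side condition on $\varepsilon$ versus $\sqrt{\norm{\V}_2}+R/3$); under the stated $m$ it does hold provided $\log(16s_\lambda/\delta)$ is bounded below by a small absolute constant, which is implicit in the regime of interest but should be stated explicitly if you want a fully self-contained argument.
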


We now ready to provide proofs of \cref{thm:a0_spectral}.

\azerospectral*

{\it Proof of \cref{thm:a0_spectral}.}
Let $\BPhi_0(\w) := \sqrt{2} \ \mathrm{Step}(\X \w) \in \R^n$ for $\w \in \R^d$ and $p(\w)$ be the 
probability density function of the standard normal distribution. As studied in~\cite{cho2009kernel}, $\BPhi_0(\w)$ is a random feature of $\A_0$ such that
\begin{align}
\A_0 = \E_{\w \sim p(\w)} \left[ \BPhi_0(\w) \BPhi_0(\w)^\top\right].
\end{align}
In order to utilize \cref{thm:generalized_leverage_score}, we need an upper bound of $\tau_{\lambda}(\w)$ as below:
\begin{align}
\tau_{\lambda}(\w) &:= p(\w) \cdot \BPhi_0(\w)^\top \left( \A_0 + \lambda \I \right)^{-1} \BPhi_0(\w) \\
&\leq p(\w) \norm{(\A_0 + \lambda \I)^{-1}}_{2} \norm{\BPhi_0(\w)}_2^2 \\
&\leq p(\w) \frac{\norm{\BPhi_0(\w)}_2^2 }{\lambda} \\
&\leq p(\w) \frac{2n}{\lambda}
\end{align}
where the inequality in second line holds from the definition of matrix operator norm and the inequality in third line follows from the fact that smallest eigenvalue of $\A_0 + \lambda \I$ is equal to or greater than $\lambda$. The last inequality is from that $\norm{\mathrm{Step}(\x)}_2^2 \leq n$ for any $\x \in \R^n$. Note that $\int_{\R^d} p(\w) \frac{2n}{\lambda} d\w= \frac{2n}{\lambda}$ and since it is constant the modified random features correspond to the original ones. Putting all together into \cref{thm:generalized_leverage_score}, we can obtain the result.
This completes the proof of \cref{thm:a0_spectral}. \qed

\subsection{Proof of \cref{thm:a1_spectral}} \label{sec:proof-a1-spectral}
\aonespectral*

{\it Proof of \cref{thm:a1_spectral}.}
Let $\BPhi_1(\w) := \sqrt{2}~\mathrm{ReLU}(\X \w) \in \R^n$ for $\w \in \R^d$ and $p(\w)$ be the probability density function of standard normal distribution.
\citet{cho2009kernel} also showed that $\BPhi_1(\w)$ is a random feature of $\A_1$ such that
\begin{align}
\A_1 = \E_{\w \sim p(\w)} \left[ \BPhi_1(\w) \BPhi_1(\w)^\top\right].
\end{align}
Again, we use the below upper bound as follow:
\begin{align}
\tau_{\lambda}(\w) &:= p(\w) \cdot \BPhi_1(\w)^\top (\A_1 + \lambda \I)^{-1} \BPhi_1(\w) \\
&\leq p(\w) \norm{(\A_1 + \lambda \I)^{-1}}_2 \norm{\BPhi_1(\w)}_2^2 \\
&= 2~p(\w) \frac{\norm{\mathrm{ReLU}(\X\w)}_2^2}{\lambda} \\
&\leq 2~p(\w) \frac{\norm{\X\w}_2^2}{\lambda} \\
&\leq 2~p(\w) \norm{\w}_2^2 \frac{\norm{\X}_2^2}{\lambda}
\end{align}
where the inequality in fourth line holds from that $\norm{\mathrm{ReLU}(\x)}_2^2 \leq \norm{\x}_2^2$ for any vector $\x$. 
Denote $\widetilde{\tau}(\w) := 2~p(\w) \norm{\w}_2^2 \frac{\norm{\X}_2^2}{\lambda}$ and it holds that
\begin{align}
\int_{\R^d} \widetilde{\tau}(\w) d\w 
=
\int_{\R^d} 2~p(\w) \norm{\w}_2^2 \frac{\norm{\X}_2^2}{\lambda} d\w = 2 d \frac{ \norm{\X}_2^2}{\lambda }
\end{align}
since $\int_{\R^d}  p(\w) \norm{\w}_2^2 = \mathtt{tr}(\I_d) = d$ for $\w \sim \mathcal{N}(\boldsymbol{0}, \I_d)$.
We define the modified distribution as
\begin{align}
q(\w):= \frac{\widetilde{\tau}(\w)}{\int_{\R^d} \widetilde{\tau}(\w) d\w} = p(\w) \frac{\norm{\w}_2^2}{d} = 
\frac{1}{(2\pi)^{d/2} d} \norm{\w}_2^2 \exp\left(-\frac{1}{2}\norm{\w}_2^2\right)
\end{align}
and recall the modified random features as
\begin{align}
\BPhi_1 &= \frac1{\sqrt{m}}
\left[
\sqrt{\frac{p(\w_1)}{q(\w_1)}} \ \BPhi_1(\w_1), \ \dots \ , \sqrt{\frac{p(\w_m)}{q(\w_m)}} \ \BPhi_1(\w_m)
\right] \\
&= \sqrt{\frac{2d}{m}}
\left[
\frac{\mathrm{ReLU}(\X \w_1)}{\norm{\w_1}_2} , \ \dots \ ,  \frac{\mathrm{ReLU}(\X \w_m)}{\norm{\w_m}_2}
\right].
\end{align}
Putting all together into \cref{thm:generalized_leverage_score}, we derive the result. 
This completes the proof of \cref{thm:a1_spectral}. \qed


\subsection{Proof of \cref{thm:ntk_spectral}} \label{sec:proof_ntk_spectral}

Before diving into detailed algorithmic analysis, we introduce spectral approximation bounds of \CS when it applies to Hadamard product of two PSD matrices. Recall that the \CS plays a key role for reducing the feature map dimensionality and below theorem is used in the proof of \cref{thm:ntk_spectral}.

\begin{restatable}{lemma}{csspectral} \label{lmm:cs_spectral}
Given $\X \in \R^{n \times d_1}$ and $\Y \in \R^{n \times d_2}$, let $\mathcal{C}_1, \mathcal{C}_2$ be the two independent \CS transforms from $\R^{d_1}, \R^{d_2}$ to $\R^m$, respectively. Denote that
\begin{align}
\BGamma := \iFFT( \FFT(\mathcal{C}_1(\X)) \odot \FFT(\mathcal{C}_2(\Y))).
\end{align}
Given $\varepsilon, \delta \in (0,1)$, $\lambda \geq 0$ and $m \geq \frac{11}{\varepsilon^{2} \delta} \left( \frac{\tr(\X\X^\top \odot \Y\Y^\top)}{\tr(\X\X^\top \odot \Y\Y^\top)/n + \lambda}\right)^2$, then it holds
\begin{align} \label{eq:cs_spectral}
(1-\varepsilon)
\left(\X\X^\top \odot \Y \Y^\top + \lambda \I \right)
\preceq
\BGamma \BGamma^\top + \lambda \I
\preceq 
(1+\varepsilon)
\left(\X\X^\top \odot \Y\Y^\top + \lambda \I \right)
\end{align}
with probability at least $1-\delta$.
\end{restatable}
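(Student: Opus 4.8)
The plan for \cref{lmm:cs_spectral} is to reduce the two-sided spectral inequality \cref{eq:cs_spectral} to a spectral-norm bound on a single ``whitened'' error matrix, dominate that spectral norm by its Frobenius norm in expectation, and close with Markov's inequality combined with a concavity estimate.

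First I would set $\Z := \X \otimes \Y \in \R^{n \times d_1 d_2}$ (with rows the row-wise Kronecker products), $\M := \X\X^\top \odot \Y\Y^\top = \Z\Z^\top$, and $\K := \M + \lambda\I$; we may assume $\lambda > 0$, so that $\K \succ 0$ (the boundary case $\lambda = 0$ follows by working on $\mathrm{range}(\M)$ with the Moore--Penrose inverse). By \cref{eq:tensorsketch} and \cref{eq:inner_product_approximating_property}, $\BGamma$ is the \TS of $\Z$ applied row-wise and $\E[\BGamma\BGamma^\top] = \M$, so writing $\N := \BGamma\BGamma^\top - \M$ the identity $\BGamma\BGamma^\top + \lambda\I = \K + \N$ shows that \cref{eq:cs_spectral} is equivalent to $-\varepsilon\I \preceq \K^{-1/2}\N\K^{-1/2} \preceq \varepsilon\I$, i.e.\ to $\norm{\K^{-1/2}\N\K^{-1/2}}_2 \le \varepsilon$. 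Since $\norm{\cdot}_2 \le \norm{\cdot}_F$, it suffices to show $\norm{\K^{-1/2}\N\K^{-1/2}}_F \le \varepsilon$ with probability at least $1-\delta$.

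The heart of the argument is to bound the second moment of this Frobenius norm. Using cyclicity of the trace and symmetry of $\N$,
\[
\norm{\K^{-1/2}\N\K^{-1/2}}_F^2 = \tr(\K^{-1}\N\K^{-1}\N) = \sum_{a,b,c,d}(\K^{-1})_{ab}(\K^{-1})_{cd}\,\N_{bc}\N_{da},
\]
so after taking expectations the only ingredient needed is the entrywise second moment $\E[\N_{bc}\N_{da}] = \operatorname{Cov}\big(\inner{\mathcal{C}(\z_b),\mathcal{C}(\z_c)},\inner{\mathcal{C}(\z_d),\mathcal{C}(\z_a)}\big)$, where $\z_i$ denotes the $i$-th row of $\Z$. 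These are exactly the covariances controlled by the standard second-moment analysis of \TS with two tensor factors --- the same computation underlying \cref{lmm:ts-entry-diff}; see also \citep{pham2013fast, avron2017faster, ahle2020oblivious} --- and each equals $\tfrac{1}{m}$ times a sum of products of inner products $\inner{\z_\bullet,\z_\bullet} = \M_{\bullet\bullet}$, with the combinatorial multiplicities contributing the constant $2+3^2 = 11$. Contracting these products against $\K^{-1}$ and using $\tr(\K^{-1}\M) = s_\lambda(\M)$, $\tr((\K^{-1}\M)^2) \le s_\lambda(\M)$ and $\norm{\K^{-1}\M}_2 \le 1$, the dominant term is $(\tr(\K^{-1}\M))^2 = s_\lambda(\M)^2$ and all others are $O(s_\lambda(\M))$, giving
\[
\E\,\norm{\K^{-1/2}\N\K^{-1/2}}_F^2 \ \le\ \frac{11}{m}\,s_\lambda(\M)^2 .
\]

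Then Markov's inequality yields $\Pr(\norm{\K^{-1/2}\N\K^{-1/2}}_F > \varepsilon) \le 11\,s_\lambda(\M)^2/(m\varepsilon^2) \le \delta$ as soon as $m \ge 11\,s_\lambda(\M)^2/(\varepsilon^2\delta)$. To match the stated hypothesis I would use that $t \mapsto t/(t+\lambda)$ is concave on $[0,\infty)$, so by Jensen's inequality over the eigenvalues $\mu_1,\dots,\mu_n$ of $\M$, $s_\lambda(\M) = \sum_i \mu_i/(\mu_i+\lambda) \le n\cdot\frac{\tr(\M)/n}{\tr(\M)/n+\lambda} = \frac{\tr(\M)}{\tr(\M)/n+\lambda}$; hence the assumed $m \ge \frac{11}{\varepsilon^2\delta}\big(\frac{\tr(\M)}{\tr(\M)/n+\lambda}\big)^2$ implies $m \ge 11\,s_\lambda(\M)^2/(\varepsilon^2\delta)$ and the proof is complete. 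The main obstacle is the middle step: deriving the \TS second-moment estimate with the explicit constant $11$ (the $2+3^q$ with $q = 2$ tensor factors) requires invoking or re-deriving the hash-collision covariance formula for \TS entries, and then carefully checking that its contraction against $\K^{-1}$ keeps the leading term at $s_\lambda(\M)^2$ while the lower-order pieces (a single power of $s_\lambda(\M)$, and the $-\tfrac{1}{m}$ diagonal corrections) are absorbed. A convenient feature is that the $\K^{-1/2}$ whitening never has to interact with the tensor structure, since it stays outside the sketched quantity --- which is what keeps the reduction clean.
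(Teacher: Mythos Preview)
Your outline is correct and follows the same skeleton as the paper (whiten, pass to Frobenius norm, second-moment bound with constant $11=2+3^2$, Markov, Jensen on $s_\lambda$), but you make the central step harder than it needs to be. The paper does \emph{not} expand $\norm{\K^{-1/2}\N\K^{-1/2}}_F^2$ entrywise and then contract the \TS covariances against $\K^{-1}$. Instead it observes that since the sketch acts on the \emph{columns} of $\Z$, the row-side whitening can be pushed inside: writing $\A := \K^{-1/2}\Z$ one has $\K^{-1/2}\N\K^{-1/2} = \A\S\S^\top\A^\top - \A\A^\top$, and Lemma~\ref{lmm:cs_var_bound} (the $\tfrac{2+3^q}{m}\norm{\A}_F^2\norm{\B}_F^2$ bound of \citet{avron2014subspace}) applied directly with $\A=\B$ gives $\E\norm{\A\S\S^\top\A^\top-\A\A^\top}_F^2 \le \tfrac{11}{m}\norm{\A}_F^4 = \tfrac{11}{m}\,s_\lambda(\M)^2$, since $\norm{\A}_F^2 = \tr\!\big((\M+\lambda\I)^{-1}\M\big)=s_\lambda(\M)$ exactly.

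This buys two things over your route. First, it removes what you yourself flag as ``the main obstacle'': there is no need to compute the individual covariances $\operatorname{Cov}(\N_{bc},\N_{da})$ or to argue that the contraction against $\K^{-1}$ leaves only $s_\lambda^2$ with ``lower-order pieces absorbed'' --- that absorption is precisely where your sketch is hand-wavy, whereas the paper's bound is a clean equality $\norm{\A}_F^4=s_\lambda^2$. Second, your closing remark that the whitening ``stays outside the sketched quantity'' has it backwards: the simplification comes exactly from moving $\K^{-1/2}$ \emph{inside}, which is legitimate because it multiplies on the row index while $\S$ multiplies on the column index. Your entrywise computation would eventually reproduce the same bound, but the paper's one-line reduction via $\A=\K^{-1/2}\Z$ is both shorter and avoids the unverified ``absorption'' step.
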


The proof of \cref{lmm:cs_spectral} is provided in \cref{sec:proof_cs_spectral}.

\ntkspectral*

{\it Proof of \cref{thm:ntk_spectral}.}
Note that the NTK of two-layer ReLU network can be formulated as
\begin{align}
\K_{\NTK} = \A_1 + \A_0 \odot(\X\X^\top)
\end{align}
where $\A_0$ and $\A_1$ are the arc-cosine kernel matrices of order $0$ and $1$ with $\X$, respectively. Let $\BPhi_0$ and $\BPhi_1$ be the random features of $\A_0$ and $\A_1$, respectively, satisfying that $\A_0 = \E \left[\BPhi_0 \BPhi_0^\top\right]$ and $\A_1 = \E\left[\BPhi_1 \BPhi_1^\top\right]$. Based on the property of \CS, one can check that $\BPhi_0 \BPhi_0^\top \odot \X\X^\top = \E[\BGamma \BGamma^\top]$ where we recall that
\begin{align}
\BGamma := \iFFT( \FFT(\mathcal{C}_1(\BPhi_0)) \odot \FFT(\mathcal{C}_2(\X))).
\end{align}
Our proof is a combination of spectral analysis of $\BPhi_0 \BPhi_0^\top, \BPhi_1 \BPhi_1^\top$ and $\BGamma\BGamma^\top$ which are stated in \cref{thm:a0_spectral}, \cref{thm:a1_spectral} and  \cref{lmm:cs_spectral}, respectively.

From \cref{thm:a1_spectral}, if $m_1 \geq \frac{16}{3} \frac{d \norm{\X}_2^2}{\lambda \varepsilon^{2} } \log\left( \frac{48 s_{\lambda}}{\delta} \right)$ then with probability at least $1 - \frac{\delta}{3}$ it holds
\begin{align} \label{eq:a1_spectral}
(1 - \varepsilon) \left(\A_1 + \frac{\lambda}{2} \I\right) 
\preceq
\BPhi_1 \BPhi_1^\top + \frac{\lambda}{2} \I 
\preceq 
(1 + \varepsilon) \left( \A_1 + \frac{\lambda}{2} \I\right).
\end{align}

From \cref{thm:a0_spectral}, if $m_0 \geq 48\frac{n}{\lambda \varepsilon^{2} } \log\left( \frac{48 s_{\lambda}}{\delta} \right)$ then with probability at least $1 - \frac{\delta}{3}$ it holds
\begin{align} \label{eq:a0_spectral}
\left(1 - \frac{\varepsilon}{3}\right) \left(\A_0 + \frac{\lambda}{2} \I\right)
\preceq
\BPhi_0 \BPhi_0^\top + \frac{\lambda}{2} \I 
\preceq 
\left(1 + \frac{\varepsilon}{3}\right) \left(\A_0 + \frac{\lambda}{2} \I\right)
\end{align}
Rearranging \cref{eq:a0_spectral}, we get 
\begin{align} \label{eq:phi0_upperbound}
\BPhi_0 \BPhi_0^\top \preceq  \left(1 + \frac{\varepsilon}{3}\right) \A_0 + \frac{\varepsilon}{6} \lambda \I.
\end{align}

To guarantee spectral approximation of $\BGamma\BGamma^\top$, we will use the result of \cref{lmm:cs_spectral}.
Before applying it, we provide an upper bound of the trace of $\BPhi_0 \BPhi_0^\top \odot \X\X^\top$. 
Consider $\BPhi_0 = \sqrt{\frac{2}{m_0}} \left[ \mathrm{Step}(\Z \w_1), \dots, \mathrm{Step}(\Z \w_{m_0})\right]$ for some $\Z = \left[ \z_1, \dots, \z_n \right] \in \R^{n \times d}$. Then, we have
\begin{align}
\tr\left( \BPhi_0 \BPhi_0^\top \odot \X\X^\top \right)
&= \sum_{j=1}^n \left[\BPhi_0 \BPhi_0^\top \right]_{jj} \cdot \left[\X\X^\top\right]_{jj} \\
&= \sum_{j=1}^n \left(\frac{2}{m_0} \sum_{i=1}^{m_0} 
\mathrm{Step}\left(\inner{\z_j, \w_i} \right)^2  \right) \cdot \norm{\x_j}_2^2\\
&\leq \sum_{j=1}^n 1 \cdot \norm{\x_j}_2^2 = n
\end{align}
where the inequality in third line holds from that $\mathrm{Step}(x) \leq 1$ for any $x \in \R$ and the last equality follows from the assumption that $\norm{\x_j}_2 = 1$ for all $j \in [n]$.

Hence, using \cref{lmm:cs_spectral} with the fact that
$m_{\mathtt{cs}} \geq \frac{297}{\varepsilon^2 \delta} \left(\frac{n}{1+\lambda}\right)^2\geq \frac{297}{\varepsilon^2 \delta} \left( \frac{\tr(\BPhi_0 \BPhi_0^\top \odot \X\X^\top)}{\tr(\BPhi_0 \BPhi_0^\top \odot \X\X^\top)/n + \lambda}\right)^2$, we have
\begin{align} \label{eq:cs_spectral_in_proof}
\left(1 + \frac{\varepsilon}{3} \right) \left( \BPhi_0 \BPhi_0^\top \odot \X\X^\top  + \frac{\lambda}{2} \I\right)
\preceq
\BGamma \BGamma^\top + \frac{\lambda}{2} \I
\preceq
\left(1 + \frac{\varepsilon}{3} \right) \left( \BPhi_0 \BPhi_0^\top \odot \X\X^\top  + \frac{\lambda}{2} \I \right)
\end{align}
with probability at least $1 - \frac{\delta}{3}$. Combining \cref{eq:cs_spectral_in_proof} with \cref{eq:phi0_upperbound}, with probability at least $1 - \frac{2}{3} \delta$, we get that
\begin{align}
\BGamma \BGamma^\top + \frac{\lambda}{2} \I
&\preceq \left(1 + \frac{\varepsilon}{3}\right) \left( \BPhi_0 \BPhi_0^\top \odot \X\X^\top  + \frac{\lambda}{2} I\right) \\
&\preceq \left(1 + \frac{\varepsilon}{3}\right) \left( \left[\left(1 + \frac{\varepsilon}{3}\right) \A_0 + \frac{\varepsilon}{6} \lambda \I\right] \odot \X\X^\top  + \frac{\lambda}{2} \I \right) \\
&= \left(1 + \frac{\varepsilon}{3}\right)  \left(\left(1 + \frac{\varepsilon}{3}\right) \left(\A_0 \odot \X\X^\top\right) + \frac{\varepsilon}{6} \lambda (\I \odot \X\X^\top ) + \frac{\lambda}{2} \I \right) \\
&=  \left(1 + \frac{\varepsilon}{3}\right) \left(1 + \frac{\varepsilon}{3}\right) \left(\A_0 \odot \X\X^\top +  \frac{\lambda}{2} \I  \right) \\
&\preceq  \left(1 + \varepsilon \right) \left(\A_0 \odot \X\X^\top +  \frac{\lambda}{2} \I  \right) \label{eq:cs_a0_spectral}
\end{align}
where the equality in second line follows from the fact that $\A\odot \C \preceq \B \odot \C$ holds if $\A \preceq \B$ for positive semidefinite matrices $\A, \B$ and $\C$~\footnote{It is enough to show that $(\B - \A) \odot \C \succeq 0$. Since $\B-\A, \C \succeq 0$, this holds from the Schur product theorem.}
and the fourth equality is from the assumption $\norm{\x_i}_2 = 1$ for all $i \in [n]$ which leads that $\I \odot (\X\X^\top) = \I$. The last inequality holds since $\varepsilon \in (0, 1/2)$.

Similarly, we can obtain the following lower bound:
\begin{align}
\BGamma \BGamma^\top + \frac{\lambda}{2} \I
\succeq \left(1 - \varepsilon\right) \left(\A_0 \odot \X\X^\top + \frac{\lambda}{2} \I \right) \label{eq:cs_a0_spectral2}.
\end{align}

Combining \cref{eq:a1_spectral}, \cref{eq:cs_a0_spectral} with \cref{eq:cs_a0_spectral2} gives
\begin{align} \label{eq:ntk_spectral}
(1 - \varepsilon) \left( \A_1 + \A_0 \odot \X\X^\top + \lambda \I\right)
\preceq
\BPhi \BPhi^\top + \lambda \I 
\preceq
(1 + \varepsilon) \left( \A_1 + \A_0 \odot \X\X^\top + \lambda \I\right).
\end{align}

Furthermore, by taking a union bound over all events, 
\cref{eq:ntk_spectral} holds with probability at least $1- \delta$.
This completes the proof of \cref{thm:ntk_spectral}. \qed

\section{Proof of Lemmas}

The proofs of \cref{lmm:ts-entry-diff} and \cref{lmm:cs_spectral} are obtained from {Lemma 2} in~\cite{avron2014subspace} that provides an upper bound on variance of \TS transform of order $q \geq 2$. 

\begin{lemma}[Lemma 2 in \cite{avron2014subspace}] \label{lmm:cs_var_bound}
For $q \geq 2$, consider $q$ of 3-wise independent hash functions $h_1, \dots h_q : [d] \rightarrow [m]$ and $q$ of 4-wise independent sign functions $s_1, \dots, s_q: [d] \rightarrow \{+1,-1\}$. Define the hash function $H : [d^q] \rightarrow [m]$ and the sign function $S: [d^q] \rightarrow \{-1, +1\}$ such that
\begin{align*}
H(j) &\equiv h_1(i_1) + h_2(i_2) + \dots + h_q(i_q) \pmod{m}, \\ 
S(j) &= s_1(i_1) \cdot s_2(i_2) \cdot \dots \cdot s_q (i_q).
\end{align*}
where $j \in [d^q]$ and $i_1, \dots, i_q \in [d]$ such that $j = i_q d^{q-1} + \cdots + i_{2} d + i_1$. Denote sketch matrix $\S \in \{-1,0,+1\}^{d^q \times m}$ satisfying that $\S_{j, H(j)} = S(j)$ for $j \in [d^q]$ and other entries are set to zero. For any  $\A, \B \in \R^{n \times d^q}$, it holds
\begin{align}
\E_{\S} \left[
\norm{
\A\S\S^\top \B^\top - \A\B^\top
}_F^2
\right] \leq \frac{(2 + 3^q)}{m} \norm{\A}_F^2 \norm{\B}_F^2.
\end{align}
\end{lemma}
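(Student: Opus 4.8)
\textbf{Proof plan for \cref{lmm:cs_var_bound}.}

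The plan is a textbook second-moment computation on the entries of the error matrix $\bm E := \A\S\S^\top\B^\top - \A\B^\top$. Since index $i\in[d^q]$ is supported in $\S$ only in bin $H(i)$ with value $S(i)$, one has $(\S\S^\top)_{i,j}=S(i)S(j)\,\mathbb{1}[H(i)=H(j)]$; the diagonal terms $i=j$ have $S(i)^2=1$ and indicator $1$, so they reproduce $\A\B^\top$ exactly and $\bm E_{k,\ell}=\sum_{i\ne j}\A_{k,i}\B_{\ell,j}\,S(i)S(j)\,\mathbb{1}[H(i)=H(j)]$. Hence $\E\norm{\bm E}_F^2=\sum_{k,\ell}\E[\bm E_{k,\ell}^2]$, and expanding the square and using independence of the sign functions from the hash functions,
\begin{align*}
\E[\bm E_{k,\ell}^2]=\sum_{\substack{i\ne j,\ i'\ne j'}}\A_{k,i}\B_{\ell,j}\A_{k,i'}\B_{\ell,j'}\;\E_S\big[S(i)S(j)S(i')S(j')\big]\;\Pr\big[H(i)=H(j),\,H(i')=H(j')\big].
\end{align*}
The task reduces to bounding each surviving quadruple's total contribution by $\tfrac1m\norm{\A}_F^2\norm{\B}_F^2$ and showing there are at most $2+3^q$ essentially distinct ``shapes'' of surviving quadruples.

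First I would dispatch the sign factor: writing $S(i)=\prod_{t=1}^q s_t(i_t)$ with the $s_t$ independent across coordinates and each $4$-wise independent gives $\E_S[S(i)S(j)S(i')S(j')]=\prod_{t=1}^q\E[s_t(i_t)s_t(j_t)s_t(i'_t)s_t(j'_t)]$, and each factor is $1$ if every value of $\{i_t,j_t,i'_t,j'_t\}$ occurs an even number of times and $0$ otherwise. So a quadruple contributes only if in every coordinate its four symbols split into two equal pairs; there are only three such matchings per coordinate, hence the contributing quadruples are classified by a pattern $\pi\in\{1,2,3\}^q$ --- at most $3^q$ classes, two of them the fully degenerate ones $\{i'=i,\,j'=j\}$ and $\{i'=j,\,j'=i\}$, the rest ``mixed''. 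Next, for each surviving class I would establish two bounds that hold simultaneously: (a) the collision probability is small, $\Pr[H(i)=H(j),\,H(i')=H(j')]\le 1/m$ --- when $i\ne j$ they disagree in some coordinate $t_0$ and $h_{t_0}$ is uniform and pairwise independent, so $H(i)-H(j)\bmod m$ is uniform, and for mixed patterns the two collision events are driven by hashes on disjoint coordinate blocks, so $3$-wise independence collapses their joint probability either to that of one event ($1/m$) or to a product ($\le 2/m^2\le 1/m$ for $m\ge 2$); and (b) the associated index sum is controlled, $\sum_{k,\ell}\sum_{\text{free blocks}}\A_{k,i}\B_{\ell,j}\A_{k,i'}\B_{\ell,j'}\le\norm{\A}_F^2\norm{\B}_F^2$ --- obtained by reshaping each row of $\A$ and $\B$ into a matrix indexed by the ``shared'' and ``free'' coordinate blocks of the pattern, recognizing the leftover sums as Frobenius inner products $\langle M_k,N_\ell\rangle$ of positive-semidefinite Gram matrices (or as $\norm{\cdot}_F^2$ of such reshapes), and applying Cauchy--Schwarz together with $\langle M_k,N_\ell\rangle\le\Tr(M_k)\Tr(N_\ell)=\norm{\A_{k,:}}_2^2\norm{\B_{\ell,:}}_2^2$. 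Multiplying (a) by (b) bounds each class by $\tfrac1m\norm{\A}_F^2\norm{\B}_F^2$, and summing over the at most $2+3^q$ classes yields $\E\norm{\bm E}_F^2\le\tfrac{2+3^q}{m}\norm{\A}_F^2\norm{\B}_F^2$.

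The main obstacle is precisely the bookkeeping in this middle step: systematically enumerating the mixed patterns, splitting each pattern's $q$ coordinates into ``paired-within-a-row'' and ``paired-across-rows'' blocks, and checking in every case both that $3$-wise (not merely pairwise) hash independence suffices to keep the compound collision probability at $1/m$ and that the leftover summation reshapes into PSD Gram matrices so Cauchy--Schwarz returns exactly $\norm{\A}_F^2\norm{\B}_F^2$ with no stray factor of $m$. The setup, the cancellation of the diagonal, and the final summation are routine, and the $4$-wise independence of the signs is consumed only to annihilate every quadruple that is not a perfect pairing in all coordinates.
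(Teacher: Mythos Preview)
The paper does not prove this lemma: it is quoted verbatim as Lemma~2 of \citet{avron2014subspace} and used as a black box in the proofs of \cref{lmm:ts-entry-diff} and \cref{lmm:cs_spectral}. There is no in-paper argument to compare your plan against; what you have written is effectively a reconstruction of the proof from the cited reference.

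That said, your outline is the standard second-moment computation used there, and the key steps---cancelling the diagonal of $\S\S^\top$, factoring $\E_S[S(i)S(j)S(i')S(j')]$ coordinate-wise via $4$-wise independence of each $s_t$, bounding the hash-collision probability by $1/m$ using $3$-wise independence of the $h_t$, and controlling each surviving index sum by $\norm{\A}_F^2\norm{\B}_F^2$ via Cauchy--Schwarz on reshaped rows---are all correct in spirit. One bookkeeping wrinkle: you identify $3^q$ coordinate-wise pairing patterns but then assert ``at most $2+3^q$ classes'' without reconciling the two numbers. In the original argument the $2$ and the $3^q$ come from two separate upper bounds that are simply added: the two globally matched cases $(i',j')=(i,j)$ and $(i',j')=(j,i)$ are each bounded directly by $\tfrac1m\norm{\A}_F^2\norm{\B}_F^2$, and then a cruder bound over \emph{all} $3^q$ coordinate-wise pairing types (which overcounts those two cases again) contributes the $3^q/m$ term. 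So the constant is $2+3^q$ by deliberate double counting, not because there are $2+3^q$ distinct shapes. Your treatment of the mixed-pattern collision probabilities (``driven by hashes on disjoint coordinate blocks'') is also somewhat breezier than the actual case analysis requires, but the approach is sound.
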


\subsection{Proof of \cref{lmm:ts-entry-diff}} \label{sec:proof-ts-entry-diff}
\cserrorbound*

{\it Proof of \cref{lmm:ts-entry-diff}.} 
By the Markov's inequality, we have
\begin{align}
\Pr \left( \abs{\inner{\BGamma, \BGamma^\prime} - \inner{ \x \otimes \y,  \x^\prime \otimes \y^\prime}} \geq \varepsilon \right) 
\leq \frac1{\varepsilon^2} \E \left[ \abs{\inner{ \x \otimes \y,  \x^\prime \otimes \y^\prime} - \inner{\BGamma, \BGamma^\prime}}^2 \right] 
\leq \frac{11 \norm{\x}_2^2 \norm{\x^\prime}_2^2 \norm{\y}_2^2 \norm{\y^\prime}_2^2}{\varepsilon^2 m_{\mathtt{cs}}} 
\end{align}
where the last inequality follows from \cref{lmm:cs_var_bound} with $q=2$.
This completes the proof of \cref{lmm:ts-entry-diff}.
\qed

\subsection{Proof of \cref{lmm:cs_spectral}} \label{sec:proof_cs_spectral}

\csspectral*

{\it Proof of \cref{lmm:cs_spectral}.}
%
%
Let $s_1:[d_1] \rightarrow \{-1,+1\}$ and $h_1:[d_1] \rightarrow [m]$ be the random sign and hash function of $\mathcal{C}_1$, respectively. Similarly, denote $s_2$ and $h_2$ by that of $\mathcal{C}_2$, respectively. Then, $\BGamma$ is the output of Count Sketch $\mathcal{C}:\R^{d_1d_2} \rightarrow \R^m$ applying to $\X \otimes \Y$ whose sign and hash functions are defined as
\begin{align}
s(i,j) &= s_1(i) \cdot s_2(j) \\
h(i,j) &\equiv h_1(i) + h_2(j)  \pmod{m}
\end{align}
for $i \in [d_1], j \in [d_2]$. Here, index $(i,j)$ can be considered as some $k \in [d_1 d_2]$ by transforming $i = \lfloor k / d_2 \rfloor$ and $j \equiv k \pmod{d_2}$. 

Let $\S \in \{-1,0,+1\}^{d_1 d_2 \times m}$ be the sketch matrix of $\mathcal{C}$ and we write $\Z:=\X \otimes \Y$ for notational simplicity. As shown in \cite{avron2014subspace}, it is easy to check that $\BGamma = \Z \S$ and we have
\begin{align}
\E[\BGamma \BGamma^\top ] 
&= \E[(\X \otimes \Y) \S \S^\top (\X \otimes \Y)^\top] \\
&= (\X \otimes \Y) \E[\S \S^\top] (\X \otimes \Y)^\top \\
&= (\X \otimes \Y) (\X \otimes \Y)^\top \\
&= \X \X^\top \odot \Y \Y^\top = \Z\Z^\top.
\end{align}
Rearranging \cref{eq:cs_spectral}, we have
\begin{align} \label{eq:cs_spectral2}
-\varepsilon \left( \Z\Z^\top + \lambda \I \right) \preceq
\Z\S\S^\top \Z^\top - \Z\Z^\top \preceq \varepsilon \left( \Z\Z^\top + \lambda \I \right).
\end{align}
By multiplying $(\Z\Z^\top + \lambda \I)^{-{1}/{2}}$ by both left and right sides in \eqref{eq:cs_spectral2}, it is enough to show that
\begin{align*}
\norm{(\Z\Z^\top + \lambda \I)^{-{1}/{2}} \Z \S \S^\top \Z^\top (\Z\Z^\top + \lambda \I)^{-{1}/{2}}
- (\Z\Z^\top + \lambda \I)^{-{1}/{2}} \Z\Z^\top (\Z\Z^\top + \lambda \I)^{-{1}/{2}}
}_2 \leq \varepsilon
\end{align*}
By denoting $\A := (\Z\Z^\top + \lambda \I)^{-{1}/{2}} \Z$, it is equivalent to prove that 
\begin{align} \label{eq:cs_spectral_error}
\norm{\A\S\S^\top \A^\top - \A \A^\top}_2 \leq \varepsilon.
\end{align}
By Markov's inequality, we have
\begin{align*}
\Pr\left[\norm{\A\S\S^\top \A^\top - \A \A^\top}_2 \geq \varepsilon\right] 
&\leq 
\Pr\left[\norm{\A\S\S^\top \A^\top - \A \A^\top}_F \geq \varepsilon\right]  \\
&\leq
\varepsilon^{-2} \E\left[\norm{\A\S \S^\top \A^\top - \A \A^\top}_F^2\right] \\
&\leq 
\frac{11}{\varepsilon^{2} m} \norm{\A}_F^4  \\
&= 
\frac{11}{\varepsilon^{2} m} \left[\tr(\A^\top \A)\right]^2 \\
&= 
\frac{11}{\varepsilon^{2} m} \left[\tr\left((\Z\Z^\top + \I)^{-1} \Z\Z^\top\right)\right]^2 \\
&\leq 
\frac{11}{\varepsilon^{2} m} \left( \frac{\tr(\X\X^\top \odot \Y\Y^\top)}{\tr(\X\X^\top \odot \Y\Y^\top)/n + \lambda}\right)^2
\end{align*}
where the inequality in third line holds from \cref{lmm:cs_var_bound} with $q=2$ and the last inequality follows from \cref{lmm:sdim_upperbound}. Taking $m \geq \frac{11}{\varepsilon^2 \delta} \left( \frac{\tr(\X\X^\top \odot \Y\Y^\top)}{\tr(\X\X^\top \odot \Y\Y^\top)/n + \lambda} \right)^2$, \cref{eq:cs_spectral_error} holds with probability at least $1-\delta$. 
This completes the proof of \cref{lmm:cs_spectral}. \qed



\begin{lemma} \label{lmm:sdim_upperbound}
Let $\A, \B$ be positive semidefinite matrices and let $s_{\lambda}$ be the statistical dimension of $\A \odot \B$ for any $\lambda>0$, i.e., $s_\lambda := \tr\left({(\A \odot \B)(\A \odot \B + \lambda \I)^{-1}}\right)$. Then, it holds that $s_{\lambda} \leq \frac{\tr (\A \odot \B)}{\tr(\A \odot \B) / n + \lambda}$.
\end{lemma}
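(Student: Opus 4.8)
The plan is to reduce the matrix statement to a scalar concavity inequality on the eigenvalues of $\M := \A \odot \B$. First I would observe that $\M$ is positive semidefinite (by the Schur product theorem), so it admits an orthogonal diagonalization $\M = \U \diag(\sigma_1, \dots, \sigma_n) \U^\top$ with all $\sigma_i \geq 0$; consequently $\M(\M + \lambda \I)^{-1}$ is diagonalized by the same $\U$ with eigenvalues $\sigma_i/(\sigma_i + \lambda)$, and taking traces yields
\begin{align*}
s_\lambda = \tr\!\left( \M (\M + \lambda \I)^{-1} \right) = \sum_{i=1}^n \frac{\sigma_i}{\sigma_i + \lambda}.
\end{align*}

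The second step is a single application of Jensen's inequality. The scalar map $g(x) := x/(x+\lambda) = 1 - \lambda/(x+\lambda)$ has second derivative $g''(x) = -2\lambda/(x+\lambda)^3 < 0$ on $[0,\infty)$, hence is concave there. Applying Jensen to the uniform average of $\sigma_1, \dots, \sigma_n$ and then multiplying by $n$,
\begin{align*}
s_\lambda = \sum_{i=1}^n g(\sigma_i) \leq n\, g\!\left( \frac{1}{n} \sum_{i=1}^n \sigma_i \right) = n\, g\!\left( \frac{\tr(\M)}{n} \right) = \frac{\tr(\M)}{\tr(\M)/n + \lambda},
\end{align*}
which, upon substituting $\M = \A \odot \B$, is exactly the asserted bound. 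If $\tr(\M) = 0$ then $\M = 0$ and both sides vanish, so that edge case is harmless.

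I do not anticipate a genuine obstacle here: the only ingredients are that $\A \odot \B \succeq 0$, that the trace equals the sum of eigenvalues, and the elementary concavity of $g$. If one prefers to avoid invoking Jensen, the tangent-line bound $g(\sigma_i) \leq g(\bar\sigma) + g'(\bar\sigma)(\sigma_i - \bar\sigma)$ at $\bar\sigma := \tr(\M)/n$, summed over $i$, gives the same conclusion since the linear correction telescopes to zero. Either route yields the lemma in a few lines, and it feeds directly into the statistical-dimension estimate used in the proof of \cref{lmm:cs_spectral}.
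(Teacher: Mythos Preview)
Your proof is correct and follows essentially the same route as the paper: invoke the Schur product theorem to get $\A\odot\B\succeq 0$, write $s_\lambda=\sum_i \sigma_i/(\sigma_i+\lambda)$ in terms of the eigenvalues, and apply Jensen's inequality to the concave map $x\mapsto x/(x+\lambda)$. Your extra remarks (the explicit second-derivative check, the tangent-line alternative, and the $\tr(\M)=0$ edge case) are fine embellishments but not needed beyond what the paper does.
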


{\it Proof of \cref{lmm:sdim_upperbound}.}
By the Schur product theorem, $\A \odot \B$ is a positive semidefinite matrix. Let $\lambda_1 \geq \lambda_2 \geq \dots \geq \lambda_n \geq 0$ be the eigenvalues of $\A \odot \B$. By the definition of statistical dimension, it holds that
\begin{align}
s_\lambda &= \tr \left( (\A\odot \B)\left( \A\odot \B + \lambda \I\right)^{-1}\right)= \sum_{i=1}^{n} \frac{\lambda_i}{\lambda_i + \lambda} \\
&\leq n \frac{ \left(\sum_{i} \lambda_i\right) / n}{\left(\sum_{i} \lambda_i\right) / n + \lambda}
= \frac{\sum_i \lambda_i}{(\sum_i \lambda_i)/n + \lambda} 
= \frac{\tr(\A\odot \B)}{\tr(\A\odot \B)/n + \lambda}
\end{align}
where the inequality holds from the Jensen's inequality. This completes the proof of \cref{lmm:sdim_upperbound}.
\qed

\section{Mathematical Details on Deep Active Learning}
\label{sec:math-active}

In this section, we give additional details on how NTK random features can be used to accelerate the greedy selection algorithm in~\citep{shoham2020experimental}. In actuality, the improvement is not specific to NTK features, but works for every kernel for which have a low-rank factorization. It applies to NTK random features by virtue of the low-rank factorization the approximate kernel induces. 

For a matrix $\A$ and index sets ${\cal S}$ and ${\cal T}$, let $\A_{{\cal S}, {\cal T}}$ denote the matrix obtained by restricting to the rows whose
index is in ${\cal S}$ and the columns whose index is in ${\cal T}$.  Using $:$ as the index set denotes the entire relevant index set. Consider kernel ridge regression, and assume the kernel matrix is $\K$. One important variant of the criteria developed in \citep{shoham2020experimental} is the minimization of 
$$
J({\cal S}) = \tr(-\K_{:, {\cal S}} (\K_{{\cal S}, {\cal S}}+\lambda \I_{|{\cal S}|})^{-1} \K^\top_{:, {\cal S}})
$$
In order to perform greedy minimization of $J({\cal S})$ we need to be able to evaluate $J({\cal S})$ quickly for a given ${\cal S}$.

Assume that $\K = \BPhi \BPhi^\top$ where $\BPhi$ has $m$ columns. We now show how after preprocessing $J({\cal S})$ can be computed in $\bigo(|{\cal S}|^2 m)$ time as long as $|{\cal S}| \leq m$. First, notice that
\begin{eqnarray*}
J({\cal S}) & = & \tr(-\K_{:, {\cal S}} (\K_{{\cal S}, {\cal S}}+\lambda \I_{|{\cal S}|})^{-1} \K^\top_{:, {\cal S}})\\
            & = & \tr(-(\K_{{\cal S}, {\cal S}}+\lambda \I_{|{\cal S}|})^{-1} \K^\top_{:, {\cal S}}\K_{:, {\cal S}}) \\
            & = & \tr(-\underbrace{(\K_{{\cal S}, {\cal S}}+\lambda \I_{|{\cal S}|})^{-1} \BPhi_{{\cal S}, :}}_{\A({\cal S})}\cdot \underbrace{\BPhi^\top \BPhi \BPhi^\top_{{\cal S}, :}}_{\B({\cal S})})
\end{eqnarray*}
Now, notice that $\A({\cal S})$ can be computed in $\bigo(|{\cal S}|^2 m)$ time if we assume that $|{\cal S}| \leq m$.  As for $\B({\cal S})$, this matrix consists exactly of the columns in ${\cal S}$ of $\BPhi^\top \BPhi \BPhi^\top$. To take advantage of that we precompute $\BPhi^\top \BPhi \BPhi^\top$ in $\bigo(nm^2)$. Finally, note that we are only interested in the trace of $\A({\cal S}) \cdot \B({\cal S})$. There is no need to compute the entire product; we can compute only the diagonal elements. We see that after the $\bigo(nm^2)$ preprocessing, we can compute $J({\cal S})$ in $\bigo(|{\cal S}|^2 m)$ time.

In order to greedily minimize $J({\cal S})$, we start with ${\cal S}=\emptyset$, and and add at each iteration the index that will minimize $J({\cal S})$. To do so, we scan the entire index set, evaluating $J({\cal T})$ for each candidate ${\cal T}$ that consists of the current ${\cal S}$ and the addition index. Since there are $n$ data-points, the scan takes $\bigo(n |{\cal S}|^2 m)$ which is the cost per iteration. Including preprocessing time, the cost of finding a design of size $k$ is $\bigo(nm(k^3 + m))$.

\section{Additional Experiments on Image Classification}

\subsection{Classification on Image Datasets} \label{sec:fewshot}


We additionally conduct experiments on image classification using $8$ fine-grained datasets: $\mathtt{CIFAR}$-10/100~\cite{krizhevsky2009learning}, \VOC~\cite{everingham2010pascal}, \CALTECH~\cite{fei2004learning}, \CUB~\cite{WelinderEtal2010}, \DOG~\cite{khosla2011novel}, \FLOWER~\cite{nilsback2008automated}, \FOOD{}~\cite{bossard2014food}.
In particular, we follow the transfer learning mechanism~\cite{goyal2019scaling} where we extract image features from the penultimate layer of the pretrained ResNet18~\cite{he2016deep} with dimension $d=512$.  These features are then leveraged as inputs to be transformed to random features. In particular, we follow low-shot setting used in~\cite{arora2019harnessing}; we randomly choose 5 image data from each class of training set, and use the whole test set for evaluation. We repeat sampling training images $50$ times for \CIFARTEN~and \VOC, $10$ times for other datasets. This is because these datasets have relatively small classes, i.e., 10 and 20, respectively. 

We run \cref{alg:ntk_random_features} with and without Gibbs sampling (GS) (i.e., \cref{alg:gibbs}). The number of Gibbs iteration is set to $T=1$. The output dimension $m$ is fixed to $4{,}000$ and {\sc CountSketch} dimension is considered $m_\mathtt{cs}$ as a hyperparameter. We set $m_0 = m_1=m-m_\mathtt{cs}$ and choose $m_\mathtt{cs} \in \{ 0, \frac{m}{10}, \dots, \frac{9m}{10},m\}$ for the best validation accuracy.  We normalize the output of \cref{alg:ntk_random_features} so that the corresponding features lie in $\mathbb{S}^{d-1}$.  We also benchmark the Random Fourier Features (RFF) with the same dimension $m=4{,}000$. Once features are generated, we train a linear classifier with SGD optimizer for $500$ epochs where inputs can be the pretrained features or random features. We perform grid search for finding the best learning rate in $\{0.01, 0.02, 0.1, 0.2\}$ and momentum in $\{0, 0.9, 0.99\}$ and report the best test accuracy among 12 different setups. We also execute various classifiers including AdaBoost, random forest, $k$-nearest neighbors and support vector classifier (SVC). For methods running with SVC, we search the cost value $C$ in $\{2^{-19},\dots, 2^{-3}, 10^{-7},  \dots, 10^3\}$ and choose the best one that achieves the globally maximum accuracy. Hence, a single cost value is globally used for all iterations. The number of network depth for NTK is chosen by $L=2$. For RBF kernel (i.e., $K(\x, \y) = \exp(-\gamma \norm{\x- \y}_2^2)$), we choose the best $\gamma$ in $\{ \frac{1}{d}, \frac{1}{d \sigma}\}$ that achieves the globally maximum accuracy where $\sigma$ is the variance of training data.  For $k$-nearest neighbor classifier, we search the best $k$ in $\{1,2,\dots, \min(20, n/2)\}$ where $n$ is the number of training data. This allows the number of training instance per class to be roughly larger than $2$.  For AdaBoost and Random Forest, we search the number of ensembles in $\{50, 100, 200, 500\}$.


In \cref{tab:low-data-transfer-learning}, the average test accuracy with 95\% confidence interval is reported. Observe that NTK and the corresponding random features show better performance than other competitors for all datasets. These observations match with the previous result~\cite{arora2019harnessing} that the NTK can outperform on small-scale datasets. We additionally verify that NTK random features can perform similar or even better than NTK for most datasets, especially with Gibbs sampling. Such performance gaps are also observed between the Random Fourier Features and RBF kernel. This is likely due to the fact that random features has an implicit regularization effect which can lead to better generalization ability.


\addtolength{\tabcolsep}{-4pt}   
\begin{table*}[t]
\caption{Results of average test accuracy for image classification 
using features from the pretrained ResNet-18. We measure the 95\% confidence interval across $50$ iterations for \CIFARTEN~and \VOC,~ and $10$ iterations for the rest. {\bf Bold entries} indicate the best mean accuracy for each dataset. 
} \label{tab:low-data-transfer-learning}
\vspace{-0.05in}
\centering
\scalebox{0.7}{
\begin{tabular}{lcccccccc}
\toprule
Method               & \CIFARTEN & \CIFARHUN & \VOC & \CALTECH & \CUB            & \DOG      & \FLOWER           & \FOOD \\
\midrule
Linear Classifier & 59.97 $\pm$ 0.61 & 37.61 $\pm$ 0.42 & 59.91 $\pm$ 0.75 & 82.41 $\pm$ 0.39 & 37.92 $\pm$ 0.46 & 66.09 $\pm$ 0.51 & 73.56 $\pm$ 0.30 & 29.04 $\pm$ 0.33\\
$k$-Nearest Neighbors            & 47.64 $\pm$ 0.77 & 26.09 $\pm$ 0.50 & 50.32 $\pm$ 0.89 & 73.37 $\pm$ 0.46 & 24.78 $\pm$ 0.48 & 54.33 $\pm$ 0.59 & 56.36 $\pm$ 0.49 & 18.59 $\pm$ 0.43 \\
AdaBoost          & 32.09 $\pm$ 0.90 &  9.56 $\pm$ 0.57 & 31.56 $\pm$ 0.99 & 26.41 $\pm$ 2.34 & 7.01 $\pm$ 0.87  & 22.72 $\pm$ 1.95 & 20.87 $\pm$ 1.81 & 6.20 $\pm$ 0.45\\
Random Forest     & 56.79 $\pm$ 0.63 & 30.09 $\pm$ 0.43 & 56.16 $\pm$ 0.75 & 75.70 $\pm$ 0.74 & 29.71 $\pm$ 0.42 & 61.31 $\pm$ 0.45 & 64.49 $\pm$ 0.37 & 22.47 $\pm$ 0.39 \\
Linear SVM        & 58.59 $\pm$ 0.63 & 35.74 $\pm$ 0.48 & 60.63 $\pm$ 0.79 & 81.39 $\pm$ 0.55 & 36.68 $\pm$ 0.33 & 66.32 $\pm$ 0.49 & 70.94 $\pm$ 0.25 & 28.44 $\pm$ 0.37 \\
RBF Kernel SVM    & 59.16 $\pm$ 0.63 & 36.42 $\pm$ 0.54 & 60.79 $\pm$ 0.76 & 82.14 $\pm$ 0.45 & 36.67 $\pm$ 0.44 & 66.49 $\pm$ 0.57 & 71.41 $\pm$ 0.14 & 29.03 $\pm$ 0.34 \\      
Random Fourier Features & 59.63 $\pm$ 0.68 & 37.74 $\pm$ 0.50 & 60.95 $\pm$ 0.73 & 82.17 $\pm$ 0.51 & 37.50 $\pm$ 0.48 & 67.38 $\pm$ 0.43 & 72.83 $\pm$ 0.20 & 30.02 $\pm$ 0.36\\
\midrule
NTK SVM           & 60.48 $\pm$ 0.60 & 37.53 $\pm$ 0.55 & 61.19 $\pm$ 0.70 &  {\bf82.83 $\pm$ 0.41} & 37.95 $\pm$ 0.40 & 67.72 $\pm$ 0.49 & 72.32 $\pm$ 0.23 & 29.63 $\pm$ 0.37 \\
\rowcolor{Gray}{NTK Random Features} & 60.63 $\pm$ 0.62 & {\bf38.53 $\pm$ 0.50} & 61.44 $\pm$ 0.67 & 82.65 $\pm$ 0.53 & 38.11 $\pm$ 0.56 &  68.06 $\pm$ 0.54 & 73.62 $\pm$ 0.27 &  30.49 $\pm$ 0.34\\
\rowcolor{Gray}{NTK Random Features with GS} & {\bf60.66 $\pm$ 0.60} & 38.49 $\pm$ 0.51 & {\bf61.48 $\pm$ 0.68} & 82.67 $\pm$ 0.51 & {\bf 38.22 $\pm$ 0.51} &  {\bf68.07 $\pm$ 0.55} & {\bf 73.67 $\pm$ 0.28} & {\bf30.50 $\pm$ 0.32} \\
\bottomrule
\end{tabular}
}
\vspace{-0.1in}
\end{table*}
\addtolength{\tabcolsep}{4pt}

We also investigate the effect of feature dimension $m$ to image classification performance. In \cref{fig:feat_dim_versus_accuracy}, we plot the test accuracy of the proposed NTK random features with Gibbs sampling (blue, triangle), Random Fourier Features (red, cross) and features from the pretrained ResNet-18 (green, circle) when feature dimension $m$ changes from $500$ to $8{,}000$. Note that the pretrained features has a fixed dimension $d=512$. The hyperparameters are chosen by the same approach as described above.
We observe that a larger $m$ can lead to higher test accuracy for both ours and RFF. It suffices to set $m=4{,}000$ for the NTK features to achieve higher accuracy than the pretrained features. 
We also verify that the proposed NTK random features shows better performance than RFF for the same dimension $m$. This justifies that ours is more effective for fine-grained image classifications.

\begin{figure}[t]
\begin{center}
\begin{subfigure}{0.32\textwidth}
\centering
\includegraphics[width=\textwidth]{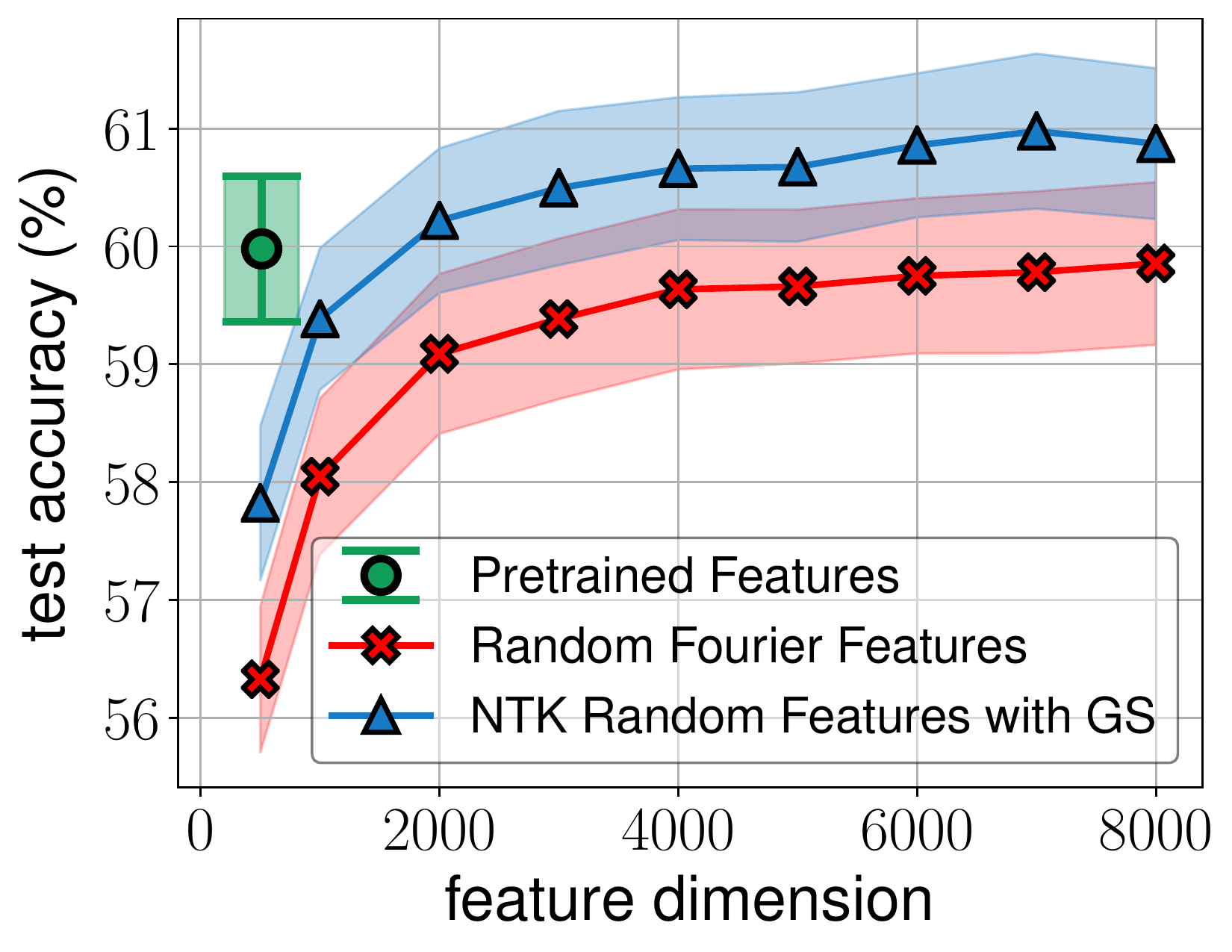}
\caption{\CIFARTEN}
\end{subfigure}
\begin{subfigure}{0.32\textwidth}
\centering
\includegraphics[width=\textwidth]{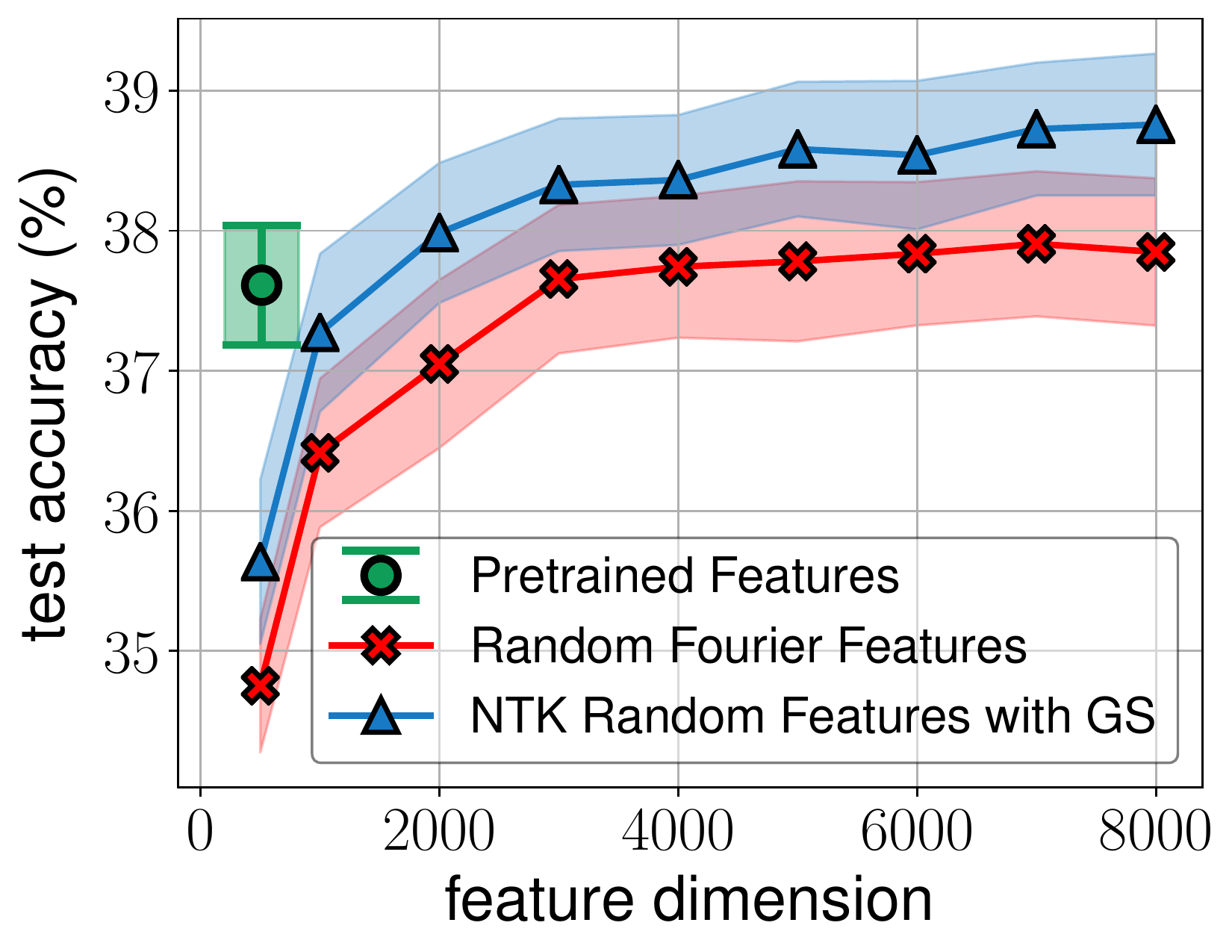}
\caption{\CIFARHUN}
\end{subfigure}
\begin{subfigure}{0.32\textwidth}
\centering
\includegraphics[width=\textwidth]{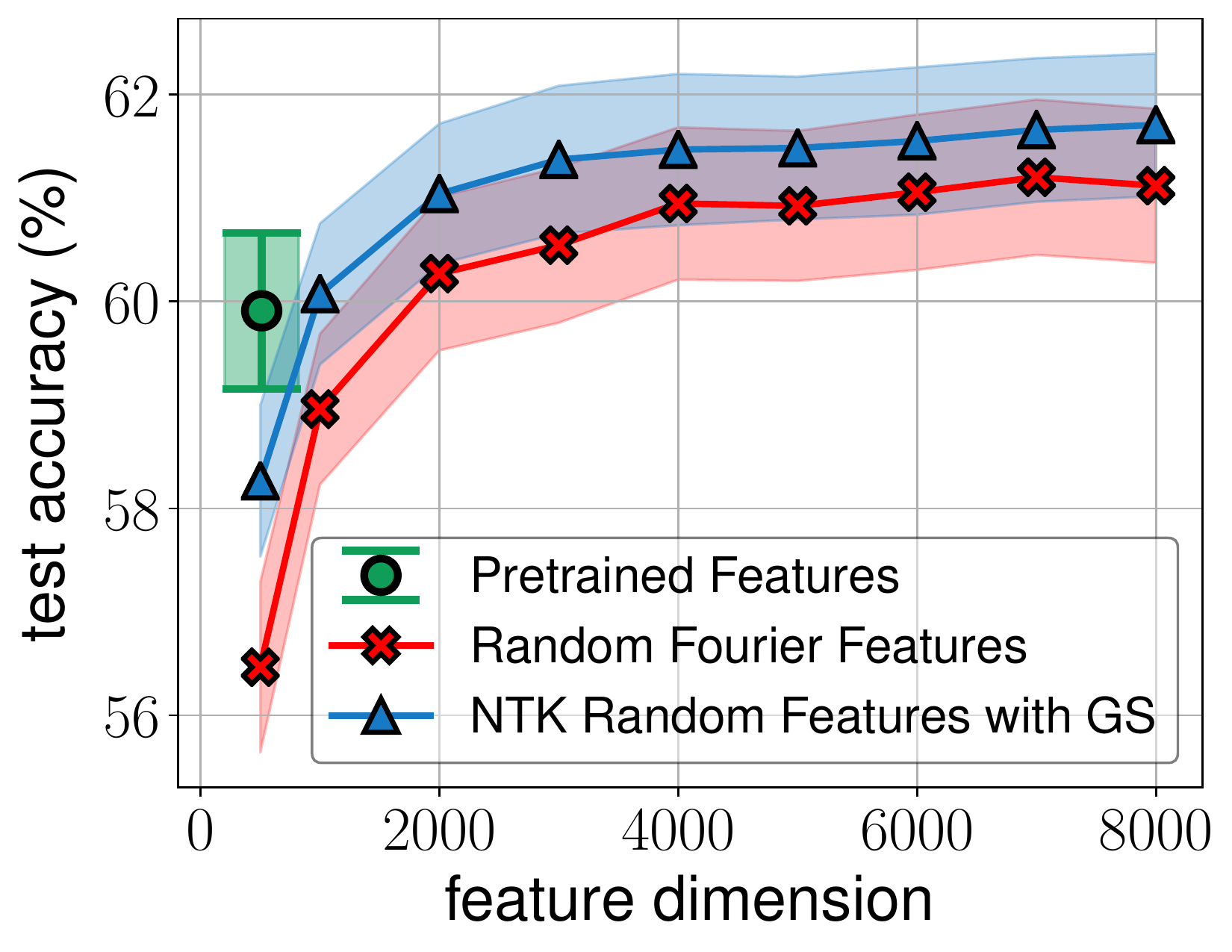}
\caption{\VOC}
\end{subfigure}
\begin{subfigure}{0.32\textwidth}
\centering
\includegraphics[width=\textwidth]{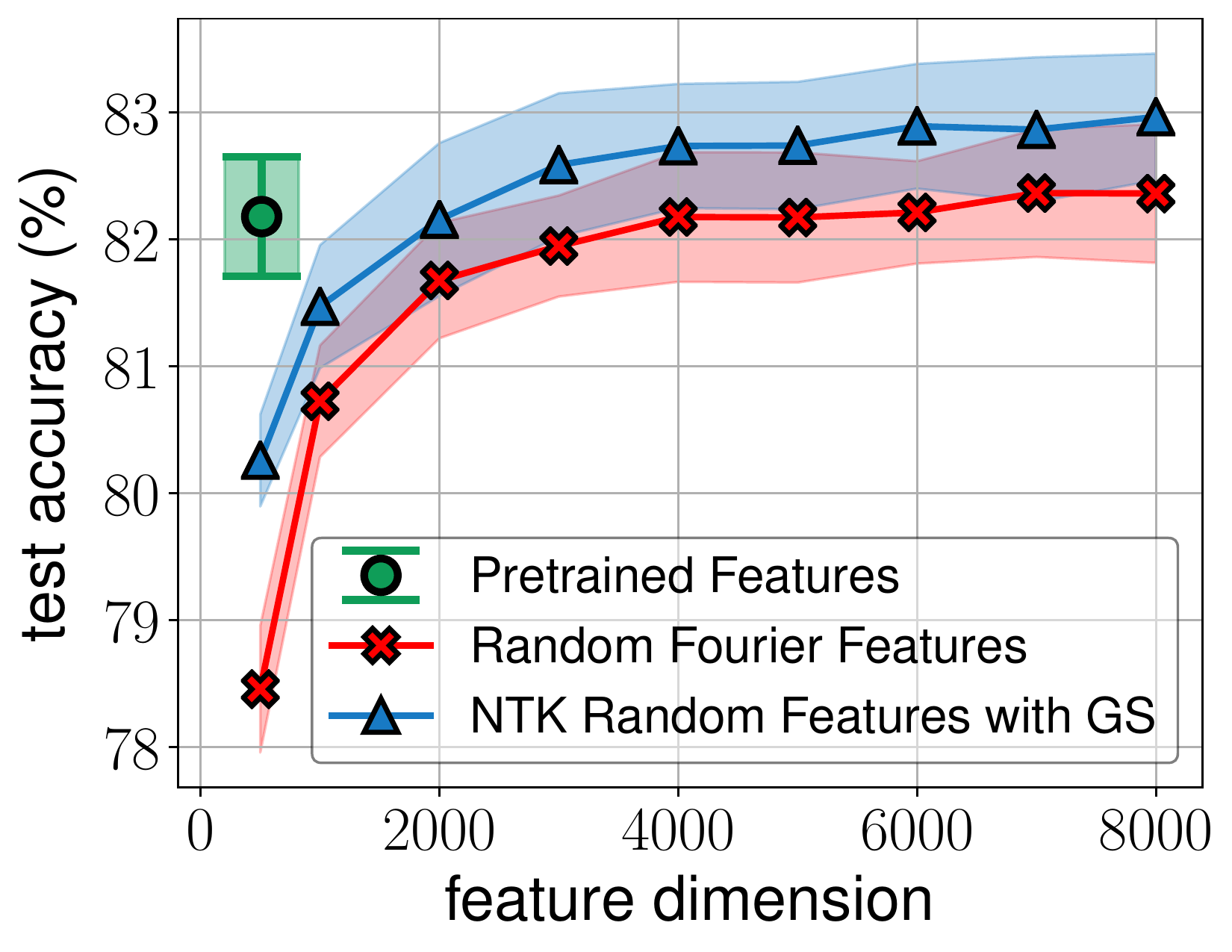}
\caption{\CALTECH}
\end{subfigure}
\begin{subfigure}{0.32\textwidth}
\centering
\includegraphics[width=\textwidth]{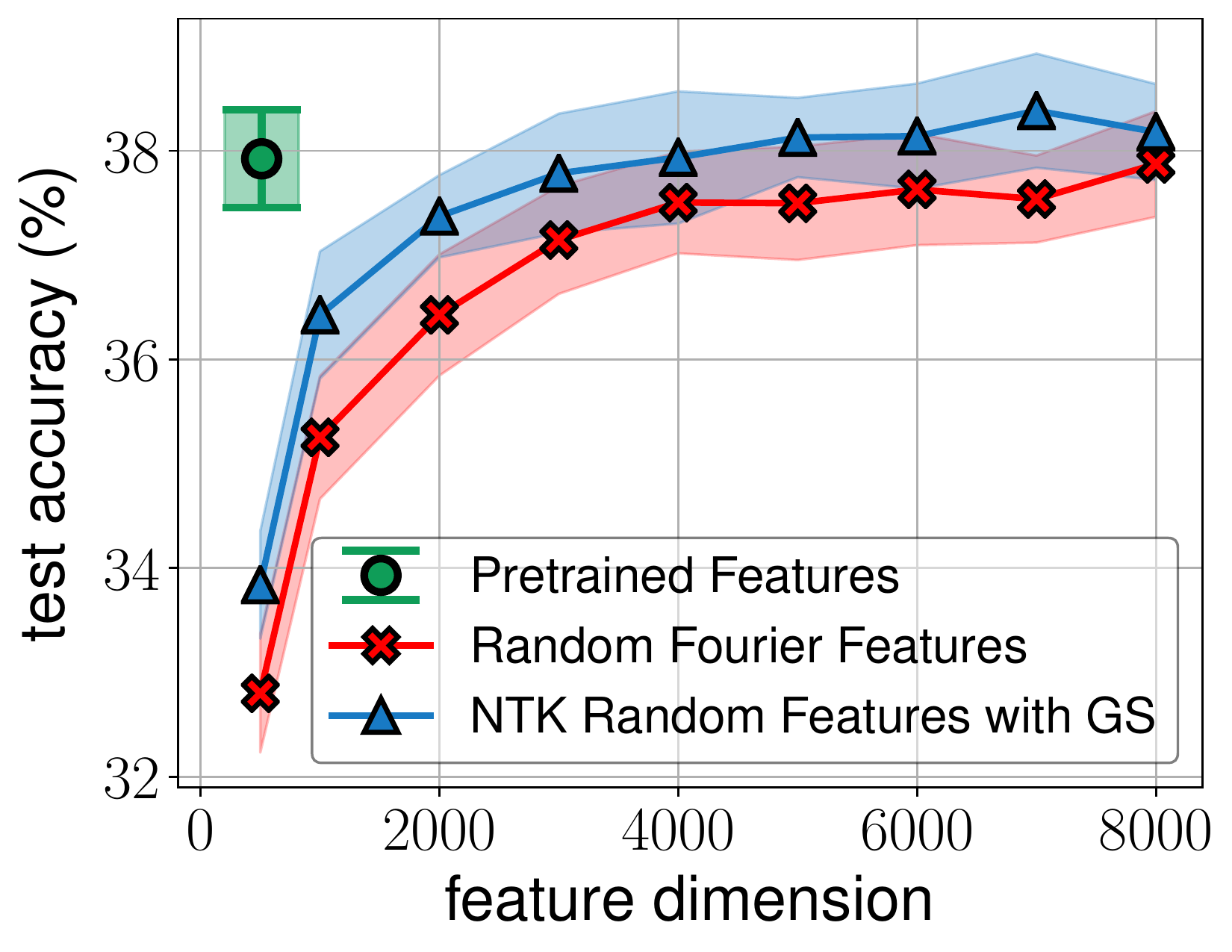}
\caption{\CUB}
\end{subfigure}
\begin{subfigure}{0.32\textwidth}
\centering
\includegraphics[width=\textwidth]{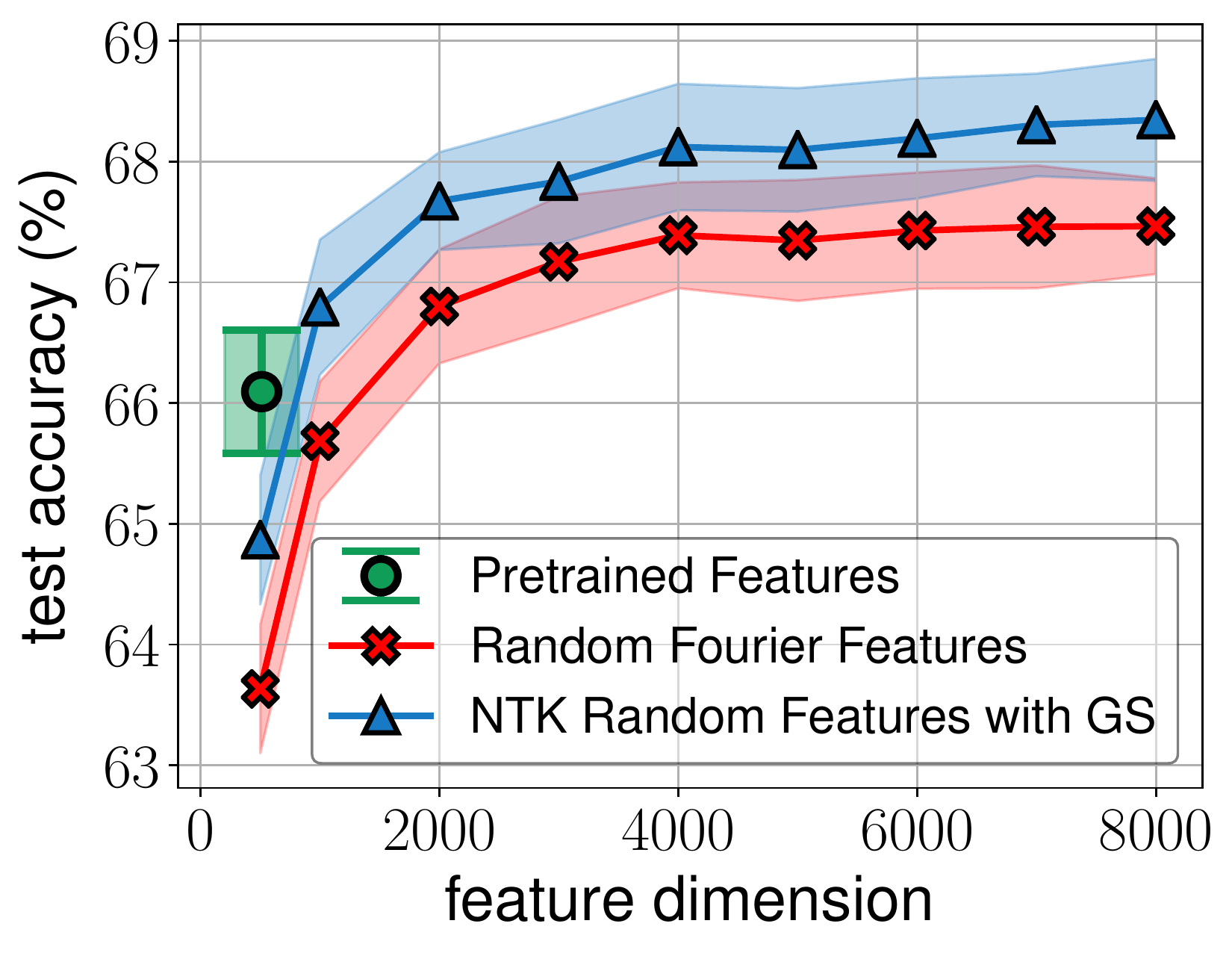}
\caption{\DOG}
\end{subfigure}
\begin{subfigure}{0.32\textwidth}
\centering
\includegraphics[width=\textwidth]{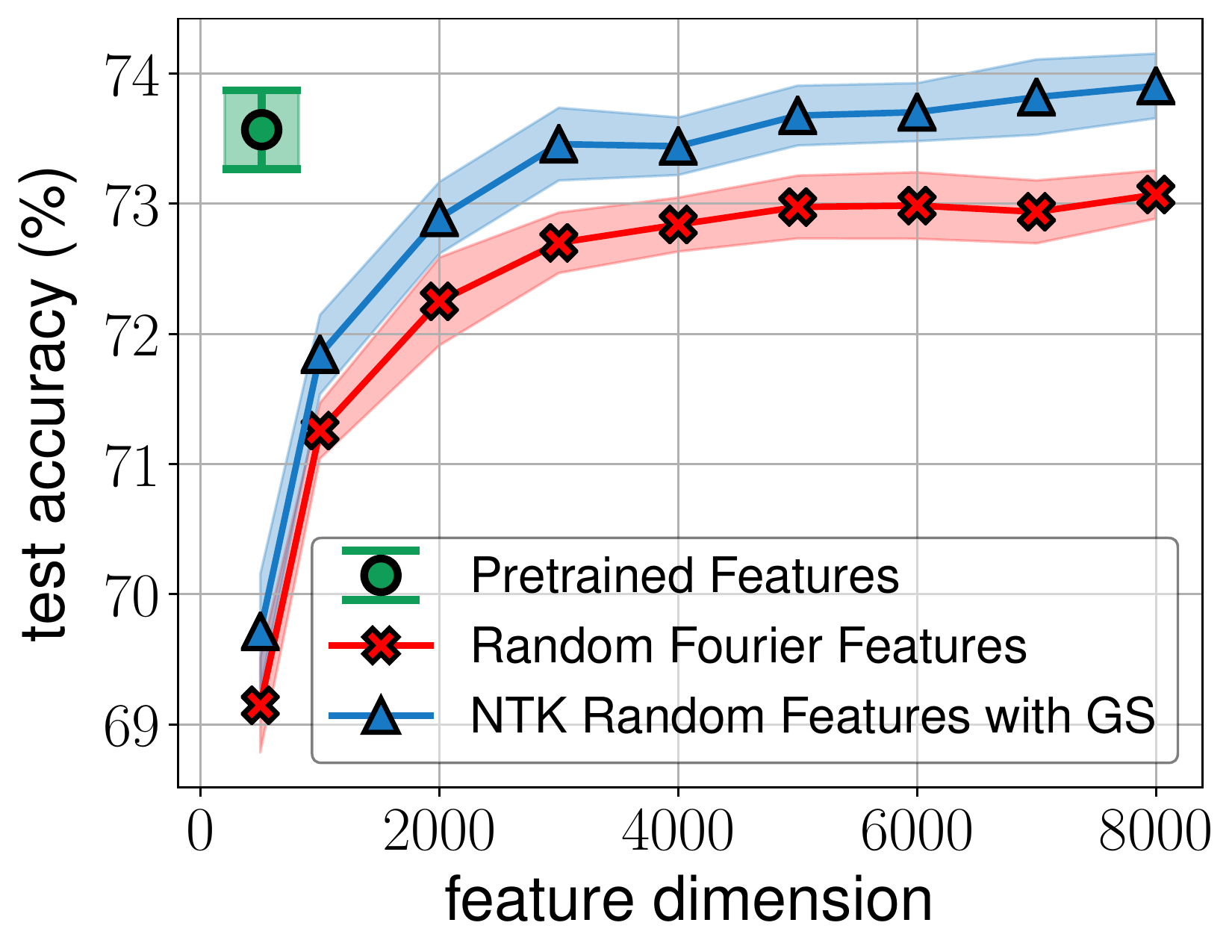}
\caption{\FLOWER}
\end{subfigure}
\begin{subfigure}{0.32\textwidth}
\centering
\includegraphics[width=\textwidth]{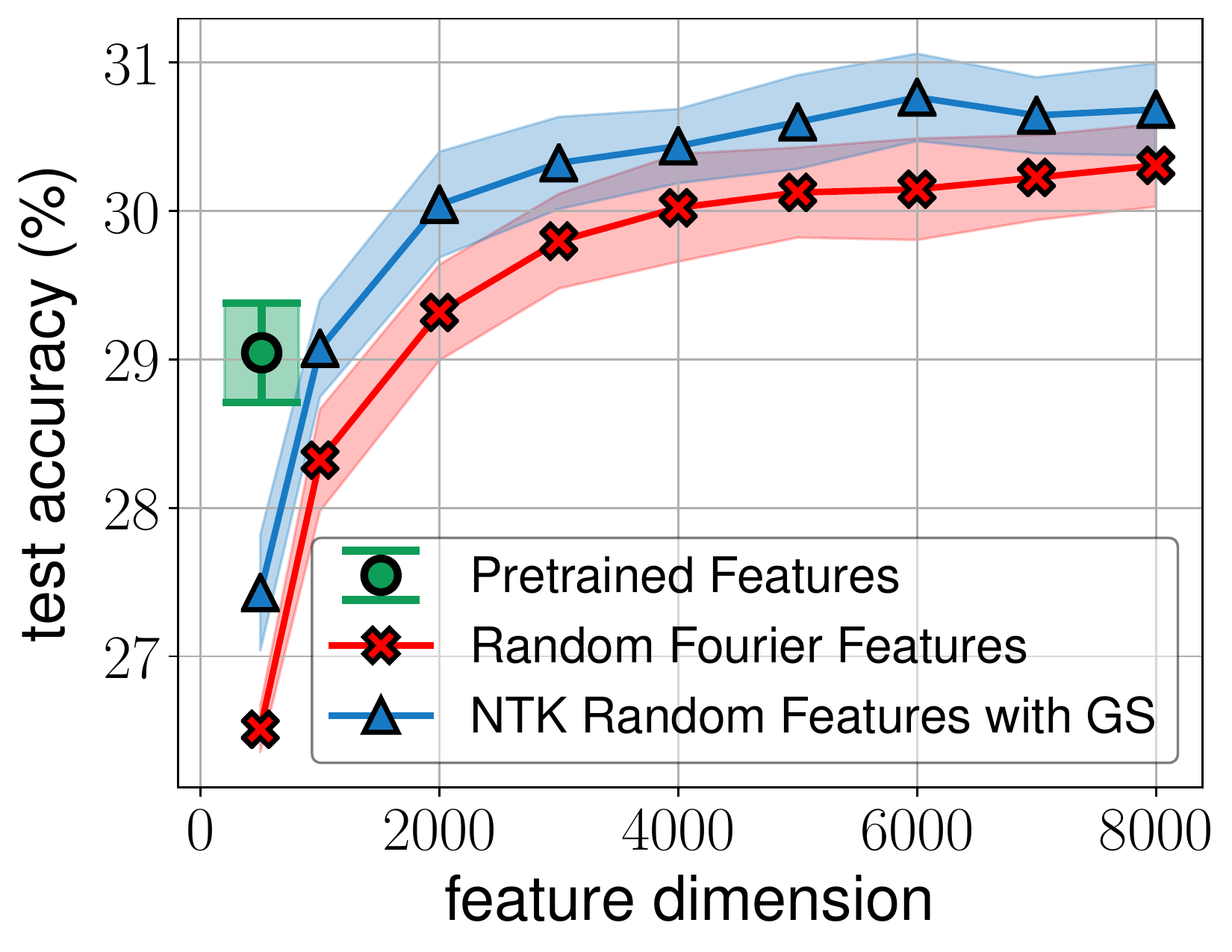}
\caption{\FOOD}
\end{subfigure}
\vspace{-0.05in}
\caption{Test accuracy of the proposed NTK random features with Gibbs sampling and Random Fourier Features when the number of feature dimension $m$ changes from $500$ to $8{,}000$.} \label{fig:feat_dim_versus_accuracy}
\end{center}
\end{figure}

\end{document}